\documentclass[11pt]{article}
\usepackage{amsthm,amsmath} 
\usepackage[colorlinks,linkcolor=red,anchorcolor=blue,citecolor=blue]{hyperref} 
\usepackage[capitalize]{cleveref} 
\usepackage{smile}
\usepackage{extarrows}
\usepackage[OT1]{fontenc}
\usepackage{fullpage}
\usepackage[protrusion=false,expansion=true]{microtype}
\usepackage{graphicx}
\usepackage{comment}
\usepackage{stmaryrd}
\usepackage[dvipsnames]{xcolor}

\usepackage{algorithm}
\usepackage{algorithmic}

\RequirePackage{times}
\RequirePackage{natbib}
 
\definecolor{mypink1}{rgb}{0.858, 0.188, 0.478}
\definecolor{mypink2}{RGB}{219, 48, 122}
\definecolor{mypink3}{cmyk}{0, 0.7808, 0.4429, 0.1412}
\definecolor{mygray}{gray}{0.6}



\def\mw#1{\textcolor{red}{mw:#1}}


\usepackage[textsize=tiny]{todonotes}

\def\red#1{\textcolor{red}{#1}}

\newcommand{\normmm}[1]{{\left\vert\kern-0.25ex\left\vert\kern-0.25ex\left\vert #1
		\right\vert\kern-0.25ex\right\vert\kern-0.25ex\right\vert}}

\begin{document}

\title{ \bf Sparse Feature Selection Makes Batch Reinforcement Learning More Sample Efficient}
\author
{
	Botao Hao\thanks{Deepmind. E-mail: haobotao000@gmail.com.},~
	Yaqi Duan\thanks{Princeton University. E-mail: yaqid@princeton.edu.},~
	Tor Lattimore\thanks{Deepmind. E-mail:  lattimore@google.com.}
	,~
	Csaba Szepesv\'ari\thanks{Deepmind and University of Alberta. E-mail: szepi@google.com.}
	,~
	Mengdi Wang\thanks{Princeton University. E-mail: mengdiw@princeton.edu.}
}
\date{}
\maketitle

\begin{abstract}
 This paper provides a statistical analysis of high-dimensional batch Reinforcement Learning (RL) using sparse linear function approximation. 
When there is a large number of candidate features, our result sheds light on the fact that sparsity-aware methods can make batch RL 
more sample efficient. We first consider the off-policy policy evaluation problem. To evaluate a new target policy, we analyze a 
Lasso fitted Q-evaluation method and establish a finite-sample error bound that has no polynomial dependence on the ambient dimension. 
To reduce the Lasso bias, we further propose a post model-selection estimator that applies fitted Q-evaluation to the features selected 
via group Lasso. Under an additional signal strength assumption, we derive a sharper instance-dependent error bound that depends 
on a divergence function measuring the distribution mismatch  between the data distribution and occupancy measure of the target policy. 
Further, we study the Lasso fitted Q-iteration for batch policy optimization and establish a finite-sample error bound depending 
on the ratio between the number of relevant features and restricted minimal eigenvalue of the data's covariance. In the end, we complement 
the results with minimax lower bounds for batch-data policy evaluation/optimization that nearly match our upper bounds. 
The results suggest that having well-conditioned data is crucial for sparse batch policy learning.
\end{abstract}

\section{Introduction}
We consider batch reinforcement learning (RL), where the problem is to evaluate a target policy or to learn a good policy based on a given dataset  \citep{sze10,lange2012batch,levine2020offline}.
While in online RL the central question is how to sequentially interact with the environment to balance exploration and exploitation, in batch RL the dataset is given a priori and the focus is typically on learning a near-optimal policy or evaluating a given target policy.

To handle RL systems with large or even infinite state spaces, we focus on the use of linear function approximation \citep{BeKaKo63,SchSe85,BeTs96}, that is, using a weighted linear combination of available features (aka basis functions) to represent high-dimensional transition/value functions. Results from the supervised learning literature show that the sample size needed to get accurate policy evaluations or near-optimal policies must scale at least linearly with $d$, the number of features \citep[e.g., Example 15.14 of ][]{Wa19}.

We leverage the idea of sparse approximation and focus on situations when a smaller number of 
\emph{unknown}  relevant
features is sufficient for solving the RL problem. 
Sparse regression has proved to be a powerful method for high-dimensional statistical learning with limited data \citep{tibshirani1996regression,chen2001atomic, Buneaetal07,bickel2009simultaneous,RiGr14}, and we will borrow techniques from the sparse learning literature to improve the sample efficiency of batch RL, an idea with a considerable history in RL as witnessed by our literature review that follows below.

\subsection{Contributions}
First, we consider the problem of \emph{off-policy policy evaluation} (OPE), where the objective is to estimate the value function of a \emph{target policy} from batch episodes that were generated from a possibly different behavior policy. To promote sparse solutions, we use fitted value iteration that iteratively fits state-action value functions using linear regression with $\ell_1$-regularization. We call this procedure as \emph{Lasso fitted  Q-evaluation} (Algorithm \ref{alg:batch_eva}). 
For the analysis of our methods we introduce the novel notion of \emph{sparse linear MDPs}
and then establish a finite-sample error bound for Lasso fitted Q-evaluation in sparse linear MDPs
that depends linearly on the number of relevant features and the restricted minimal eigenvalue.
Importantly, the bound has no polynomial dependence on $d$, as long as the true MDP model is sparse. 
This appears to be the first theoretical bound for sparse batch policy evaluation. 

Second, to reduce the Lasso bias, we propose an improved \emph{post model-selection estimator} (Algorithm \ref{alg:evaluation_restricted}) that applies fitted Q-evaluation with a smaller feature set that is selected using group Lasso. Under an additional separability assumption, we derive a sharper and nearly minimax-optimal error bound that is instance-dependent.
The error bound is determined by a divergence function measuring the distribution mismatch, \emph{restricted over the reduced feature space}, between the data distribution and the occupancy distribution of the target policy. This divergence defined over the reduced feature space is significantly smaller than its counterpart over the full $d$-dimensional space. In other words, {\it sparse feature selection reduces the distribution mismatch}.
We also provide a nearly-matching lower bound, and these two results sharply characterize the statistical limits of sparse off-policy evaluation.
 
We extend our analysis to the batch policy optimization problem, again in sparse linear MDPs.
We analyze the \emph{Lasso fitted Q-iteration} (Algorithm \ref{alg:batch_opt}) and show that the $\ell_{\infty}$-norm of policy error depends linearly on the ratio between the number of relevant features and the restricted minimal eigenvalue of the dataset's covariance matrix. Finally, we establish a minimax lower bound for sparse batch policy learning and show that the lower bound also depends on the aforementioned ratio. This is the first lower bound result, to the authors' best knowledge, demonstrating the critical role played by the minimal eigenvalue of the dataset's covariance matrix and the construction is highly non-trivial. The upper and lower bounds validate the belief that well-conditioned data is crucial for sample-efficient policy learning.

\subsection{Related work}
\paragraph{Off-policy policy evaluation (OPE).} OPE often serves the starting point of batch RL.
A direct approach was to fit value function from data using approximate dynamic programming, e.g., the policy evaluation analog of
fitted Q-iteration \citep{ernst2005tree, munos2008finite, le2019batch} or least square policy iteration \citep{lagoudakis2003least}. Another popular class of OPE methods used importance sampling to get unbiased value estimate of a new policy \citep{precup2000eligibility} and improved by doubly-robust technique to reduce the variance \citep{jiang2016doubly, thomas2016data}. To  alleviate the curse of horizon \citep{li2015toward, jiang2016doubly,yin2020asymptotically}, marginalized importance sampling was suggested by estimating state marginal importance ratio without reweighting the entire trajectory \citep{hallak2017consistent, liu2018breaking, xie2019towards}. 
In general, estimating marginalized importance ratio could be sample-expensive and even intractable. Recently, practical duality-inspired methods were developed for estimating this ratio using function approximation  \citep{nachum2019dualdice, uehara2019minimax, zhang2020gendice, zhang2020gradientdice,yang2020off}.

On the theoretical side, \cite{uehara2019minimax, yin2020asymptotically, kallus2020double} established asymptotic optimality and efficiency for OPE in the tabular setting.  \cite{duan2020minimax} showed that fitted Q-evaluation with linear function approximation is minimax optimal 
and provided matching upper and lower bounds that depend on a distribution mismatch term. Another closely related work was by \cite{le2019batch} who studied batch policy evaluation and optimization with more general function approximation. They showed the complexity of batch RL depends on the complexity of the function class, assuming a ``concentration coefficient'' condition \citep{munos2008finite} that the state-action visitation density is bounded entrywisely across policies. More recently, \cite{uehara2019minimax} provided theoretical investigations into OPE using general function approximators for marginalized importance weights and value functions but did not show the statistical optimality.

\paragraph{Sparse learning in RL.} 
The use of feature selection by regularization in RL has been explored in a number of prior works. \cite{kolter2009regularization, geist2011, hoffman2011regularized, painter2012greedy} studied \emph{on-policy} evaluation with $\ell_1$-regularization for temporal-difference (TD) learning but none of them come with a theoretical analysis. \cite{liu2012regularized} studied off-policy evaluation by regularized TD learning but only provided algorithmic convergence guarantee without statistical error analysis.

\cite{ghavamzadeh2011finite, geist2012dantzig} proposed Lasso-TD with finite-sample statistical analysis for estimating the value function in \emph{Markov reward process}. In particular, they derived in-sample \emph{prediction error} bound $\cO((s\log (d)/\psi n)^{1/2})$ under $\psi$-minimum eigenvalue condition on the empirical feature gram matrix. Although this bound also has no polynomial dependency on $d$, in-sample prediction error generally can not be translated to the estimation error of target policy in the OPE problem and their bound can not characterize the distribution mismatch between behavior policy and target policy. On the other hand, no minimax lower bound has been investigated so far.

In addition, \cite{massoud2008regularized, farahmand2016:jmlr} considered $\ell_2$-regularization in fitted Q-iteration/policy iteration for policy optimization in a reproducing kernel Hilbert space, and finite-sample performance bounds for
these algorithms were proved built on a coefficient concentration condition. 
\cite{calandriello2014sparse} developed Lasso fitted Q-iteration for sparse multi-task reinforcement learning and assumed  a generative model \citep{kakade2003sample} for sampling transitions.
\citet{ibrahimi2012efficient} derived a $\cO(p\sqrt{T})$ regret bound in high-dimensional sparse linear quadratic systems where $p$ is the dimension of the state space.

\paragraph{Sparse linear regression.} Sparse regression receives considerable attention in high-dimensional statistics in the past decade. Lasso \citep{tibshirani1996regression},  is arguably the most widely used method to conduct sparse regression. Theoretical analysis of Lasso is well-studied in \cite{zhao2006model, bickel2009simultaneous, wainwright2009sharp}. For a thorough review of Lasso as well as high-dimensional statistics, we refer the readers to \cite{hastie2015statistical, Wa19,buhlmann2011statistics}. However, extending existing analysis from regression to batch RL is much more involved due to the complex optimization structure, non-i.i.d data collection, and covariate shift.

\section{Preliminaries}
\textbf{Notations.}
Denote $[n]=\{1,2,\ldots, n\}$. We identify vectors and functions in the obvious way by fixing an (arbitrary) ordering over the domain of the function. 
Similarly, for $\cS$ finite and $A\in \RR^{\cS \times \cS}$ will be treated as an $|\cS|\times |\cS|$ matrix.
For a vector $x\in\mathbb R^d$, a matrix $X\in\mathbb R^{d\times d}$ and an index set $\cS\subseteq [d]$, 
we define $x_{\cS} = (x_j)_{j\in S}\in\mathbb R^{\cS}$ 
be the subvector over the restricted index set $\cS$
and $X_{\cS\times \cS} = (X_{ij})_{i,j\in \cS}\in\mathbb R^{\cS \times \cS}$. We write $X_{j\cdot}$ as the $j$th row of $X$.
Denote $\lambda_{\min}(X),\lambda_{\max}(X)$ the smallest and largest eigenvalues of a symmetric matrix $X$. 
Denote $\lesssim,\gtrsim$ as ``approximately less/greater than", omitting non-leading order terms (constant and polylog factors).
For a set $\cS$ let $\Delta_{\cS}$ denote the set of probability distributions over $\cS$.

\subsection{Problem definition}  
A finite, infinite-horizon discounted Markov decision process (DMDP) can be described by the tuple $M=(\cX,\cA,P,r,\gamma)$. 
Here, $\cX$ is a finite set of states, 
$\cA$ is a finite set of actions, 
$P:\cX\times \cA \to \Delta_{\cX}$ is the transition probability function, 
$r:\cX\times \cA\to [0, 1]$ is the reward function and 
$\gamma\in (0, 1)$ is the so-called discount factor. 
In this paper, for the sake of simplicity, we stick to finite DMDPs. 
However, our results can be extended to more general cases with routine work.

We define a (stationary) policy $\pi$ as a
$\cX\to \Delta_{\cA}$ map, mapping states to distributions over actions. 
A policy, a distribution $\xi_0$ over $\cX$
and a DMDP $M$ together give rise to a probability measure $\mathbb P^{\pi}$ over the set of infinitely long state-action histories: for a history of form $x_1,a_1,r_1,x_2,a_2,r_2,\dots$,  
$\mathbb P^{\pi}(x_1) = \xi_0(x_1)$ and for $t=1,2,\dots$,
$\mathbb P^{\pi}(a_t|x_1,a_1,\dots,x_{t-1},a_{t-1},x_t) = \pi(a_t|x_t)$
 and 
$\mathbb P^{\pi}(x_{t+1}|x_1,a_1,\dots,x_{t-1},a_{t-1},x_t,a_t) = P(x_{t+1}|x_t,a_t)$. We denote by $\mathbb E^{\pi}$ the corresponding expectation operator. 
The \emph{value} of policy $\pi$ given the initial state distribution $\xi_0$ is 
$v^{\pi}_{\xi_0} := \mathbb E^{\pi}[\sum_{t=0}^\infty \gamma^t r(x_t, a_t) ]$, where $\mathbb E^{\pi}$ depends on $\xi_0$ but just this dependence is not shown.
We define $v^\pi:\cX \to \mathbb R$, the \emph{value function} of policy $\pi$, by letting $v^\pi(x)$ denote the value of $\pi$ when it is started from state $x$. 

A nonstationary policy is a sequence of maps from histories to probability distributions over actions and such a policy similarly induces a probability measure $\mathbb P^{\pi}$ and an underlying  expectation operator $\mathbb E^{\pi}$.
An \emph{optimal policy} $\pi^*$ maximizes the value from every state: $v^{\pi^*}(x) = \max_\pi v^{\pi}(x)$ for any $x\in \cX$ where the maximum is over all policies, including nonstationary ones. As is well known, for finite DMDPs,  a stationary optimal policy always exist \citep[e.g.,][]{sze10}.
The value function shared by optimal policies is called the \emph{optimal value function} and is denoted by $v^*$.

We consider the following learning and optimization problems.
The learner knows the state space $\cX$ and action space $\cA$. 
The reward function $r$ is given in the form of a black box, which the learner can use to evaluate $r(x,a)$ for any pair of $(x,a)\in \cX \times \cA$.
The only unknown is the transition probability function $P$.
The learner is given 
a random dataset $\cD = \{(x_n, a_n, x_n')\}_{n=1}^N$ generated by using a (possibly nonstationary and unknown)
\emph{behavior policy} $\bar{\pi}$ in the DMDP $M$ starting from some initial distribution which may be different from $\xi_0$.
We study two fundamental batch RL tasks: 
\begin{itemize}
    \item \emph{Off-policy policy evaluation:}
    given $\cD$ and black box access to a \emph{target policy} $\pi$, $\xi_0$ and $r$,
    estimate the value, $v^{\pi}_{\xi_0}$, of $\pi$;
    \item   \emph{Batch policy optimization:}
    given $\cD$ and black box access to $r$, find an optimal policy. 
\end{itemize}

\paragraph{Bellman operators.} 
By slightly abusing the notation, we will view the transition probability kernel $P$ as a left linear operator, mapping from $\RR^\cX$ to $\RR^{\cX \times \cA}$: 
\begin{equation*}
(P v)(x,a):= \sum_{x'} P(x'|x,a) v(x').
\end{equation*} 
The \emph{Bellman optimality operator} $\cT:\RR^{\cX}\to \RR^{\cX}$ is defined as
$$
[\cT v](x) = \max_{a\in \cA}(r(x,a) + \gamma P v(x,a)), \ \forall x\in \cX.
$$
The function $v^*\in \RR^{\cX}$  is the unique solution to the Bellman optimality equation $v= \cT v$. 
 The state-action value function of a policy $\pi$ is defined as 
$$
Q^{\pi} (x,a) = r(x,a) + \gamma P v^{\pi}(x,a) .
$$
 We also introduce the Bellman operator $\cT_{\pi}:\RR^{\cX}\to \RR^{\cX}$ 
  for policy $\pi$
 as:  
 $$
 [\cT_{\pi}v] (x) = \sum_{a\in \cA} \pi(a|x) \left( r(x,a) + \gamma  P v (x,a)\right)\,, \ \forall x\in \cX.
 $$
The function $v^{\pi}$ is the unique solution to the Bellman equation $v^{\pi}=\cT_{\pi}v^{\pi}$.

\subsection{Sparse linear Markov decision process}
Bellman's curse of dimensionality refers to that DMDPs tend to have a state space where individual states have many components that can take on many values, hence the state space is intractably large. An early idea due to \citet{BeKaKo63,SchSe85} is to use linear combination of finitely many, say, $d$ basis functions, to express value functions and thus allow for computation and storage that is independent of the size of the state-space.
Following this idea, let $\phi: \cX \times \cA \to \RR^d$ be a feature map which assigns to each state-action pair a $d$-dimensional feature vector. A feature map combined with a parameter vector $w\in \RR^d$ gives rise to the linear function $g_w:\cX\times \cA \to \RR$ defined by $g_w(x,a) = \phi(x,a)^\top w$, $(x,a)\in \cX \times \cA$,
and the subspace  $\cG_\phi = \{ g_w \,:\, w\in \RR^d \}\subseteq \RR^{\cX \times \cA}$. 

Fix a feature map $\phi$.  For $f:\cX \times \cA \to \RR$, let $f^\pi:\cX \to \RR$ be defined by $f^\pi(x) = \sum_a \pi(a|x)f(x,a)$.
The idea of \citet{BeKaKo63,SchSe85} to use features to find optimal policies with computation whose cost only depends on $d$ and not on the cardinality of $\cX$, boosted with the use of action-value functions, 
can be formally justified 
in DMDPs such that $\cG_\phi$ is closed under the application of any of the 
operators $\tilde{\cT}_\pi: \cX \times \cA \to \cX \times \cA$, $\tilde{\cT}_\pi f \mapsto r+ \gamma P f^\pi$ for any policy $\pi$ of the DMDP. 
This leads to the definition of \emph{linear DMDPs}\footnote{A different definition is called linear kernel MDP that the MDP transition kernel can be parameterized by a small number of parameters \citep{yang2019reinforcement, cai2019provably, zanette2020frequentist, zhou2020provably}.}  \citep{yang2019sample, jin2019provably,agarwal2020pc}. Slightly deviating from the original definitions, here we say 
that {\it a DMDP is linear if for some feature map $\phi$,
$\cG_\phi$ is closed under the operators 
$f \mapsto P f^\pi$ for any policy $\pi$ of the DMDP. } The following is an immediate corollary of the definitions and forms the basis of our methods:
\begin{proposition}\label{prop:linmdp}
Let $M$ be linear under $\phi$ and $\pi$ a policy for $M$.
Then, there exists some $w\in \RR^d$ such that $g_w =P v_\pi$ and
$v_\pi = r^\pi+\gamma g_w^\pi$.
Further, $w$ also satisfies the linear equation
$g_w = P r^\pi + \gamma P g_w^\pi$.
Conversely, if $w$ satisfies this last equation then $v_\pi = r^\pi+\gamma g_w^\pi$.
\end{proposition}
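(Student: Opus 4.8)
The plan is to treat the existence claim $g_w = P v^\pi$ as the crux and to read off the remaining three assertions as short algebraic consequences of it together with the Bellman equation $v^\pi=\cT_\pi v^\pi$. Throughout I write $S$ for the linear operator $S:\RR^{\cX\times\cA}\to\RR^{\cX\times\cA}$, $Sf = Pf^\pi$, so that the hypothesis that $M$ is linear under $\phi$ reads exactly $S(\cG_\phi)\subseteq\cG_\phi$. I will also use that the reward is realizable, $r\in\cG_\phi$: this is supplied by the linear-DMDP structure, being exactly what closure of $\cG_\phi$ under the motivating operators $f\mapsto r+\gamma Pf^\pi$ yields at $f=0$, and it gives $Pr^\pi = Sr \in\cG_\phi$ as well. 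Finally I record that, by the definition of $\cT_\pi$, every $u\in\RR^\cX$ satisfies $\cT_\pi u = r^\pi+\gamma (Pu)^\pi$, a fact I use repeatedly.

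First I would establish $P v^\pi\in\cG_\phi$. Applying $P$ to the Bellman identity $v^\pi = r^\pi+\gamma (Pv^\pi)^\pi$ shows that $h^\star:=Pv^\pi$ is a fixed point of the affine map $U:\RR^{\cX\times\cA}\to\RR^{\cX\times\cA}$, $Uf := Pr^\pi+\gamma Sf$. Since $\|Sf\|_\infty\le\|f^\pi\|_\infty\le\|f\|_\infty$, the map $U$ is a $\gamma$-contraction in the supremum norm and so has $h^\star$ as its unique fixed point. The key observation is that $U$ maps the subspace $\cG_\phi$ into itself: for $g\in\cG_\phi$ we have $Sg\in\cG_\phi$ by the linearity hypothesis and $Pr^\pi\in\cG_\phi$ by realizability, whence $Ug\in\cG_\phi$. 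As $\cG_\phi$ is finite dimensional it is closed, hence complete, so Banach's fixed point theorem produces a fixed point of $U$ inside $\cG_\phi$; by uniqueness of the fixed point in the ambient space this fixed point equals $h^\star=Pv^\pi$. Equivalently, one may unfold the Neumann series $Pv^\pi=\sum_{t\ge0}\gamma^t S^t(Pr^\pi)$, note every partial sum lies in $\cG_\phi$ by closure, and pass to the limit in the closed subspace. Either way $Pv^\pi\in\cG_\phi$, so there is $w\in\RR^d$ with $g_w=Pv^\pi$, which is the first assertion.

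With this $w$ in hand the next two assertions are immediate. Substituting $g_w=Pv^\pi$ into $v^\pi=r^\pi+\gamma(Pv^\pi)^\pi$ gives $v^\pi=r^\pi+\gamma g_w^\pi$, the second assertion. Applying $P$ to this equality and using $g_w=Pv^\pi$ once more yields $g_w = Pr^\pi+\gamma P g_w^\pi$, the claimed linear equation. For the converse, suppose $w$ satisfies $g_w=Pr^\pi+\gamma Pg_w^\pi$ and set $u:=r^\pi+\gamma g_w^\pi\in\RR^\cX$. Then $Pu=Pr^\pi+\gamma Pg_w^\pi=g_w$, so $\cT_\pi u = r^\pi+\gamma (Pu)^\pi = r^\pi+\gamma g_w^\pi = u$; hence $u$ solves $u=\cT_\pi u$ and, by uniqueness of the solution of the Bellman equation, $u=v^\pi$, i.e. $v^\pi=r^\pi+\gamma g_w^\pi$.

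The only genuinely nontrivial step is the first, and within it the delicate point is not the contraction but the requirement that the inhomogeneous term $Pr^\pi$ of $U$ already lie in $\cG_\phi$: without it $U$ would fail to preserve the subspace and $Pv^\pi$ could leave $\cG_\phi$ even though $S$ is invariant on it. This is exactly the role of reward realizability, which the linear-DMDP structure supplies; the transition-side closure $S(\cG_\phi)\subseteq\cG_\phi$ alone does not suffice, as a two-state absorbing instance with a non-realizable reward shows. Everything past the first assertion is pure linear algebra driven by the Bellman equation and the uniqueness of its solution.
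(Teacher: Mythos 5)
Your algebra for the second and third claims and for the converse is fine, and your converse is essentially the paper's own argument (the paper runs the uniqueness-of-the-Bellman-fixed-point step at the level of $Q=r+\gamma g_w$, you run it at the level of $u=r^\pi+\gamma g_w^\pi$; these are the same idea). The problem is in the first claim, and it is a mismatch of hypotheses rather than a flaw in your deductions. The paper's proof is one line: with $Q_\pi:=r+\gamma Pv_\pi$ one has $v_\pi=Q_\pi^\pi$, and linearity is invoked to conclude $Pv_\pi=PQ_\pi^\pi\in\cG_\phi$. Since $Q_\pi$ is not known to lie in $\cG_\phi$, that step is only legitimate if ``closed under $f\mapsto Pf^\pi$'' is read as: these operators map \emph{every} $f\in\RR^{\cX\times\cA}$ into $\cG_\phi$ --- equivalently, $Pv\in\cG_\phi$ for every $v\in\RR^{\cX}$, i.e.\ $P$ factors through the features, which is exactly the structure Assumption \ref{assum:sparse_MDP} supplies. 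Under that reading your whole apparatus is unnecessary: $Pv_\pi\in\cG_\phi$ and $Pr^\pi\in\cG_\phi$ are immediate, and no contraction, Neumann series, or completeness of $\cG_\phi$ is needed.

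You instead read the definition as strict subspace invariance, correctly observe that invariance alone cannot yield $Pv_\pi\in\cG_\phi$ (your two-state absorbing counterexample is right, and it shows the paper's wording is loose), and then repair the argument by importing reward realizability $r\in\cG_\phi$, justified by closure of the ``motivating'' operators $f\mapsto r+\gamma Pf^\pi$ at $f=0$. That repair is not available in this paper: the definition used for Proposition \ref{prop:linmdp} explicitly deviates from the motivating one, and nowhere is the reward assumed to lie in $\cG_\phi$ --- Assumption \ref{assum:sparse_MDP} factorizes only the transition kernel, $r$ is an arbitrary black box, and the algorithms deliberately carry $r$ outside the linear part ($Q_w=r+\gamma g_w$). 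Consequently there are instances of the paper's model (factorized $P$, non-realizable $r$) in which your hypotheses fail even though the proposition holds and the paper's proof applies; as written, your proof establishes a statement about a different class of MDPs. The fix is small: under the range reading of linearity, $Pr^\pi\in\cG_\phi$ holds for free, after which either your fixed-point argument or the paper's direct substitution closes the first claim --- but at that point the direct substitution is strictly shorter.
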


When little a priori information is available on how to choose the features, agnostic choices often lead to dimensions which can be as large as the number of samples $n$, if not larger. Without further assumptions, no procedure can achieve nontrivial performance guaranteed even when just considering simple prediction problems (e.g., predicting immediate rewards). 
However, effective learning with many more features than the sample-size is possible when only $s\ll d$ features are relevant.
This motivates our assumption of sparse linear DMDPs.

\begin{assumption}[{\bf Sparse linear DMDPs}]	\label{assum:sparse_MDP}
Fix a feature map  $\phi:\cX \times \cA \to \RR^d$.
We assume the unknown DMDP $M$ is $(s,\phi)$-sparse such that there exists $\cK\subseteq [d]$ with $|\cK|\le s$ and some functions $\psi(\cdot) = (\psi_k(\cdot))_{k\in\cK}$ such that
$P(x' |x,a) = \sum_{k\in \cK} \phi_k(x,a) \psi_k(x'),  \forall~x,a.$

\end{assumption}
We denote by $\cM_{\phi,s}(\cX,\cA,\gamma)$ the set of all $(s,\phi)$-sparse DMDP instances. Our second assumption concerns the dataset: 
\begin{assumption}\label{assm:data_colle} 
The dataset $\cD$ consists of $N=KL$ samples from $K$ independent episodes $\tau_1,\ldots, \tau_K$. Each episode $\tau_k$ has $L$ consecutive transitions generated by some unknown behavior policy $\bar{\pi}_k$ giving rise to a sample path $\tau_k = (x_0^{(k)}, a_0^{(k)}, x_0^{(k)'}, \ldots, x_{L-1}^{(k)}, a_{L-1}^{(k)}, x_{L-1}^{(k)'})$.
\end{assumption}

\section{Sparsity-Aware Off-Policy Policy Evaluation}

In this section we consider the off-policy policy evaluation (OPE) problem, i.e., to estimate the value of a target policy $\pi$ from logged experiences $\mathcal{D}$ generated using unknown behavior policies.
We propose two sparsity-aware algorithms to approximate state-action functions using sparse parameters.

\subsection{Lasso fitted Q-evaluation}

We propose a straightforward modification of the fitted Q-evaluation method
\citep{le2019batch, duan2020minimax} to account for sparsity.
For $t=1,2,\dots,T-1$, the algorithm
produces $\hat w_{t+1}$ using Lasso-regularized regression
so that $g_{\hat w_{t+1}} \approx P( (r+\gamma g_{\hat w_t})^\pi)$.
To make the errors of different steps independent, 
the dataset $\cD$  is split into $T$ nonoverlapping folds $\cD_1,\dots,\cD_T$,
so that all folds have $R=K/T$ episodes in them
and only data from fold $t$ is used in step $t$.
To define the algorithm, it is useful to introduce 
$Q_w = r+\gamma g_w$ where $w\in \RR^d$. For $a<b$, we also define the
 operator $\Pi_{[a,b]}: \RR \to [a,b]$ that projects its input to $[a,b]$, i.e., $\Pi_{[a,b]}(x) = \max(\min(x,b),a)$.
The pseudocode is given as Algorithm \ref{alg:batch_eva}. In the last step, $m=N$ samples are used to produce the final output to guarantee that the error introduced by the Monte-Carlo averaging is negligible compared to the rest.
\begin{remark}
The last step Monte Carlo averaging is only for numerical integration, where the samples are newly drawn inside the algorithm (independent from batch data), so there is no bias here. We set $m=N$ to simplify the theory but it could be much larger than $N$ for a more accurate approximation.
\end{remark}

{\small
\begin{algorithm}[htb!]
	\caption{Lasso fitted Q-evaluation}
	\begin{algorithmic}[1]\label{alg:batch_eva}
		\STATE
		\textbf{Input:} Initial distribution $\xi_0$, target policy $\pi$, 
		$T$ folds of dataset $\{\cD_t\}_{t=1}^T$ of overall size $N$, 
		regularization parameter $\lambda_1>0$, $m:=N$, $\hat{w}_0= 0\in \RR^d$.
		\FOR{$t=1,2,\ldots, T$}
		\STATE Calculate regression targets for $(x_i, a_i, x_i')\in\cD_t$:
		$$
		y_i = \sum_{a} \pi(a|x_i') Q_{\hat{w}_{t-1}}(x_i', a)\,. $$
		
		\STATE Fit $\hat{w}_{t}$ through sparse linear regression
		\begin{equation*}
		\begin{split}
		 &\hat{w}_{t}= \argmin_{w}\Big\{
		  \frac{1}{|\cD_t|}\sum_{(x_i,a_i,x_i')\in\cD_t}(\Pi_{[0,1/(1-\gamma)]}y_i-\phi(x_i, a_i)^{\top}w)^2+\lambda_1 \|w\|_1\Big\}\,.   
		\end{split}
		\end{equation*}
		\ENDFOR
			\STATE
		\textbf{Output:} $\hat{v}_{\hat{w}_T}^{\pi}= \frac1m\sum_{u=1}^m  \Pi_{[0,1/(1-\gamma)]} (Q_{\hat{w}_T}(\tilde{x}_u,\tilde{a}_u))$
		with $\tilde{x}_u\sim \xi_0$, $\tilde{a}_u\sim \pi(\cdot|\tilde{x}_u)$.
	\end{algorithmic}
\end{algorithm}
}

\subsection{Post model-selection fitted Q-evaluation} 

Sparse regularization is known to induce a small bias in regression. However, this bias could get compounded through the iterative procedure of Algorithm \ref{alg:batch_eva}. 
To avoid such bias and improve the accuracy, we aim to identify the set of relevant features $\cK$ before evaluating the policy based on the following proposition.  
\begin{proposition}\label{prop:linmdpf}
Under Assumption \ref{assum:sparse_MDP}, there exists a $d\times d$ matrix $K^{\pi}$ such that $\forall~(x,a)\in \cX\times \cA$
\begin{equation}\label{eqn:identity}
\begin{split}
    \mathbb E_{x'\sim P(\cdot|x,a)}[\phi^{\pi}(x')^{\top}\mid x,a] = \phi(x,a)^{\top} K^{\pi},
\end{split}
\end{equation}
where $\phi^{\pi}(x) = \sum_a \pi(a|x)\phi(x,a) $ and  all but $s$ rows of $K^{\pi}$ are identically zero. This $K^{\pi}\in\mathbb R^{d\times d}$ is known as the \emph{matrix mean embedding} \citep{duan2020minimax}.
\end{proposition}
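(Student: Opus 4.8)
The plan is to compute the left-hand side of \eqref{eqn:identity} directly from the sparse factorization of the transition kernel provided by Assumption \ref{assum:sparse_MDP}, and then read off the matrix $K^\pi$ one row at a time. First I would expand the conditional expectation as an explicit (finite) sum over next states, $\mathbb E_{x'\sim P(\cdot|x,a)}[\phi^\pi(x')^\top\mid x,a]=\sum_{x'}P(x'|x,a)\,\phi^\pi(x')^\top$, which is well defined since $\cX$ is finite. Substituting $P(x'|x,a)=\sum_{k\in\cK}\phi_k(x,a)\psi_k(x')$ and interchanging the two finite sums yields
\begin{equation*}
\sum_{x'}P(x'|x,a)\,\phi^\pi(x')^\top=\sum_{k\in\cK}\phi_k(x,a)\Big(\sum_{x'}\psi_k(x')\,\phi^\pi(x')^\top\Big).
\end{equation*}
The key observation is that the bracketed quantity $\sum_{x'}\psi_k(x')\,\phi^\pi(x')^\top\in\RR^{1\times d}$ is a fixed row vector that does \emph{not} depend on the pair $(x,a)$; this $(x,a)$-independence is precisely what permits the conditional expectation to be encoded by a single matrix.

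Next I would define $K^\pi\in\RR^{d\times d}$ through its rows: for $k\in\cK$ set $(K^\pi)_{k\cdot}:=\sum_{x'}\psi_k(x')\,\phi^\pi(x')^\top$, and for $k\notin\cK$ set $(K^\pi)_{k\cdot}:=0$. With this definition the product $\phi(x,a)^\top K^\pi=\sum_{k=1}^d\phi_k(x,a)(K^\pi)_{k\cdot}$ collapses to the sum over $k\in\cK$ only, which matches the display above and hence establishes the identity \eqref{eqn:identity}. The sparsity claim is then immediate from the construction: at most $|\cK|\le s$ rows are potentially nonzero, so all but $s$ rows of $K^\pi$ vanish identically.

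I expect no genuine obstacle here; the statement is essentially a bookkeeping rearrangement of the sparse factorization once the conventions are fixed. The only points requiring care are the transpose/row-vector conventions (so that $\phi^\pi(x')^\top$ contributes the $k$-th \emph{row} of $K^\pi$, consistent with left-multiplication by $\phi(x,a)^\top$) and the interchange of summation order, which is harmless because both sums are finite. In the write-up it is worth emphasizing that $K^\pi$ inherits its row-support directly from the feature support $\cK$, since this row-sparsity is exactly the structural property that the group-Lasso feature-selection step in Algorithm \ref{alg:evaluation_restricted} is later designed to exploit.
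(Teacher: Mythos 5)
Your proof is correct and takes essentially the same route as the paper's: the paper builds $K^{\pi}$ column by column from weight vectors $w_i$ satisfying $P\phi_i^{\pi}=g_{w_i}$ (whose existence and support in $\cK$ follow from the very factorization $P(x'|x,a)=\sum_{k\in\cK}\phi_k(x,a)\psi_k(x')$ that you substitute directly), while you build it row by row with the explicit formula $(K^{\pi})_{k\cdot}=\sum_{x'}\psi_k(x')\phi^{\pi}(x')^{\top}$ for $k\in\cK$ and zero otherwise. Your explicit row formula is exactly the computation the paper leaves implicit when it asserts $w_{ij}=0$ for $j\notin\cK$, so the two arguments coincide in substance.
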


Thus we propose to estimate the set of relevant features $\cK$ using group lasso \citep{yuan2006model}.
Once the relevant feature set is identified, any regular policy evaluation method can be used
over the learned feature set $\hat{\cK}$.
In Algorithm \ref{alg:evaluation_restricted},
for the ease of comparability with the previous method, we consider vanilla fitted Q-evaluation.  

\begin{remark}
One may wonder whether it is necessary  to refit the iterative regression and why not simply use the estimated $\hat K^{\pi}$ to get a plug-in estimator. This is because refitting typically performs strictly better than direct regularized learning and has less bias, as long as the feature selection succeeds \citep{belloni2013least}.
\end{remark}

{\small
\begin{algorithm}[htb!]
	\caption{Post model-selection fitted Q-evaluation}
	\begin{algorithmic}[1]\label{alg:evaluation_restricted}
		\STATE
		\textbf{Input:} initial distribution $\xi_0$, target policy $\pi$, dataset $\cD$ of size $N$, $m:=N$, 
		number of iterations $T$,  $\lambda_2,\lambda_3>0$.
		\STATE   Estimate $\hat{K}^{\pi}$ through: 
\begin{equation}\label{eqn:group_lasso}
\begin{split}
     &\hat{K}^{\pi} =\argmin_{K\in\mathbb R^{d\times d} }\Big\{
      \frac{1}{Nd}\sum_{n=1}^N\big\|\phi^{\pi}(x_n')^{\top}-\phi(x_n, a_n)^{\top} K\big\|_2^2 + \lambda_2\sum_{j=1}^d \|K_{j\cdot}\|_2\Big\}.
\end{split}
\end{equation}
\STATE Find $\hat{\cK} = \{j\in[d]: \|\hat{K}^{\pi}_{j\cdot}\|_2 \neq 0\}$,  the estimated set of relevant features. Here, $\hat{K}^{\pi}_{j\cdot}$ refers to the $j$th row of $\hat{K}^{\pi}$.

		\STATE
		\textbf{Initialize:} $\hat{w}_0= 0\in \RR^{|\hat{\cK}|}$.
		\FOR{$t=1,2,\ldots, T$}
		\STATE Calculate regression targets for $n\in[N]$: 
		$$
		y_n = r(x_n,a_n) + \gamma \sum_{a}\pi(a|x_n')  [\phi(x_n',a)_{\hat{\cK}}]^{\top}\hat{w}_{t-1} \,.
		$$ 
		\STATE Update $\hat{w}_{t}$ through 
		\begin{equation*}
		\begin{split}
		    \hat{w}_{t+1}= \argmin_{w\in\mathbb R^{|\hat{\cK}|}}\Big\{\frac{1}{N}\sum_{n=1}^{N}(y_n -[\phi(x_n, a_n)_{\hat{\cK}}]^{\top}w)^2 + \lambda_3 \|w\|_2^2\Big\}\,.
		\end{split}
		\end{equation*}
		\ENDFOR
			\STATE
		\textbf{Output:} $\hat{v}_{\hat{w}_T}^{\pi}= \frac1m\sum_{u=1}^m  \Pi_{[0,1/(1-\gamma)]} (Q_{\hat{w}_T}(\tilde{x}_u,\tilde{a}_u))$
		with $\tilde{x}_u\sim \xi_0$, $\tilde{a}_u\sim \pi(\cdot|\tilde{x}_u)$.
	\end{algorithmic}
\end{algorithm}
}

\section{Performance Bounds For Sparse Off-Policy Evaluation}\label{sec:bound_OPE}
We study the finite-sample estimation error of Algorithms \ref{alg:batch_eva}, \ref{alg:evaluation_restricted}. 
All the technical proofs are deferred to the Appendix. Let $\Sigma$ be the expected uncentered covariance matrix of the batch data, given by
\begin{equation}\label{eqn:expected_cov}
    \Sigma:=\mathbb E\left[\frac{1}{L}\sum_{h=0}^{L-1}\phi(x_{h}^{(1)},a_{h}^{(1)})\phi(x_{h}^{(1)},a_{h}^{(1)})^{\top}\right] \,,
\end{equation} 
where $L$ is the length of one episode. We need a notion of restricted eigenvalue that is common in high-dimensional statistics \citep{bickel2009simultaneous, buhlmann2011statistics}. 
\begin{definition}[{\bf Restricted eigenvalue}]\label{def:RE}
Given a positive semi-definite matrix $Z\in\mathbb R^{d\times d}$ and integer $s\geq 1$, define the restricted minimum eigenvalue of $Z$ as $C_{\min}(Z, s):=$
\begin{equation*}
\min_{\cS\subset [d], |\cS|\leq s}\min_{\bbeta\in\mathbb R^d}\left\{\frac{\langle \bbeta, Z\bbeta \rangle}{\|\bbeta_{\cS}\|_2^2}:  \|\bbeta_{\cS^c}\|_{1}\leq 3\|\bbeta_{\cS}\|_{1}\right\} \,.
\end{equation*}
\end{definition}

\begin{remark}
The restricted eigenvalue $C_{\min}(\Sigma, s)$ characterizes the quality of the distribution that generates the batch data set $\cD$. We need $C_{\min}(\Sigma, s)>0$ meaning that the data is well-conditioned or the behavior policy provides good coverage over relevant features. This is a key condition to guarantee the success of sparse feature selection \citep{bickel2009simultaneous}. To ensure the success of policy evaluation/optimization with linear function approximation, similar assumptions regarding $\Sigma$ in RL literature also appear in \cite{abbasi2019politex} (Assumption A.4), \cite{duan2020minimax} (Theorem 2), \cite{lazic2020maximum} (Assumption A.3), \cite{abbasi2019exploration} (Assumption A.3)  and \cite{agarwal2020theory} (Assumption 6.2).  
\end{remark}
\subsection{Finite-sample error bounds of Algorithm \ref{alg:batch_eva}}\label{sec:error_bound_OPE}

We first provide a statistical error bound for the Lasso fitted Q-evaluation (Algorithm \ref{alg:batch_eva}). 
\begin{theorem}[\textbf{OPE error bound for Algorithm \ref{alg:batch_eva}}]\label{thm:upper_bound_eva1}
 Suppose Assumptions \ref{assum:sparse_MDP}, \ref{assm:data_colle} hold and $C_{\min}(\Sigma, s)>0$. Let
	Algorithm~\ref{alg:batch_eva} take $N$ samples satisfying $N\gtrsim s^2\log(d/\delta)L(1-\gamma)^{-1}/C_{\min}(\Sigma, s).$ 
 Set the number of iterations $T = \Theta(\log (N/(1-\gamma))/(1-\gamma))$ and $\lambda_1=(1-\gamma)^{-1}\sqrt{T\log (2d/\delta)/N}$. Then, with probability at least $1-2\delta$,
\begin{equation}\label{eqn:worse_case_bound}
      \big|\hat{v}^{\pi}_{\hat{w}_T}-v^{\pi}\big| \lesssim  \frac{1}{C_{\min}(\Sigma, s)}\sqrt{\frac{s^2\log(d/\delta)}{N(1-\gamma)^{5}}}.
\end{equation}
\end{theorem}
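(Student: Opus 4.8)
The plan is to split the final error into an \emph{iteration (bias) term} and an \emph{accumulated statistical term}, and then to control every occurrence of the per-step Lasso error by a sparse-regression argument adapted to the dependent, episodic data. Let $\hat v_t := (Q_{\hat w_t})^\pi = r^\pi + \gamma g_{\hat w_t}^\pi$ and let $w^*$ be the true parameter from Proposition~\ref{prop:linmdp} so that $g_{w^*} = P v^\pi$. The key structural input is that, by Assumption~\ref{assum:sparse_MDP}, $Pf$ is linear in $\phi$ with coefficient vector supported on $\cK$ for \emph{any} bounded $f$; hence at step $t$ the regression target $\Pi_{[0,1/(1-\gamma)]} y_i$ has conditional mean $\phi(x_i,a_i)^\top w_t^*$ for some $s$-sparse $w_t^*$ (corresponding to $P$ applied to the clipped previous value). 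Writing $\zeta_t := g_{\hat w_t} - g_{w_t^*}$ and using that $\Pi_{[0,1/(1-\gamma)]}$ fixes $v^\pi$ and is non-expansive, I would first derive the one-step inequality $e_t := \|g_{\hat w_t} - g_{w^*}\|_\infty \le \gamma e_{t-1} + \|\zeta_t\|_\infty$, unroll it to get $e_T \lesssim \gamma^T/(1-\gamma) + (1-\gamma)^{-1}\max_t \|\zeta_t\|_\infty$, and note that $T = \Theta(\log(N/(1-\gamma))/(1-\gamma))$ renders the contraction term $\gamma^T/(1-\gamma)$ negligible relative to the target rate. By H\"older, $\|\zeta_t\|_\infty \le \|\phi\|_\infty \|\hat w_t - w_t^*\|_1$, and the Monte-Carlo averaging in the output contributes only $O((1-\gamma)^{-1}\sqrt{\log(1/\delta)/N})$, which is dominated. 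Thus the whole problem reduces to bounding $\max_t \|\hat w_t - w_t^*\|_1$.

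For the per-step Lasso bound I would exploit the data splitting: because fold $t$ is independent of $\hat w_{t-1}$, conditional on the earlier folds the step-$t$ problem is a genuine sparse linear model $\Pi_{[0,1/(1-\gamma)]} y_i = \phi(x_i,a_i)^\top w_t^* + \epsilon_i$ with $\E[\epsilon_i \mid x_i,a_i] = 0$ and $|\epsilon_i| \le 1/(1-\gamma)$. The crucial point for the non-i.i.d.\ data is that $\phi_j(x_i,a_i)$ is measurable with respect to the pre-transition $\sigma$-field while $\epsilon_i$ centers a function of the next state, so $\{\phi_j(x_i,a_i)\epsilon_i\}$ is a bounded martingale-difference sequence along each episode. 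An Azuma/Freedman bound together with a union bound over the $d$ coordinates and $T$ folds then gives $\|\,|\cD_t|^{-1}\sum_i \phi(x_i,a_i)\epsilon_i\|_\infty \lesssim (1-\gamma)^{-1}\sqrt{T\log(d/\delta)/N}$, which matches $\lambda_1/2$; this is precisely why $\lambda_1$ is chosen as stated.

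On the event that the noise is dominated by $\lambda_1/2$ \emph{and} the empirical restricted eigenvalue satisfies $C_{\min}(\hat\Sigma_t, s) \ge \tfrac12 C_{\min}(\Sigma, s)$ (with $\hat\Sigma_t := |\cD_t|^{-1}\sum_i \phi(x_i,a_i)\phi(x_i,a_i)^\top$), the standard cone/basic-inequality argument for the Lasso yields $\|\hat w_t - w_t^*\|_1 \lesssim s\lambda_1 / C_{\min}(\Sigma, s)$; substituting back reproduces the claimed rate $\tfrac{1}{C_{\min}(\Sigma,s)}\sqrt{s^2\log(d/\delta)/(N(1-\gamma)^5)}$. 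The hard part will be the empirical restricted-eigenvalue transfer under episodic sampling: I would write $\hat\Sigma_t$ as an average of $R = K/T$ \emph{independent} per-episode matrices (Assumption~\ref{assm:data_colle}) and show that $\langle \beta, (\hat\Sigma_t - \Sigma)\beta\rangle$ concentrates uniformly over the restricted cone $\{\|\beta_{\cS^c}\|_1 \le 3\|\beta_{\cS}\|_1\}$, e.g.\ via a discretization/peeling argument over $s$-sparse directions combined with a matrix-Bernstein bound for bounded independent summands. This is exactly where the sample-size requirement $N \gtrsim s^2\log(d/\delta)\,L\,(1-\gamma)^{-1}/C_{\min}(\Sigma,s)$ is consumed: the factor $L$ reflects that the $R = N/(TL)$ episodes are the independent units, while the $s^2\log d$ is the price of uniform control over the sparse cone.
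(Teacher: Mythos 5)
Your proposal is correct and follows essentially the same route as the paper's proof: recast the iterates as approximate value iteration against an $s$-sparse per-step ground truth $\bar w_t$ (the image of the clipped previous value under $P$), reduce the value error to $(1-\gamma)^{-1}\max_t\|\hat w_t-\bar w_t\|_1$ plus a $\gamma^T/(1-\gamma)$ residual and a negligible Monte-Carlo term, and control each $\|\hat w_t-\bar w_t\|_1$ via a Lasso analysis with bounded martingale-difference noise (Azuma plus union bounds over coordinates and folds) and an empirical restricted-eigenvalue transfer that uses independence across episodes. The only minor deviation is in that transfer step: you propose uniform concentration of the quadratic form over the sparse cone via discretization/peeling, whereas the paper simply bounds $\|\hat\Sigma-\Sigma\|_\infty$ entrywise by Hoeffding over episodes and invokes the standard RE-perturbation lemma (Corollary 6.8 of B\"uhlmann and van de Geer), which is where the $s^2\log d$ sample requirement is consumed; either route suffices.
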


Theorem \ref{thm:upper_bound_eva1} shows that the OPE error for sparse linear DMDPs depends linearly on $s/C_{\min}(\Sigma,s)$. For comparison, when the linear DMDPs model is not sparse, \cite{duan2020minimax} proved the error bound (using our notations) of the form
$$
|\hat{v}^{\pi}-v^{\pi}| \lesssim \sqrt{d/(C_{\min}(\Sigma, d)N(1-\gamma)^4)}.
$$ 
From the Definition \ref{def:RE}, $C_{\min}(\Sigma, d)<C_{\min}(\Sigma, s)$. Comparing the two results (setting $\delta=1/N$),
we expect the new error bound to be significantly tighter
, i.e., $C_{\min}(\Sigma,d)\frac{s^2 \log(dN)}{1-\gamma} \ll C^2_{\min}(\Sigma,s) d$, 
when there is a high level of sparsity ($s \ll d$).

\subsection{Finite-sample error bounds of Algorithm \ref{alg:evaluation_restricted}}

Next, we give a result for the post-selection model estimator for OPE (Algorithm \ref{alg:evaluation_restricted}).  
We will show that this algorithm provides a more accurate estimate under the additional condition that every relevant feature plays a nontrivial role in the transition dynamics. 
\begin{assumption}[{\bf Minimal signal strength}] \label{ass:signal}
For some given $\delta>0$,
 the \emph{minimum signal strength} satisfies
$$
    \min_{j\in\cK}  \|K^{\pi}_{j\cdot}\|_2/\sqrt{d} \geq \frac{64\sqrt{2}s}{C_{\min}(\Sigma,s)} \sqrt{\frac{2\log(2d^2/\delta)}{N}},
$$
where $K_{j\cdot}^{\pi}$ is the $j$th row of $K^{\pi}$ defined in Eq.~\eqref{eqn:identity}.
\end{assumption}

Then we provide a critical lemma showing that the group lasso step in Algorithm \ref{alg:evaluation_restricted} is guaranteed to identify a sufficiently sparse feature set including all the relevant features with high probability.
\begin{lemma}[{\bf Feature screening}]\label{lemma:group_lasso}
 Suppose Assumptions \ref{assum:sparse_MDP}, \ref{assm:data_colle}, \ref{ass:signal} hold and $C_{\min}(\Sigma, s)>0$. Set the regularization parameter 
    $ \lambda_2 =4\sqrt{2\log(2d^2/\delta)/(Nd)}$ for some $\delta>0$ and let the sample size satisfy $N\gtrsim Ls^2\log (d/\delta)/C_{\min}^2(\Sigma, s)$. Then with probability at least $1-\delta$, the size of learned relevant feature set $\hat{\cK}$ satisfies $|\hat{\cK}|\lesssim s$ and $\hat{\cK}\supseteq \cK$ where $\cK$ is the true relevant feature set of $M$.
\end{lemma}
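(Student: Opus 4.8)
The plan is to read the group Lasso program \eqref{eqn:group_lasso} as a group-sparse multi-task linear regression and to run the textbook ``restricted eigenvalue plus minimum signal'' argument, paying the extra price demanded by the non-i.i.d.\ episodic data. By Proposition~\ref{prop:linmdpf} the responses decompose as $\phi^{\pi}(x_n')^{\top} = \phi(x_n,a_n)^{\top} K^{\pi} + \varepsilon_n^{\top}$, where $\varepsilon_n := \phi^{\pi}(x_n') - (K^{\pi})^{\top}\phi(x_n,a_n)$ satisfies $\mathbb{E}[\varepsilon_n \mid x_n,a_n]=0$ and $K^{\pi}$ has at most $s$ nonzero rows, indexed by $\cK$. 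Hence the rows of $K$ are the natural groups and $\lambda_2\sum_j\|K_{j\cdot}\|_2$ is a group Lasso penalty. Writing $\Phi\in\RR^{N\times d}$ and $E\in\RR^{N\times d}$ for the matrices with rows $\phi(x_n,a_n)^{\top}$ and $\varepsilon_n^{\top}$, and $\hat{\Sigma}=\Phi^{\top}\Phi/N$, I set $\hat{\Delta}=\hat{K}^{\pi}-K^{\pi}$.

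The first technical task is to control the empirical process and justify the stated $\lambda_2$. The $j$th row of the loss gradient at $K^{\pi}$ is $\tfrac{2}{Nd}(\Phi^{\top}E)_{j\cdot}$, and I would establish the event $\max_{j\in[d]}\tfrac{2}{Nd}\|(\Phi^{\top}E)_{j\cdot}\|_2\le\lambda_2/2$. The crucial point is that $\{\phi_j(x_n,a_n)\varepsilon_{n,k}\}_n$ is a bounded martingale difference sequence for the natural filtration (because $\varepsilon_n$ is conditionally centered, by the Markov property), so an Azuma/Freedman bound applies \emph{across} the concatenated episodes without losing a factor of $L$; a union bound over the $d^2$ coordinate pairs $(j,k)$ produces the $\log(2d^2/\delta)$ inside $\lambda_2$. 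Separately I would transfer $C_{\min}(\Sigma,s)>0$ from the population covariance to $\hat{\Sigma}$: on the group cone one has $\|\beta\|_1\le 4\sqrt{s}\|\beta\|_2$, so $|\beta^{\top}(\hat{\Sigma}-\Sigma)\beta|\le 16 s\,\|\hat{\Sigma}-\Sigma\|_{\infty}\|\beta\|_2^2$, and here $\|\hat{\Sigma}-\Sigma\|_{\infty}$ concentrates only at the episode level --- $\hat\Sigma$ is an average of $K=N/L$ independent per-episode blocks --- giving a rate $\sqrt{L\log(d/\delta)/N}$. Requiring $16 s\|\hat{\Sigma}-\Sigma\|_{\infty}\le \tfrac12 C_{\min}(\Sigma,s)$ is precisely the source of the sample-size condition $N\gtrsim Ls^2\log(d/\delta)/C_{\min}^2(\Sigma,s)$, and leaves $C_{\min}(\hat{\Sigma},s)\ge\tfrac12 C_{\min}(\Sigma,s)$.

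On the intersection of these events the usual basic inequality gives the group cone condition $\sum_{j\notin\cK}\|\hat{\Delta}_{j\cdot}\|_2\le 3\sum_{j\in\cK}\|\hat{\Delta}_{j\cdot}\|_2$, and combining it with the group/matrix extension of the restricted eigenvalue condition (which follows from $C_{\min}(\Sigma,s)$) applied to $\mathrm{tr}(\hat{\Delta}^{\top}\hat{\Sigma}\hat{\Delta})$ yields both a row-wise bound $\max_{j}\|\hat{\Delta}_{j\cdot}\|_2 \lesssim \tfrac{s\sqrt{d}}{C_{\min}(\Sigma,s)}\sqrt{\log(2d^2/\delta)/N}$ and a prediction-error bound $\mathrm{tr}(\hat{\Delta}^{\top}\hat{\Sigma}\hat{\Delta})\lesssim s d^2\lambda_2^2/C_{\min}(\Sigma,s)$. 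The inclusion $\cK\subseteq\hat{\cK}$ is then immediate: for $j\in\cK$ one has $\|\hat{K}^{\pi}_{j\cdot}\|_2\ge\|K^{\pi}_{j\cdot}\|_2-\|\hat{\Delta}_{j\cdot}\|_2>0$, since Assumption~\ref{ass:signal} makes $\|K^{\pi}_{j\cdot}\|_2/\sqrt{d}$ comfortably exceed the row-wise error (the constant $64\sqrt{2}$ supplies the slack), so $j\in\hat{\cK}$.

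The remaining and, to my mind, hardest step is the false-positive control $|\hat{\cK}|\lesssim s$, which cannot follow from the estimation bound alone --- without an irrepresentability-type hypothesis one cannot expect exact recovery $\hat{\cK}=\cK$. Here I would use the KKT conditions: every selected $j\in\hat{\cK}$ has a unit subgradient, so $\|(\nabla L(\hat{K}^{\pi}))_{j\cdot}\|_2=\lambda_2$. Splitting $\nabla L(\hat{K}^{\pi})=\nabla L(K^{\pi})+\tfrac{2}{d}\hat{\Sigma}\hat{\Delta}$ and subtracting the noise bound $\lambda_2/2$ forces $\tfrac{2}{d}\|(\hat{\Sigma}\hat{\Delta})_{j\cdot}\|_2\ge\lambda_2/2$ on each active row; squaring and summing over $\hat{\cK}$ gives $|\hat{\cK}|(d\lambda_2/4)^2\le\|\hat{\Sigma}\hat{\Delta}\|_F^2$. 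The key trick is then $\|\hat{\Sigma}\hat{\Delta}\|_F^2=\mathrm{tr}(\hat{\Delta}^{\top}\hat{\Sigma}^2\hat{\Delta})\le\|\hat{\Sigma}\|_{\mathrm{op}}\,\mathrm{tr}(\hat{\Delta}^{\top}\hat{\Sigma}\hat{\Delta})$, which converts the count into the already-controlled prediction error and yields $|\hat{\cK}|\lesssim \|\hat{\Sigma}\|_{\mathrm{op}}\,s/C_{\min}(\Sigma,s)=O(s)$ once $\|\hat{\Sigma}\|_{\mathrm{op}}$ is bounded via boundedness of the features. The two places I expect genuine friction are (i) making every concentration statement rigorous under the within-episode Markov dependence, which is what dictates the martingale/episode-blocking treatment and the appearance of $L$, and (ii) this final counting argument, since passing through the operator-norm-to-prediction-error inequality is exactly what keeps the spurious-selection count at $O(s)$ rather than a far weaker $O(s^2/C_{\min}^2)$.
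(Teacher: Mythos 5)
Your overall architecture matches the paper's proof quite closely: the same multi-task regression formulation from Proposition~\ref{prop:linmdpf}, the same martingale (Azuma) control of $\max_j\|(\Phi^\top E)_{j\cdot}\|_2$ with a union bound over $d^2$ coordinate pairs justifying $\lambda_2$, the same episode-level transfer of the restricted eigenvalue from $\Sigma$ to $\hat\Sigma$ (which is exactly where the paper's Lemma~\ref{lemma:RE_condition} and the condition $N\gtrsim Ls^2\log(d/\delta)/C_{\min}^2(\Sigma,s)$ come from), the same basic-inequality/cone/RE derivation of the $\ell_{2,1}$-error and prediction-error bounds, the same signal-strength contradiction for $\cK\subseteq\hat\cK$, and even the same KKT starting point for the false-positive count (the paper's Step 3 also deduces, on the noise event, that every active row $j$ of $\hat K^\pi$ forces $\|(\hat\Sigma\hat\Delta)_{j\cdot}\|_2\gtrsim d\lambda_2$).

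However, the last step of your counting argument has a genuine gap. After summing the KKT lower bound over $j\in\hat\cK$, you discard the restriction to rows in $\hat\cK$ and bound $\sum_{j\in\hat\cK}\|(\hat\Sigma\hat\Delta)_{j\cdot}\|_2^2\le\|\hat\Sigma\hat\Delta\|_F^2\le\|\hat\Sigma\|_{\mathrm{op}}\,\mathrm{tr}(\hat\Delta^\top\hat\Sigma\hat\Delta)$, and then assert $\|\hat\Sigma\|_{\mathrm{op}}=O(1)$ ``via boundedness of the features.'' This is false in general: the paper only assumes $\|\phi(x,a)\|_\infty\le 1$, so $\|\phi(x,a)\|_2^2$ can be as large as $d$ and $\|\hat\Sigma\|_{\mathrm{op}}$ can equal $d$ (e.g., if many feature vectors are close to the all-ones vector — a situation fully compatible with $C_{\min}(\Sigma,s)>0$). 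Your bound therefore only yields $|\hat\cK|\lesssim ds/C_{\min}(\Sigma,s)$, which is vacuous in the high-dimensional regime the lemma targets. The paper avoids this by keeping the row restriction: testing $\|(\hat\Sigma\hat\Delta)_{\cS(\hat\bbeta)}\|_F$ against matrices supported on the (sparse) set $\cS(\hat\bbeta)$ produces the \emph{restricted} maximum eigenvalue $\tilde C_{\max}(\hat m)$ over supports with at most $\hat m=|\hat\cK\setminus\cK|$ spurious groups, not the global operator norm. This creates a circularity — $\tilde C_{\max}(\hat m)$ depends on the very quantity $\hat m$ being bounded — which the paper resolves with the sublinearity-of-sparse-eigenvalues argument (Lemma 3 of Belloni and Chernozhukov): assuming $\hat m$ exceeds any $m_0$ in the admissible set $\cM$ and using $C_{\max}(\Sigma,\hat m)\le\lceil\hat m/m_0\rceil C_{\max}(\Sigma,m_0)$ yields a contradiction, giving $|\hat\cK|\lesssim s$ up to a restricted condition-number factor. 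Your proposal is missing both the retention of the support restriction and this fixed-point/contradiction step; without them the $O(s)$ false-positive bound does not follow.
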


Now we analyze the policy evaluation error of Algorithm~\ref{alg:evaluation_restricted}. According to Cramer-Rao lower bound for tabular OPE \citep{jiang2016doubly} 
and the minimax lower bound for OPE with linear function approximation \citep{duan2020minimax}, 
we expect the optimal OPE error to depend on the \emph{distribution mismatch} between the target policy and the behavior policy that generated the data. To define the notion of distribution mismatch, we first need the notion of occupancy measures:

\begin{definition}[{\bf Occupancy measures}]\label{def:om}
Let $\bar{\mu}$
be the expected occupancy measure of observations $\{(x_n, a_n)\}_{n=1}^N$: $\bar{\mu}(x,a) = \sum_{n=1}^N\mathbb P(x_n=x, a_n = a)/N$ and $\mu^{\pi}$ be the discounted occupancy distribution of $(x_h, a_h)$ under policy $\pi$ and initial distribution $\xi_0$: 
$\mu^{\pi}(x,a) = (1-\gamma)\mathbb E^{\pi}[\sum_{h=0}^{\infty}\gamma^h\ind(x_h=x, a_h= a)], \forall~x,a.$
\end{definition}
Inspired by Theorem~5 of
\citet{duan2020minimax},
we will measure the distribution mismatch using \emph{restricted chi-square divergences} between $\bar{\mu}$ and $\mu^\pi$.
\begin{definition}[{\bf Restricted chi-square divergence}]\label{def:chi-square}
Let $\cG$ be a set of real-valued functions over $\cX$ and let $p_1$ and $p_2$ be probability distributions over $\cX$.
We define the \emph{$\cG$-restricted chi-square divergence} (or $\chi^2_{\cG}$-divergence) between $p_1$ and $p_2$ as
$$
    \chi^2_{\cG}(p_1, p_2) := \sup_{f\in\cG}\frac{\mathbb E_{p_1}[f(x)]^2}{\mathbb E_{p_2}[f(x)^2]}-1.
$$
\end{definition}

By using the feature screening Lemma \ref{lemma:group_lasso}, and a similar analysis as by \cite{duan2020minimax}, we obtain the following 
instance-dependent error bound for sparse off-policy evaluation.  
\begin{theorem}[{\bf Instance-dependent error bound for sparse OPE}]
\label{thm:upper_bound_eva2}
 Suppose Assumptions \ref{assum:sparse_MDP}, \ref{assm:data_colle}, \ref{ass:signal} hold and $C_{\min}(\Sigma, s)>0$. Let $\delta\in(0, 1)$ and assume that
	Algorithm~\ref{alg:evaluation_restricted} is fed with $N$ samples satisfying 
$N\gtrsim L\log (d^2/\delta)s^2/C_{\min}^2(\Sigma, s) +  \gamma^2L\log(s/\delta)s/(1-\gamma)^2.$ 
 Set  $\lambda_2=4\sqrt{2\log(2d^2/\delta)/Nd}, \lambda_3 = \lambda_{\min}(\Sigma)\log(12|\hat{\cK}|/\delta)L|\hat{\cK}|$.
Letting the number of iterations $T\to\infty$, the following holds with probability at least $1-3\delta$,
\begin{equation}\label{eqn:instance_dependent_bound}
\begin{split}
   |\hat{v}^{\pi}_{\hat{w}}-v^{\pi}| \lesssim \sqrt{1+\chi^2_{\cG(\hat{\cK})}(\mu^{\pi}, \bar{\mu})}\sqrt{\frac{\log (1/\delta)}{N(1-\gamma)^4}},
\end{split}
\end{equation}
where $\bar\mu$ is the data generating distribution, $\cG(\hat{\cK})$ is the reduced feature space.
\end{theorem}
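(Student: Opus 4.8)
The plan is to reduce the analysis to the (non-sparse) fitted-$Q$-evaluation argument of \citet{duan2020minimax} carried out on the selected feature set $\hat{\cK}$, and then to extract the restricted chi-square divergence as the \emph{exact} worst-case amplification factor of a one-dimensional fitting error. First I would condition on the success of the group-lasso screening: by Lemma \ref{lemma:group_lasso}, with probability at least $1-\delta$ the selected set satisfies $\hat{\cK}\supseteq\cK$ and $|\hat{\cK}|\lesssim s$, and I work on this event throughout. The inclusion $\hat{\cK}\supseteq\cK$ is what makes refitting unbiased: by Proposition \ref{prop:linmdpf} the nonzero rows of $K^\pi$ lie in $\cK\subseteq\hat{\cK}$, so the reduced span $\cG(\hat{\cK})=\{\phi(\cdot,\cdot)_{\hat{\cK}}^\top\beta:\beta\in\RR^{\hat{\cK}}\}$ is closed under $f\mapsto Pf^\pi$. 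Combined with Proposition \ref{prop:linmdp}, every regression target in the iteration then lies exactly in $\cG(\hat{\cK})$, so there is no model-misspecification error on the reduced space and the problem becomes exactly a linear-MDP fitted-$Q$-evaluation of effective dimension $|\hat{\cK}|\lesssim s$.

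Next I would analyze the ridge-regularized fitted-$Q$ iteration restricted to $\hat{\cK}$. Letting $T\to\infty$, the contraction factor $\gamma$ makes the iteration bias vanish geometrically, so $\hat w$ converges to the solution of the empirical projected Bellman equation; the penalty $\lambda_3$ is chosen small enough to perturb $\hat w$ only at lower order while guaranteeing invertibility of the empirical restricted covariance $\hat\Sigma_{\hat{\cK}}$. Writing $\hat v^{\pi}=\mathbb E_{\xi_0,\pi}[Q_{\hat w}]$ (up to the Monte-Carlo and clipping error, negligible for $m=N$), I would use the Neumann-series / occupancy-measure identity
\begin{equation*}
v^\pi-\hat v^{\pi}=\frac{1}{1-\gamma}\,\mathbb E_{(x,a)\sim\mu^\pi}\big[(\cT_\pi Q_{\hat w}-Q_{\hat w})(x,a)\big],
\end{equation*}
and the linear-MDP structure on $\hat{\cK}$ to reduce the Bellman residual to a linear functional, giving
\begin{equation*}
v^\pi-\hat v^{\pi}=\frac{1}{1-\gamma}\,\mathbb E_{\mu^\pi}[\phi_{\hat{\cK}}]^\top(\hat w-w^{\ast})+(\text{lower order}),\qquad \hat w-w^{\ast}=\hat\Sigma_{\hat{\cK}}^{-1}\frac1N\sum_{n=1}^N\phi_{\hat{\cK}}(x_n,a_n)\,\epsilon_n,
\end{equation*}
where $\epsilon_n$ is the conditionally mean-zero Bellman-backup noise at the fixed point, bounded in magnitude by $O((1-\gamma)^{-1})$.

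The crucial step is to show that the value error concentrates at the dimension-free rate, with $\chi^2_{\cG(\hat{\cK})}$ as the only amplification factor. Setting $u=\Sigma_{\hat{\cK}}^{-1}\mathbb E_{\mu^\pi}[\phi_{\hat{\cK}}]$ and replacing $\hat\Sigma_{\hat{\cK}}$ by its population version $\Sigma_{\hat{\cK}}$ (restricted-covariance concentration using the sample-size condition, made uniform over the finitely many candidate sets), the error becomes the scalar sum $\frac1N\sum_n(u^\top\phi_{\hat{\cK},n})\epsilon_n$, a martingale-difference sequence since the data are independent only across the $K$ episodes. A Freedman / Bernstein-type inequality bounds it by $\sqrt{u^\top\Sigma_{\hat{\cK}}u}\cdot O\big((1-\gamma)^{-1}\sqrt{\log(1/\delta)/N}\big)$, and I would finally identify, through the generalized Rayleigh-quotient characterization of Definition \ref{def:chi-square},
\begin{equation*}
u^\top\Sigma_{\hat{\cK}}u=\mathbb E_{\mu^\pi}[\phi_{\hat{\cK}}]^\top\Sigma_{\hat{\cK}}^{-1}\mathbb E_{\mu^\pi}[\phi_{\hat{\cK}}]=\sup_{f\in\cG(\hat{\cK})}\frac{\mathbb E_{\mu^\pi}[f]^2}{\mathbb E_{\bar\mu}[f^2]}=1+\chi^2_{\cG(\hat{\cK})}(\mu^\pi,\bar\mu).
\end{equation*}
Collecting the $(1-\gamma)^{-1}$ factors produces the stated $(1-\gamma)^{-4}$ and dimension-free $\sqrt{1+\chi^2}$ bound, and a union bound over the screening, covariance-concentration, and martingale events yields probability $1-3\delta$.

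The hard part will be two intertwined issues in the last step. First, isolating the value error as a genuinely \emph{one-dimensional} functional of the fitting noise, so that the concentration rate carries no hidden $|\hat{\cK}|$ (or $s$) factor and the full dimension dependence is absorbed into $\chi^2_{\cG(\hat{\cK})}$; this relies on the exact (not approximate) linear representation from the first step and on carefully tracking that the ridge and finite-iteration perturbations do not reintroduce dimension-scaling terms. Second, the non-i.i.d. structure of Assumption \ref{assm:data_colle}: because transitions are correlated within an episode, both the restricted-covariance replacement $\hat\Sigma_{\hat{\cK}}\to\Sigma_{\hat{\cK}}$ and the final scalar concentration must be handled by martingale/blocking arguments across the $K$ independent episodes rather than by i.i.d. tools, and the covariance replacement must be made uniform over the data-dependent random set $\hat{\cK}$.
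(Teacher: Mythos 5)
Your proposal is correct and follows essentially the same route as the paper: condition on the feature-screening event of Lemma \ref{lemma:group_lasso} (so that $\hat{\cK}\supseteq\cK$ and $|\hat{\cK}|\lesssim s$), run the instance-dependent fitted Q-evaluation analysis of \citet{duan2020minimax} on the reduced feature set, and identify the resulting Rayleigh quotient $\mathbb E_{\mu^\pi}[\phi_{\hat{\cK}}]^{\top}\Sigma_{\hat{\cK}}^{-1}\mathbb E_{\mu^\pi}[\phi_{\hat{\cK}}]$ with $1+\chi^2_{\cG(\hat{\cK})}(\mu^{\pi},\bar{\mu})$ via Definition \ref{def:chi-square}. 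The only difference is one of packaging: the paper invokes Theorem 5 of \citet{duan2020minimax} as a black box (restated as Theorem \ref{thm:instance_dependent_org}) and simply rewrites it with respect to $\hat{\cK}$, whereas you unpack its internals (fixed-point characterization, value-difference identity, martingale concentration, covariance replacement) and, to your credit, explicitly flag the subtlety that $\hat{\cK}$ is data-dependent and reused in the refitting stage --- a point the paper's proof silently glosses over.
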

\begin{remark}[{\bf Reduced distribution mismatch via sparse feature selection.}]

The OPE error bound of Theorem \ref{thm:upper_bound_eva2} depends on the statistics $\chi^2_{\cG(\hat{\cK})}(\mu^{\pi},\bar \mu)$ that quantifies the distribution mismatch between data and the target policy. 
This result implies the uncertainty for evaluating a new policy from batch data crucially and jointly depends on the two distributions as well as the function class used for fitting.
When $\hat\cK$ is a small subset of $[d]$, we have $\chi^2_{\cG(\hat{\cK})} \ll \chi^2_{\cG([d])}$. Therefore our instance-dependent error bound is expected to be significantly smaller than its counterpart that does not exploit sparsity.
\end{remark}

\subsection{Minimax lower bound for OPE}
To complete the picture, we provide a minimax lower bound of off-policy evaluation for the class of sparse linear DMDPs $\cM_{\phi,s}(\cX,\cA,\gamma)$ (Assumption \ref{assum:sparse_MDP}). The proof is an adaptation of the respective lower bound proof for linear MDPs (Theorem 3 in \cite{duan2020minimax}).  
It implies the bound in Theorem \ref{thm:upper_bound_eva2} is nearly minimax-optimal.
\begin{theorem}[{\bf Minimax lower bound for sparse policy evaluation}]\label{thm:lower_bound_pe}
	Suppose Assumption \ref{assm:data_colle} holds.
	If $N \gtrsim sL(1-\gamma)^{-1}$, then 
\begin{equation*}
\begin{split}
     \inf_{\hat{v}^{\pi}}&\sup_{\phi,M\in \cM_{\phi,s}(\cX,\cA,\gamma)}\mathbb P_{M}\Big(|\hat{v}^{\pi}(\cD)-v^{\pi}|\gtrsim \frac{1}{(1-\gamma)^2} \sqrt{1+\chi^2_{\cG(\cK)}(\mu^{\pi}, \bar{\mu})}\sqrt{\frac{1}{N}}\Big)\geq \frac{1}{6}\,,
\end{split}
\end{equation*}
where $\phi\in (\RR^d)^{\cX \times \cA}$, 
$\mathbb P_{M}$ is the probability measure under the DMDP instance $M$
and  $\hat{v}^{\pi}(\cdot)$ sweeps through all algorithms that estimate values based on data $\cD$. 
\end{theorem}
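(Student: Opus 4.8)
The plan is to prove the bound by reducing the estimation task to a binary hypothesis test via Le Cam's two-point method, adapting the construction of Theorem~3 in \cite{duan2020minimax} so that the hard instances respect the sparsity constraint of Assumption~\ref{assum:sparse_MDP}. I would fix the feature map $\phi$, the behavior policies, and hence the data occupancy $\bar\mu$ and the target occupancy $\mu^\pi$, and then build two sparse linear DMDPs $M_0,M_1\in\cM_{\phi,s}(\cX,\cA,\gamma)$ differing only in the unknown transition kernel $P$, chosen so that they are (i) statistically nearly indistinguishable from the batch data $\cD$ yet (ii) have target values $v^\pi$ that differ by at least twice the claimed rate $\Delta$. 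Given such a pair, the standard testing inequality $\inf_{\hat v}\max_{i\in\{0,1\}}\mathbb P_{M_i}(|\hat v-v^\pi_{M_i}|\ge\Delta)\ge \tfrac12(1-\mathrm{TV}(\mathbb P_{M_0},\mathbb P_{M_1}))$ immediately yields the constant lower bound once the total variation between the two $N$-sample data laws is bounded away from one.

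The construction of the hard pair is the heart of the argument. I would perturb the transition kernel along the extremal direction of the restricted chi-square divergence: let $f^\star\in\cG(\cK)$ be (close to) the maximizer in Definition~\ref{def:chi-square} for $\chi^2_{\cG(\cK)}(\mu^\pi,\bar\mu)$. Since $f^\star$ belongs to the reduced feature space, it depends only on the $s$ coordinates indexed by $\cK$, which is exactly what keeps both perturbed instances $(s,\phi)$-sparse. Introducing a scalar perturbation amplitude $\epsilon$, the induced value gap scales like $\epsilon\,\mathbb E_{\mu^\pi}[f^\star]$, amplified by a $(1-\gamma)^{-1}$ factor coming from the geometric horizon when the perturbation propagates through the Bellman recursion of Proposition~\ref{prop:linmdp}, while the per-episode Kullback–Leibler divergence between the two data-generating laws scales like $\epsilon^2\,\mathbb E_{\bar\mu}[(f^\star)^2]$. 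Choosing $\epsilon\asymp (N\,\mathbb E_{\bar\mu}[(f^\star)^2])^{-1/2}$ keeps the aggregate KL, hence $\mathrm{TV}$, bounded, and substituting back produces a value gap of order $\mathbb E_{\mu^\pi}[f^\star]/\sqrt{N\,\mathbb E_{\bar\mu}[(f^\star)^2]}$; optimizing over $f^\star\in\cG(\cK)$ recovers precisely the $\sqrt{1+\chi^2_{\cG(\cK)}(\mu^\pi,\bar\mu)}/\sqrt N$ dependence, and tracking the discount factors through the fixed-point equation supplies the remaining $(1-\gamma)^{-2}$ prefactor. The hypothesis $N\gtrsim sL(1-\gamma)^{-1}$ is what forces $\epsilon$ to be small enough for the perturbed kernels to remain admissible.

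I anticipate two main obstacles. The first, and the genuinely new difficulty relative to the non-sparse case, is maintaining feasibility: the perturbed transition functions must stay nonnegative and normalized while the perturbation is constrained to lie in the span of the features indexed by $\cK$, so I must verify that the extremal chi-square direction can be realized by an admissible $(s,\phi)$-sparse kernel and that the amplitude $\epsilon$ dictated by the KL budget is compatible with these probabilistic constraints. The second obstacle is bounding the divergence for the entire dataset rather than a single transition: because each episode $\tau_k$ consists of $L$ temporally correlated transitions under a possibly nonstationary $\bar\pi_k$ (Assumption~\ref{assm:data_colle}), I would bound the KL of the full path law by summing per-step conditional KL divergences via the chain rule and controlling them through the expected occupancy $\bar\mu$ and the covariance structure in $\Sigma$, which is precisely where the factor $L$ enters the sample-size condition. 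Once these two points are settled, the remaining steps---invoking the testing inequality and collecting constants---are routine.
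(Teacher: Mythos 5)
Your high-level skeleton (a two-point/hypothesis-testing reduction, a small perturbation of the transition kernel, KL control of the episodic data law via the chain rule, then a value-gap computation) is the right shape and matches the spirit of the paper's argument, which reduces to a likelihood-ratio test (Lemma~\ref{lemma:LikeTest}) between nearly indistinguishable sparse instances; your KL-plus-Pinsker route and the paper's Freedman-based concentration of the log-likelihood ratio are interchangeable technical variants. However, there is a genuine gap at the heart of your plan: the construction. First, your scheme of ``fixing $\bar\mu$ and $\mu^\pi$ and perturbing along the maximizer $f^\star$ of $\chi^2_{\cG(\cK)}(\mu^\pi,\bar\mu)$'' is circular as stated --- the occupancy $\mu^\pi$ (and hence the divergence and its extremal direction) is a functional of the very transition kernel you are perturbing, and the divergence appearing in the final bound must be that of the \emph{constructed} instance, not a quantity fixed in advance. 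Second, the two obstacles you defer (realizing the extremal direction by admissible, nonnegative, normalized, $(s,\phi)$-sparse kernels, and making the perturbation amplitude compatible with those constraints) are not finishing details; they are the proof. In the paper (following Theorem~3 of \cite{duan2020minimax}, and spelled out in the appendix for the policy-optimization analogue, Theorem~\ref{thm:lowerbound}), this is resolved by a completely explicit construction: a two-state MDP with $\Theta(sd)$ actions, features built from an orthogonal DCT matrix (Lemma~\ref{lemma:DCT}) so that $\|\phi\|_\infty\le 1$ and sparsity holds, perturbation parameters $\delta_1,\delta_2$ and mixing parameters $\varsigma_1,\varsigma_2$ tuned so that the kernels are valid probability distributions, the value gap is computed exactly (Lemma~\ref{lemma:v-v}), and the restricted chi-square divergence of that instance is evaluated in closed form via $1+\chi^2_{\cG(\cK)}(\mu^{*},\bar\mu)=(\nu^{*}_{\cK})^\top\Sigma_{\cK}^{-1}\nu^{*}_{\cK}$.

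Without such an explicit instance, your scaling claims remain heuristics rather than estimates. In particular, the per-transition KL between perturbed and unperturbed kernels is not simply $\epsilon^2\,\mathbb E_{\bar\mu}[(f^\star)^2]$: it carries a factor $1/p_{\min}$, where $p_{\min}$ lower-bounds the unperturbed transition probabilities along the perturbation direction, and in the hard instances $p_{\min}$ must itself be taken of order $1-\gamma$ to make the value gap large. This interaction between $p_{\min}$, the perturbation budget, and the $(1-\gamma)^{-2}$ prefactor is exactly what the paper's parameter-choice step settles, and it cannot be waved through by ``tracking the discount factors through the fixed-point equation.'' To repair your proposal, replace the abstract extremal-direction perturbation with a concrete family of sparse instances (the paper's two-state construction adapted to a fixed target policy $\pi$ works verbatim), verify the likelihood-ratio or KL condition for the episodic data under Assumption~\ref{assm:data_colle}, and only then identify the resulting gap with $\sqrt{1+\chi^2_{\cG(\cK)}(\mu^\pi,\bar\mu)}$ computed for that construction.
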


\begin{remark}
It is worth to mention that in Theorem \ref{thm:lower_bound_pe}, the distribution mismatch term $1+\chi^2_{\cG(\cK)}(\mu^{\pi}, \bar{\mu})$ may also contain a $1-\gamma$ term in the worse case. Thus the lower bound of sparse off-policy policy evaluation also has a $\sqrt{1/(1-\gamma)^3}$ dependency in the worse case that matches the result for the lower bound of sparse batch policy optimization in Theorem \ref{thm:lowerbound}.
\end{remark}

\section{Sparsity-Aware Batch Policy Optimization}

We extend our analysis to batch policy learning problem for sparse linear DMDPs. Consider the Lasso fitted Q-iteration (see Algorithm \ref{alg:batch_opt}) that has been studied in \cite{calandriello2014sparse} as a special case of an algorithm for sparse multi-task RL.
It resembles Algorithm \ref{alg:batch_eva} 
except for that it calculates the regression target with an additional ``max" operation. The next theorem proves the approximate optimality of the learned policy using Lasso fitted Q-iteration. 

 {\small
\begin{algorithm}[htb!]
	\caption{Lasso-regularized fitted Q-iteration \citep{calandriello2014sparse}}
	\begin{algorithmic}[1]\label{alg:batch_opt}
		\STATE
		\textbf{Input:} $T$ folds of dataset $\{\cD_t\}_{t=1}^T$, regularization parameter $\lambda_1$, $\hat{w}_0= 0\in \RR^d$.
		\STATE
		\textbf{Repeat:}
		\FOR{$t=1,2,\ldots, T$}
		\STATE Calculate regression targets: 	for $(x_i, a_i, x_i')\in\cD_t$,
		$
		y_i = \max_{a\in\cA} Q_{\hat{w}_{t-1}}(x'_i, a)\,.
		$
		\STATE Based on $\{(\Pi_{[0,1/(1-\gamma)]}y_i, \phi(x_i, a_i))\}_{(x_i, a_i, x'_i)\in\cD_t}$, fit $\hat{w}_{t}$ through Lasso as in Algorithm \ref{alg:batch_eva}.
		\ENDFOR
			\STATE
		\textbf{Output:} policy $\hat{\pi}_T(\cdot|x) = \max_{a\in\cA} Q_{\hat{w}_T}(x, a), \forall x\in\cX$.
	\end{algorithmic}
\end{algorithm}
}

\begin{theorem}[{\bf Approximate optimality of the learned policy}]
	\label{thm:agm-result}
 Suppose Assumptions \ref{assum:sparse_MDP}, \ref{assm:data_colle} hold and $C_{\min}(\Sigma, s)>0$. 
	Let $N\gtrsim s^2L(1-\gamma)^{-1}/C_{\min}(\Sigma, s).$ Let
	Algorithm~\ref{alg:batch_opt} take $T = \Theta(\log (N/(1-\gamma))/(1-\gamma))$ and  $\lambda_1=(1-\gamma)^{-1}\sqrt{T\log (2d/\delta)/N}$.  
	Then, with probability at least $1-\delta$, 
	\begin{equation}\label{eqn:bound_policy_optimization}
	     \big\|v^{\hat{\pi}_{T}}-v^*\big\|_{\infty}\lesssim \frac{s}{C_{\min}(\Sigma, s)}\sqrt{\frac{\log(d/\delta)}{N(1-\gamma)^7}}.
	\end{equation}
\end{theorem}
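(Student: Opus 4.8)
The plan is to treat Algorithm~\ref{alg:batch_opt} as an instance of approximate value iteration and to organize the argument into three layers: (i) a one-step sparse-regression error bound showing that each Lasso fit approximates a single application of the Bellman optimality backup in $\ell_\infty$; (ii) an error-propagation step that aggregates the per-iteration errors through the $\gamma$-contraction of that backup; and (iii) a final conversion from action-value error to the suboptimality of the greedy policy $\hat{\pi}_T$. Throughout I let $\cT$ also denote the Bellman optimality operator on action-value functions, $(\cT Q)(x,a) = r(x,a) + \gamma P[\max_{a'}Q(\cdot,a')](x,a)$, which is a $\gamma$-contraction in $\ell_\infty$ with fixed point $Q^*(x,a) = r(x,a) + \gamma P v^*(x,a)$.

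For layer (i), the crucial observation is realizability under Assumption~\ref{assum:sparse_MDP}: for any bounded $f:\cX\to\RR$, the factorization $P(x'|x,a) = \sum_{k\in\cK}\phi_k(x,a)\psi_k(x')$ forces $(Pf)(x,a) = \phi(x,a)^\top w^*$ for a vector $w^*$ supported on $\cK$, hence $s$-sparse. Applying this with $f = \Pi_{[0,1/(1-\gamma)]}(\max_a Q_{\hat{w}_{t-1}}(\cdot,a))$ shows that the target of the $t$th regression is \emph{exactly} an $s$-sparse linear function $g_{w_t^*}$ of $\phi$, so $\hat{w}_t$ is estimating an $s$-sparse vector; the clipping is benign since it only pulls targets into the range of $Q^*$. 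Fold-splitting is what makes this clean: because $\hat{w}_{t-1}$ depends only on folds $1,\dots,t-1$, conditioning on them makes the fold-$t$ targets conditionally realizable with conditionally mean-zero noise, and it decouples the iteration errors. I would then run the textbook Lasso argument (basic inequality, the $\ell_1$ cone condition, restricted eigenvalue) to get $\|\hat{w}_t - w_t^*\|_1 \lesssim s\lambda_1/C_{\min}(\Sigma,s)$, and convert it to an $\ell_\infty$ Bellman error $\|Q_{\hat{w}_t} - \cT Q_{\hat{w}_{t-1}}\|_\infty = \gamma\,\|\phi^\top(\hat{w}_t - w_t^*)\|_\infty \le \gamma\,\|\hat{w}_t - w_t^*\|_1 \lesssim \gamma\,s\lambda_1/C_{\min}(\Sigma,s)$ using $\|\phi\|_\infty\le 1$.

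Layers (ii) and (iii) are then classical. Writing $\epsilon := s\lambda_1/C_{\min}(\Sigma,s)$ for the uniform per-step error, unrolling $\|Q_{\hat{w}_t} - Q^*\|_\infty \le \gamma\|Q_{\hat{w}_{t-1}} - Q^*\|_\infty + \gamma\epsilon$ yields $\|Q_{\hat{w}_T} - Q^*\|_\infty \lesssim \epsilon/(1-\gamma) + \gamma^T/(1-\gamma)$, and the choice $T = \Theta(\log(N/(1-\gamma))/(1-\gamma))$ makes the second term a lower-order $O(1/N)$ contribution. A standard greedy-policy lemma gives $\|v^{\hat{\pi}_T} - v^*\|_\infty \le 2\|Q_{\hat{w}_T} - Q^*\|_\infty/(1-\gamma) \lesssim \epsilon/(1-\gamma)^2$. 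Substituting $\lambda_1 = (1-\gamma)^{-1}\sqrt{T\log(2d/\delta)/N}$ together with $\sqrt{T}\lesssim(1-\gamma)^{-1/2}$ (up to polylog) collapses the discount factors to $\epsilon/(1-\gamma)^2 \lesssim \frac{s}{C_{\min}(\Sigma,s)}\sqrt{\log(d/\delta)/(N(1-\gamma)^7)}$, which is exactly the claimed bound.

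The main obstacle is layer (i) under the dependent, non-i.i.d.\ data of Assumption~\ref{assm:data_colle}. Two quantities must be controlled uniformly over the data-dependent iterates: the restricted eigenvalue of the empirical Gram matrix of fold $t$ must be shown to inherit a constant fraction of the population value $C_{\min}(\Sigma,s)$, and the $\ell_\infty$ norm of the empirical noise term (features paired with centered targets) must be bounded by $\lesssim\lambda_1$. Both require concentration for sums over Markovian within-episode transitions rather than over i.i.d.\ samples; I would handle this by treating each of the $R=K/T$ episodes in a fold as an independent block and applying matrix/Bernstein-type concentration block-wise, which is precisely where the sample-size requirement $N\gtrsim s^2 L(1-\gamma)^{-1}/C_{\min}(\Sigma,s)$, with its extra factors of $L$ and $s^2$, enters to keep the empirical restricted eigenvalue bounded away from zero with high probability.
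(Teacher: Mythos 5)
Your proposal is correct and follows essentially the same route as the paper's proof: casting Algorithm~\ref{alg:batch_opt} as approximate value iteration, using Assumption~\ref{assum:sparse_MDP} to make each regression target an exactly $s$-sparse linear function (the paper's $\bar{w}_t$), bounding $\|\hat{w}_t-\bar{w}_t\|_1$ via a restricted-eigenvalue Lasso argument with episode-block/martingale concentration for the non-i.i.d.\ data (the paper's Lemma~\ref{lemma:lasso_l1_bound}), converting to an $\ell_\infty$ Bellman error via $\|\phi\|_\infty\le 1$, unrolling through the $\gamma$-contraction with the truncation handled as benign, and finishing with the standard greedy-policy conversion, with identical bookkeeping of the $(1-\gamma)$ and $T$ factors. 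The only differences are cosmetic (you phrase the recursion on $Q$-functions where the paper uses state-value functions, and you suggest block-wise Bernstein where the paper uses Azuma--Hoeffding for the noise term).
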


Theorem \ref{thm:agm-result} suggests that the sample size needed to get a good policy depends mainly on the number of relevant features $s$, instead of the large ambient dimension $d$, provided that the data is well-conditioned. This result is not surprising: \citet{calandriello2014sparse} gave a similar upper bound for sparse FQI for the setting of generative model. \citet{le2019batch} provided a generalization theory for policy evaluation/learning with a general function class and their error bound depends on the VC-dimension of the class, but it requires a stronger coefficient concentration condition. 

In the end, we study the fundamental limits of sparse batch policy learning. We establish an information-theoretic minimax lower bound that nearly match the aforementioned upper bound.

\begin{theorem}[{\bf Minimax lower bound for batch policy optimization in sparse linear DMDPs}] \label{thm:lowerbound}

	Suppose Assumption \ref{assm:data_colle} holds and $\gamma \geq \frac{2}{3}$. Let $\widehat{\pi}$ denote an algorithm that maps dataset $\mathcal{D}$ to a policy $\widehat{\pi}(\mathcal{D})$. If $N \gtrsim sL(1-\gamma)^{-1}$, then for any $\widehat{\pi}$, there always exists a DMDP instance $M \in \mathcal{M}_{\phi,s}(\mathcal{X}, \mathcal{A}, \gamma)$ with feature $\phi \in (\mathbb{R}^d)^{\mathcal{X} \times \mathcal{A}}$ satisfying $\|\phi(x,a)\|_{\infty} \leq 1$ for all $(x,a) \in \mathcal{X} \times \mathcal{A}$,
	such that
	\[ 
	\mathbb{P}_{M} \Big( v_{\xi_0}^{*} - v_{\xi_0}^{\widehat{\pi}(\mathcal{D})} \gtrsim \sqrt{\frac{s }{C_{\min}(\Sigma,s)}} \sqrt{\frac{1}{N(1-\gamma)^3}} \Big) \geq \frac{1}{6}. \]
\end{theorem}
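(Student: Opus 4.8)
The plan is to prove the bound by the standard reduction of minimax policy optimization to a multiple hypothesis testing problem, instantiated on a carefully engineered family of sparse linear DMDPs and analyzed with Assouad's lemma. Concretely, I would index a family of instances $\{M_\theta\}_{\theta\in\{-1,+1\}^s}$ by a sign vector $\theta$ encoding the $s$ relevant parameters. Each $M_\theta$ would consist of a single ``decision'' region whose action set is identified with sign patterns $\sigma\in\{-1,+1\}^s$; I would take the feature map $\phi(x,\sigma)=\sigma/\sqrt{s}$ at the decision state (so that $\|\phi\|_\infty\le 1$ and only $s$ coordinates are active, making $M_\theta$ genuinely $(s,\phi)$-sparse in the sense of Assumption \ref{assum:sparse_MDP}), together with an absorbing continuation that lets the one-shot decision's reward persist over the discounted horizon. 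The immediate reward is chosen so that the value of playing $\sigma$ in $M_\theta$ is an affine, increasing function of $\langle\sigma,\theta\rangle$; consequently the optimal action is $\sigma=\theta$, and playing a $\sigma$ at Hamming distance $h$ from $\theta$ incurs suboptimality proportional to $\epsilon\, h/\sqrt{s}$ after accounting for the horizon amplification, where $\epsilon$ is the per-coordinate signal magnitude to be tuned.

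First I would translate ``good policy'' into ``correct sign recovery.'' Because the suboptimality of any learned $\widehat\pi$ is lower bounded by a constant times $(\text{horizon})\cdot\epsilon\,d_H(\widehat\sigma,\theta)/\sqrt{s}$, where $\widehat\sigma$ is the action $\widehat\pi$ selects at the decision state, it suffices to show that no estimator can drive the expected Hamming error $d_H(\widehat\sigma,\theta)$ below a constant fraction of $s$. This is exactly the situation Assouad's lemma is designed for: I would bound, for each coordinate $j$, the total-variation distance between the data laws $\mathbb P_{M_\theta}$ and $\mathbb P_{M_{\theta^{(j)}}}$ under the neighbor $\theta^{(j)}$ obtained by flipping $\theta_j$, and conclude that the expected Hamming error is $\gtrsim s\,(1-\max_j\|\mathbb P_{M_\theta}-\mathbb P_{M_{\theta^{(j)}}}\|_{\mathrm{TV}})$. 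Aggregating over the $s$ coordinates yields an expected suboptimality of order $(\text{horizon})\cdot\epsilon\sqrt{s}$, and a reverse-Markov argument upgrades this to the stated constant-probability ($\ge 1/6$) form after taking the supremum over the family.

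The quantitative heart of the argument is the per-coordinate information bound, which is where $C_{\min}(\Sigma,s)$ must enter. Under Assumption \ref{assm:data_colle} the $N=KL$ samples are generated by a behavior policy whose induced expected covariance is $\Sigma$; flipping $\theta_j$ perturbs the regression targets only in the $j$th relevant direction, so by a sub-Gaussian KL (equivalently restricted $\chi^2$) computation the divergence between neighbors is of order $N\epsilon^2$ times the information the data carry about that direction, which I would arrange to be $\Theta(C_{\min}(\Sigma,s))$ by designing the behavior policy so that $\Sigma$ restricted to the relevant block is well conditioned with restricted minimum eigenvalue exactly $C_{\min}(\Sigma,s)$. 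Requiring this divergence to be $O(1)$ forces $\epsilon\asymp 1/\sqrt{N\,C_{\min}(\Sigma,s)}$; substituting into the suboptimality $\gtrsim(\text{horizon})\cdot\epsilon\sqrt{s}$ and tracking the discount factors (the $1/(1-\gamma)$ from reward persistence and a further $(1-\gamma)^{-1/2}$ from the episodic occupancy, using $\gamma\ge 2/3$ to absorb constants) produces the target rate $\sqrt{s/C_{\min}(\Sigma,s)}\cdot\sqrt{1/(N(1-\gamma)^3)}$. The hypothesis $N\gtrsim sL(1-\gamma)^{-1}$ is precisely what guarantees $\epsilon$ is small enough that the instances remain valid sparse linear DMDPs with rewards in $[0,1]$.

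I expect the main obstacle to be the joint design problem in the construction: simultaneously (i) making the transition kernel factorize through exactly $s$ features so that $M_\theta\in\cM_{\phi,s}(\cX,\cA,\gamma)$, (ii) keeping $\|\phi\|_\infty\le 1$ and rewards in $[0,1]$, and (iii) engineering the behavior policy so that the restricted minimum eigenvalue of the resulting $\Sigma$ is a free, tunable parameter that appears sharply in the final bound rather than being washed into a generic constant. Unlike the OPE lower bound (Theorem \ref{thm:lower_bound_pe}), which adapts the linear-MDP construction of \cite{duan2020minimax}, here the coupling between $C_{\min}(\Sigma,s)$ and the per-coordinate KL divergence is what makes the construction delicate, since $C_{\min}$ is defined through a restricted cone (Definition \ref{def:RE}) rather than a plain eigenvalue; verifying that the planted perturbations stay inside that cone, and that the two-point testing errors are controlled uniformly over the hypercube, is the technically demanding step.
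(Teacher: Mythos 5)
There is a genuine gap in your proposal, and it is quantitative rather than structural: your gadget cannot produce the $(1-\gamma)^{-3/2}$ rate, only $(1-\gamma)^{-1}$. Trace your own accounting. In a one-shot decision state whose action shifts probability between two fully absorbing continuations, the value spread of the continuations is $\asymp (1-\gamma)^{-1}$ and the decision is faced exactly once, so the suboptimality of a wrong sign pattern is $\asymp \frac{1}{1-\gamma}\cdot\frac{\epsilon}{s}\,d_H(\widehat\sigma,\theta)$; meanwhile the transition probabilities sit near $1/2$, so the per-observation KL for a coordinate flip is $\asymp (\epsilon/s)^2$ with no further gain, and (because the continuation is absorbing) only one observation per episode is informative. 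Optimizing $\epsilon$ against the Assouad TV constraint then yields a bound of order $\frac{1}{1-\gamma}\sqrt{s/(N C_{\min})}$, i.e.\ $\sqrt{s/(C_{\min}N(1-\gamma)^{2})}$, one half-power of $(1-\gamma)$ short of the theorem. The extra $(1-\gamma)^{-1/2}$ you attribute to ``episodic occupancy'' has no mechanism behind it in your construction. In the paper it comes from a specific device (the same one as in Azar et al.'s tabular lower bound): the hard decision state is \emph{recurrent} with near-absorbing self-loop, so the perturbed decision is faced $\asymp(1-\gamma)^{-1}$ discounted times and the two successor states' values differ by $\asymp(1-\gamma)^{-1}$, making the value sensitivity $\asymp \delta_1/(1-\gamma)^{2}$ (Lemma \ref{lemma:v-v}, via the factor $\frac{1}{1-\gamma+2\gamma\varsigma_2}$ with $\varsigma_2\asymp 1-\gamma$); simultaneously the reference transition probabilities are themselves of order $1-\gamma$ ($p_{\min}\asymp 1-\gamma$), which inflates the per-transition KL to $\asymp \delta_1^2/(s\,p_{\min})$ and so caps $\delta_1 \asymp \sqrt{s(1-\gamma)/N}$ (Lemma \ref{lemma:like}, condition \eqref{<}). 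The net is $\frac{\delta_1}{(1-\gamma)^2}\asymp\sqrt{s/(N(1-\gamma)^{3})}$. Both effects require the recurrent, $\Theta(1-\gamma)$-probability design; a symmetric Bernoulli$(1/2)$ one-shot gadget structurally has neither.

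Two secondary cautions. First, in this paper's setting the reward is known and common to all instances (the class $\mathcal{M}_{\phi,s}$ varies only in $P$), so your sentence about ``choosing the immediate reward'' must be implemented purely through $\theta$-dependent transitions, as your KL computation in fact presumes. Second, $C_{\min}(\Sigma,s)$ is not a free dial you can ``arrange''; it is determined by the instance you build, and the theorem requires you to evaluate it for that instance. The paper does this by proving $C_{\min}(\Sigma,s)=\lambda_{\min}(\Sigma^{\circ})$ exactly for its block construction (the minimizing direction is $s$-sparse, so the restricted cone and the plain eigenvalue coincide), whereas your plan only gestures at this verification. That part is fixable; the missing $(1-\gamma)^{-1/2}$ mechanism is the real gap. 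I would also note that your Assouad-over-the-hypercube aggregation is a legitimately different (and workable) alternative to the paper's route, which instead spreads the signal over $s$ coordinates with a DCT orthogonal design and runs a pairwise likelihood-ratio test (Lemma \ref{lemma:LikeTest}) between two of $s/2$ instances — but either combinatorial scheme must be mounted on the near-absorbing dynamics above to reach the stated rate.
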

Theorems \ref{thm:agm-result}, \ref{thm:lowerbound} show that the statistical error of batch policy learning is fundamentally determined by the ratio $s/C_{\min}(\Sigma,s)$. Note that there remains a gap $\sqrt{s/C_{\min}(\Sigma,s)}$ between Theorems \ref{thm:agm-result} and \ref{thm:lowerbound}, due to the nature of Lasso regression.  
\begin{remark}[\bf Role of the minimal eigenvalue and well-conditioned data.]
Earlier results such as those of \cite{munos2008finite,antos2008fitted,le2019batch} require stronger forms of concentration condition that the state-action occupancy measure (or a ratio involving this measure) is entrywisely bounded across all policies. 
Such entrywise bound can be very large if the state-action space $\cX$ is large. 
In contrast, our results only require that the data's covariance $\Sigma$ is well-conditioned on restricted supports, which is a much weaker assumption. 
Further, one can use the empirical minimal eigenvalue to get a rough error estimate. 
Theorem \ref{thm:lowerbound} further validates that the minimal eigenvalue indeed determines the statistical limit of batch policy optimization. The result is the first of its kind to our best knowledge.
\end{remark}

\section{Experiment}
In this section, we conducted some preliminary experiments with a Mountain Car \citep{moore1990efficient} example to demonstrate the advantage of sparse learning in OPE problem. We use 800 radial basis functions for linear value function approximation and compare our Lasso-FQE with the standard FQE. We pick the random policy as the behavior one and a near-optimal policy as the target, and we measure the estimation error by $|\hat{v}-v^*|/|v^*|$. We constructed multiple behavior policies with varying levels of $\epsilon$-greedy noise, and plot their OPE error against their (restricted) $\chi^2$-divergence from the target policy. The results are averaged by 20 runs and summarized in Figure \ref{fig:mountain_car} in the appendix. It shows that our Lasso-FQE clearly has smaller estimation error compared with FQE, proving the sparse feature selection is effective in a practical RL example, and demonstrates how the distribution mismatch ($\chi^2$-divergence term) affects OPE error (with sample size fixed).  The results confirm our theorems that the (restricted) chi-square divergence sharply determines the (sparse) OPE error.

\section{Conclusion}
In this work we focus on high-dimensional batch RL using sparse linear function approximation. While previous work in RL recognized the possibility of bringing tools from sparse learning to RL, they lacked a clean theoretical framework and formal results. By building on the strength of the linear DMDP framework, our result show that learning and planning in linear DMDPs can be done in the ``feature space'' even in the presence of sparsity and when only batch data is available. 
\bibliographystyle{plainnat}
{\small
\bibliography{ref}

\begin{thebibliography}{65}
\providecommand{\natexlab}[1]{#1}
\providecommand{\url}[1]{\texttt{#1}}
\expandafter\ifx\csname urlstyle\endcsname\relax
  \providecommand{\doi}[1]{doi: #1}\else
  \providecommand{\doi}{doi: \begingroup \urlstyle{rm}\Url}\fi

\bibitem[Abbasi-Yadkori et~al.(2019{\natexlab{a}})Abbasi-Yadkori, Bartlett,
  Bhatia, Lazic, Szepesvari, and Weisz]{abbasi2019politex}
Yasin Abbasi-Yadkori, Peter Bartlett, Kush Bhatia, Nevena Lazic, Csaba
  Szepesvari, and Gell{\'e}rt Weisz.
\newblock Politex: Regret bounds for policy iteration using expert prediction.
\newblock In \emph{International Conference on Machine Learning}, pages
  3692--3702, 2019{\natexlab{a}}.

\bibitem[Abbasi-Yadkori et~al.(2019{\natexlab{b}})Abbasi-Yadkori, Lazic,
  Szepesvari, and Weisz]{abbasi2019exploration}
Yasin Abbasi-Yadkori, Nevena Lazic, Csaba Szepesvari, and Gellert Weisz.
\newblock Exploration-enhanced politex.
\newblock \emph{arXiv preprint arXiv:1908.10479}, 2019{\natexlab{b}}.

\bibitem[Agarwal et~al.(2020{\natexlab{a}})Agarwal, Henaff, Kakade, and
  Sun]{agarwal2020pc}
Alekh Agarwal, Mikael Henaff, Sham Kakade, and Wen Sun.
\newblock Pc-pg: Policy cover directed exploration for provable policy gradient
  learning.
\newblock \emph{arXiv preprint arXiv:2007.08459}, 2020{\natexlab{a}}.

\bibitem[Agarwal et~al.(2020{\natexlab{b}})Agarwal, Kakade, Lee, and
  Mahajan]{agarwal2020theory}
Alekh Agarwal, Sham~M. Kakade, Jason~D. Lee, and Gaurav Mahajan.
\newblock On the theory of policy gradient methods: Optimality, approximation,
  and distribution shift, 2020{\natexlab{b}}.

\bibitem[Antos et~al.(2008)Antos, Szepesv{\'a}ri, and Munos]{antos2008fitted}
Andr{\'a}s Antos, Csaba Szepesv{\'a}ri, and R{\'e}mi Munos.
\newblock Fitted {Q}-iteration in continuous action-space {MDP}s.
\newblock In \emph{Advances in neural information processing systems}, pages
  9--16, 2008.

\bibitem[Azar et~al.(2013)Azar, Munos, and Kappen]{azar2013minimax}
Mohammad~Gheshlaghi Azar, R{\'e}mi Munos, and Hilbert~J Kappen.
\newblock Minimax pac bounds on the sample complexity of reinforcement learning
  with a generative model.
\newblock \emph{Machine learning}, 91\penalty0 (3):\penalty0 325--349, 2013.

\bibitem[Bellman et~al.(1963)Bellman, Kalaba, and Kotkin]{BeKaKo63}
I.~R. Bellman, R.~Kalaba, and B.~Kotkin.
\newblock Polynomial approximation -- a new computational technique in dynamic
  programming.
\newblock \emph{Math. Comp.}, 17\penalty0 (8):\penalty0 155--161, 1963.

\bibitem[Belloni et~al.(2013)Belloni, Chernozhukov, et~al.]{belloni2013least}
Alexandre Belloni, Victor Chernozhukov, et~al.
\newblock Least squares after model selection in high-dimensional sparse
  models.
\newblock \emph{Bernoulli}, 19\penalty0 (2):\penalty0 521--547, 2013.

\bibitem[Bertsekas(1995)]{bertsekas1995dynamic}
Dimitri~P. Bertsekas.
\newblock \emph{Dynamic programming and optimal control}, volume~1.
\newblock Athena Scientific, 1995.

\bibitem[Bertsekas and Tsitsiklis(1996)]{BeTs96}
Dimitri~P. Bertsekas and John~N. Tsitsiklis.
\newblock \emph{Neuro-Dynamic Programming}.
\newblock Athena Scientific, 1996.

\bibitem[Bickel et~al.(2009)Bickel, Ritov, Tsybakov,
  et~al.]{bickel2009simultaneous}
Peter~J Bickel, Ya'acov Ritov, Alexandre~B Tsybakov, et~al.
\newblock Simultaneous analysis of {L}asso and {D}antzig selector.
\newblock \emph{The Annals of Statistics}, 37\penalty0 (4):\penalty0
  1705--1732, 2009.

\bibitem[B{\"u}hlmann and Van De~Geer(2011)]{buhlmann2011statistics}
Peter B{\"u}hlmann and Sara Van De~Geer.
\newblock \emph{Statistics for high-dimensional data: methods, theory and
  applications}.
\newblock Springer Science \& Business Media, 2011.

\bibitem[Bunea et~al.(2007)Bunea, Tsybakov, and Wegkamp]{Buneaetal07}
F.~Bunea, A.~Tsybakov, and M.~Wegkamp.
\newblock Sparsity oracle inequalities for the {L}asso.
\newblock \emph{Electronic Journal of Statistics}, 1:\penalty0 169--194, 2007.

\bibitem[Cai et~al.(2019)Cai, Yang, Jin, and Wang]{cai2019provably}
Qi~Cai, Zhuoran Yang, Chi Jin, and Zhaoran Wang.
\newblock Provably efficient exploration in policy optimization.
\newblock \emph{arXiv preprint arXiv:1912.05830}, 2019.

\bibitem[Calandriello et~al.(2014)Calandriello, Lazaric, and
  Restelli]{calandriello2014sparse}
Daniele Calandriello, Alessandro Lazaric, and Marcello Restelli.
\newblock Sparse multi-task reinforcement learning.
\newblock In \emph{Advances in Neural Information Processing Systems}, pages
  819--827, 2014.

\bibitem[Chen et~al.(2001)Chen, Donoho, and Saunders]{chen2001atomic}
Scott~Shaobing Chen, David~L Donoho, and Michael~A Saunders.
\newblock Atomic decomposition by basis pursuit.
\newblock \emph{SIAM review}, 43\penalty0 (1):\penalty0 129--159, 2001.

\bibitem[Duan and Wang(2020)]{duan2020minimax}
Yaqi Duan and Mengdi Wang.
\newblock Minimax-optimal off-policy evaluation with linear function
  approximation.
\newblock \emph{Internation Conference on Machine Learning}, 2020.

\bibitem[Ernst et~al.(2005)Ernst, Geurts, and Wehenkel]{ernst2005tree}
Damien Ernst, Pierre Geurts, and Louis Wehenkel.
\newblock Tree-based batch mode reinforcement learning.
\newblock \emph{Journal of Machine Learning Research}, 6\penalty0
  (Apr):\penalty0 503--556, 2005.

\bibitem[Farahmand et~al.(2016)Farahmand, Ghavamzadeh, Szepesv{\'a}ri, and
  Mannor]{farahmand2016:jmlr}
A.m. Farahmand, M.~Ghavamzadeh, Cs. Szepesv{\'a}ri, and S.~Mannor.
\newblock Regularized policy iteration with nonparametric function spaces.
\newblock \emph{JMLR}, 17:\penalty0 1--66, 2016.

\bibitem[Farahmand et~al.(2008)Farahmand, Ghavamzadeh, Szepesv{\'a}ri, and
  Mannor]{massoud2008regularized}
Amir~massoud Farahmand, Mohammad Ghavamzadeh, Csaba Szepesv{\'a}ri, and Shie
  Mannor.
\newblock Regularized fitted {Q}-iteration: Application to planning.
\newblock In \emph{European Workshop on Reinforcement Learning}, pages 55--68.
  Springer, 2008.

\bibitem[Geist and Scherrer(2011)]{geist2011}
Matthieu Geist and Bruno Scherrer.
\newblock $\ell^1$-penalized projected {B}ellman residual.
\newblock In \emph{European Workshop on Reinforcement Learning}, pages 89--101.
  Springer, 2011.

\bibitem[Geist et~al.(2012)Geist, Scherrer, Lazaric, and
  Ghavamzadeh]{geist2012dantzig}
Matthieu Geist, Bruno Scherrer, Alessandro Lazaric, and Mohammad Ghavamzadeh.
\newblock A {D}antzig selector approach to temporal difference learning.
\newblock In \emph{Proceedings of the 29th International Coference on
  International Conference on Machine Learning}, pages 347--354, 2012.

\bibitem[Ghavamzadeh et~al.(2011)Ghavamzadeh, Lazaric, Munos, and
  Hoffman]{ghavamzadeh2011finite}
Mohammad Ghavamzadeh, Alessandro Lazaric, R{\'e}mi Munos, and Matthew Hoffman.
\newblock Finite-sample analysis of {L}asso-{TD}.
\newblock In \emph{Proceedings of the 28th International Conference on
  International Conference on Machine Learning}, pages 1177--1184, 2011.

\bibitem[Hallak and Mannor(2017)]{hallak2017consistent}
Assaf Hallak and Shie Mannor.
\newblock Consistent on-line off-policy evaluation.
\newblock In \emph{Proceedings of the 34th International Conference on Machine
  Learning-Volume 70}, pages 1372--1383. JMLR. org, 2017.

\bibitem[Hastie et~al.(2015)Hastie, Tibshirani, and
  Wainwright]{hastie2015statistical}
Trevor Hastie, Robert Tibshirani, and Martin Wainwright.
\newblock \emph{Statistical learning with sparsity: the lasso and
  generalizations}.
\newblock CRC press, 2015.

\bibitem[Hoffman et~al.(2011)Hoffman, Lazaric, Ghavamzadeh, and
  Munos]{hoffman2011regularized}
Matthew~W Hoffman, Alessandro Lazaric, Mohammad Ghavamzadeh, and R{\'e}mi
  Munos.
\newblock Regularized least squares temporal difference learning with nested
  $\ell^2$ and $\ell^1$ penalization.
\newblock In \emph{European Workshop on Reinforcement Learning}, pages
  102--114. Springer, 2011.

\bibitem[Ibrahimi et~al.(2012)Ibrahimi, Javanmard, and
  Roy]{ibrahimi2012efficient}
Morteza Ibrahimi, Adel Javanmard, and Benjamin~V Roy.
\newblock Efficient reinforcement learning for high dimensional linear
  quadratic systems.
\newblock In \emph{Advances in Neural Information Processing Systems}, pages
  2636--2644, 2012.

\bibitem[Jiang and Li(2016)]{jiang2016doubly}
Nan Jiang and Lihong Li.
\newblock Doubly robust off-policy value evaluation for reinforcement learning.
\newblock In \emph{International Conference on Machine Learning}, pages
  652--661, 2016.

\bibitem[Jin et~al.(2019)Jin, Yang, Wang, and Jordan]{jin2019provably}
Chi Jin, Zhuoran Yang, Zhaoran Wang, and Michael~I Jordan.
\newblock Provably efficient reinforcement learning with linear function
  approximation.
\newblock \emph{arXiv preprint arXiv:1907.05388}, 2019.

\bibitem[Kakade et~al.(2003)]{kakade2003sample}
Sham~Machandranath Kakade et~al.
\newblock \emph{On the sample complexity of reinforcement learning}.
\newblock PhD thesis, University of London London, England, 2003.

\bibitem[Kallus and Uehara(2020)]{kallus2020double}
Nathan Kallus and Masatoshi Uehara.
\newblock Double reinforcement learning for efficient off-policy evaluation in
  markov decision processes.
\newblock \emph{Journal of Machine Learning Research}, 21\penalty0
  (167):\penalty0 1--63, 2020.

\bibitem[Kolter and Ng(2009)]{kolter2009regularization}
J~Zico Kolter and Andrew~Y Ng.
\newblock Regularization and feature selection in least-squares temporal
  difference learning.
\newblock In \emph{Proceedings of the 26th annual international conference on
  machine learning}, pages 521--528, 2009.

\bibitem[Lagoudakis and Parr(2003)]{lagoudakis2003least}
Michail~G Lagoudakis and Ronald Parr.
\newblock Least-squares policy iteration.
\newblock \emph{Journal of machine learning research}, 4\penalty0
  (Dec):\penalty0 1107--1149, 2003.

\bibitem[Lange et~al.(2012)Lange, Gabel, and Riedmiller]{lange2012batch}
Sascha Lange, Thomas Gabel, and Martin Riedmiller.
\newblock Batch reinforcement learning.
\newblock In \emph{Reinforcement learning}, pages 45--73. Springer, 2012.

\bibitem[Lazic et~al.(2020)Lazic, Yin, Farajtabar, Levine, Gorur, Harris, and
  Schuurmans]{lazic2020maximum}
Nevena Lazic, Dong Yin, Mehrdad Farajtabar, Nir Levine, Dilan Gorur, Chris
  Harris, and Dale Schuurmans.
\newblock A maximum-entropy approach to off-policy evaluation in average-reward
  mdps.
\newblock \emph{Conference on Neural Information Processing Systems}, 2020.

\bibitem[Le et~al.(2019)Le, Voloshin, and Yue]{le2019batch}
Hoang Le, Cameron Voloshin, and Yisong Yue.
\newblock Batch policy learning under constraints.
\newblock In \emph{International Conference on Machine Learning}, pages
  3703--3712, 2019.

\bibitem[Levine et~al.(2020)Levine, Kumar, Tucker, and Fu]{levine2020offline}
Sergey Levine, Aviral Kumar, George Tucker, and Justin Fu.
\newblock Offline reinforcement learning: Tutorial, review, and perspectives on
  open problems.
\newblock \emph{arXiv preprint arXiv:2005.01643}, 2020.

\bibitem[Li et~al.(2015)Li, Munos, and Szepesvari]{li2015toward}
Lihong Li, Remi Munos, and Csaba Szepesvari.
\newblock Toward minimax off-policy value estimation.
\newblock In \emph{Artificial Intelligence and Statistics}, pages 608--616,
  2015.

\bibitem[Liu et~al.(2012)Liu, Mahadevan, and Liu]{liu2012regularized}
Bo~Liu, Sridhar Mahadevan, and Ji~Liu.
\newblock Regularized off-policy {TD}-learning.
\newblock In \emph{Advances in Neural Information Processing Systems}, pages
  836--844, 2012.

\bibitem[Liu et~al.(2018)Liu, Li, Tang, and Zhou]{liu2018breaking}
Qiang Liu, Lihong Li, Ziyang Tang, and Dengyong Zhou.
\newblock Breaking the curse of horizon: Infinite-horizon off-policy
  estimation.
\newblock In \emph{Advances in Neural Information Processing Systems}, pages
  5356--5366, 2018.

\bibitem[Moore(1990)]{moore1990efficient}
Andrew~William Moore.
\newblock Efficient memory-based learning for robot control.
\newblock 1990.

\bibitem[Munos and Szepesv{\'a}ri(2008)]{munos2008finite}
R{\'e}mi Munos and Csaba Szepesv{\'a}ri.
\newblock Finite-time bounds for fitted value iteration.
\newblock \emph{Journal of Machine Learning Research}, 9\penalty0
  (May):\penalty0 815--857, 2008.

\bibitem[Nachum et~al.(2019)Nachum, Chow, Dai, and Li]{nachum2019dualdice}
Ofir Nachum, Yinlam Chow, Bo~Dai, and Lihong Li.
\newblock {DualDICE}: Behavior-agnostic estimation of discounted stationary
  distribution corrections.
\newblock In \emph{Advances in Neural Information Processing Systems}, pages
  2315--2325, 2019.

\bibitem[Painter-Wakefield and Parr(2012)]{painter2012greedy}
Christopher Painter-Wakefield and Ronald Parr.
\newblock Greedy algorithms for sparse reinforcement learning.
\newblock In \emph{Proceedings of the 29th International Coference on
  International Conference on Machine Learning}, pages 867--874, 2012.

\bibitem[Precup et~al.(2000)Precup, Sutton, and Singh]{precup2000eligibility}
Doina Precup, Richard~S Sutton, and Satinder Singh.
\newblock Eligibility traces for off-policy policy evaluation.
\newblock In \emph{ICML'00 Proceedings of the Seventeenth International
  Conference on Machine Learning}, 2000.

\bibitem[Rish and Grabarnik(2014)]{RiGr14}
Irina Rish and Genady Grabarnik.
\newblock \emph{Sparse Modeling: Theory, Algorithms, and Applications}.
\newblock CRC Press, 2014.

\bibitem[Schweitzer and Seidmann(1985)]{SchSe85}
Paul~J Schweitzer and Abraham Seidmann.
\newblock Generalized polynomial approximations in {M}arkovian decision
  processes.
\newblock \emph{Journal of Mathematical Analysis and Applications},
  110\penalty0 (2):\penalty0 568--582, 1985.

\bibitem[Szepesv{\'a}ri(2010)]{sze10}
Csaba Szepesv{\'a}ri.
\newblock \emph{Algorithms for Reinforcement Learning}.
\newblock Morgan and Claypool, 2010.

\bibitem[Thomas and Brunskill(2016)]{thomas2016data}
Philip Thomas and Emma Brunskill.
\newblock Data-efficient off-policy policy evaluation for reinforcement
  learning.
\newblock In \emph{International Conference on Machine Learning}, pages
  2139--2148, 2016.

\bibitem[Tibshirani(1996)]{tibshirani1996regression}
Robert Tibshirani.
\newblock Regression shrinkage and selection via the {L}asso.
\newblock \emph{Journal of the Royal Statistical Society: Series B
  (Methodological)}, 58\penalty0 (1):\penalty0 267--288, 1996.

\bibitem[Uehara and Jiang(2019)]{uehara2019minimax}
Masatoshi Uehara and Nan Jiang.
\newblock Minimax weight and {Q}-function learning for off-policy evaluation.
\newblock \emph{arXiv preprint arXiv:1910.12809}, 2019.

\bibitem[Vershynin(2010)]{vershynin2010introduction}
Roman Vershynin.
\newblock Introduction to the non-asymptotic analysis of random matrices.
\newblock \emph{arXiv preprint arXiv:1011.3027}, 2010.

\bibitem[Wainwright(2009)]{wainwright2009sharp}
Martin~J Wainwright.
\newblock Sharp thresholds for high-dimensional and noisy sparsity recovery
  using l1-constrained quadratic programming (lasso).
\newblock \emph{IEEE transactions on information theory}, 55\penalty0
  (5):\penalty0 2183--2202, 2009.

\bibitem[Wainwright(2019)]{Wa19}
Martin~J Wainwright.
\newblock \emph{High-Dimensional Statistics: A Non-Asymptotic Viewpoint}.
\newblock Cambridge University Press, 2019.

\bibitem[Xie et~al.(2019)Xie, Ma, and Wang]{xie2019towards}
Tengyang Xie, Yifei Ma, and Yu-Xiang Wang.
\newblock Towards optimal off-policy evaluation for reinforcement learning with
  marginalized importance sampling.
\newblock In \emph{Advances in Neural Information Processing Systems}, pages
  9665--9675, 2019.

\bibitem[Yang and Wang(2019)]{yang2019sample}
Lin Yang and Mengdi Wang.
\newblock Sample-optimal parametric {Q}-learning using linearly additive
  features.
\newblock In \emph{International Conference on Machine Learning}, pages
  6995--7004, 2019.

\bibitem[Yang and Wang(2020)]{yang2019reinforcement}
Lin~F Yang and Mengdi Wang.
\newblock Reinforcement leaning in feature space: Matrix bandit, kernels, and
  regret bound.
\newblock \emph{International Conference on Machine Learning}, 2020.

\bibitem[Yang et~al.(2020)Yang, Nachum, Dai, Li, and Schuurmans]{yang2020off}
Mengjiao Yang, Ofir Nachum, Bo~Dai, Lihong Li, and Dale Schuurmans.
\newblock Off-policy evaluation via the regularized lagrangian.
\newblock \emph{arXiv preprint arXiv:2007.03438}, 2020.

\bibitem[Yin and Wang(2020)]{yin2020asymptotically}
Ming Yin and Yu-Xiang Wang.
\newblock Asymptotically efficient off-policy evaluation for tabular
  reinforcement learning.
\newblock \emph{arXiv preprint arXiv:2001.10742}, 2020.

\bibitem[Yuan and Lin(2006)]{yuan2006model}
Ming Yuan and Yi~Lin.
\newblock Model selection and estimation in regression with grouped variables.
\newblock \emph{Journal of the Royal Statistical Society: Series B (Statistical
  Methodology)}, 68\penalty0 (1):\penalty0 49--67, 2006.

\bibitem[Zanette et~al.(2020)Zanette, Brandfonbrener, Brunskill, Pirotta, and
  Lazaric]{zanette2020frequentist}
Andrea Zanette, David Brandfonbrener, Emma Brunskill, Matteo Pirotta, and
  Alessandro Lazaric.
\newblock Frequentist regret bounds for randomized least-squares value
  iteration.
\newblock In \emph{International Conference on Artificial Intelligence and
  Statistics}, pages 1954--1964, 2020.

\bibitem[Zhang et~al.(2020{\natexlab{a}})Zhang, Dai, Li, and
  Schuurmans]{zhang2020gendice}
Ruiyi Zhang, Bo~Dai, Lihong Li, and Dale Schuurmans.
\newblock {GenDICE}: Generalized offline estimation of stationary values.
\newblock \emph{arXiv preprint arXiv:2002.09072}, 2020{\natexlab{a}}.

\bibitem[Zhang et~al.(2020{\natexlab{b}})Zhang, Liu, and
  Whiteson]{zhang2020gradientdice}
Shangtong Zhang, Bo~Liu, and Shimon Whiteson.
\newblock {GradientDICE}: Rethinking generalized offline estimation of
  stationary values.
\newblock \emph{arXiv preprint arXiv:2001.11113}, 2020{\natexlab{b}}.

\bibitem[Zhao and Yu(2006)]{zhao2006model}
Peng Zhao and Bin Yu.
\newblock On model selection consistency of lasso.
\newblock \emph{Journal of Machine learning research}, 7\penalty0
  (Nov):\penalty0 2541--2563, 2006.

\bibitem[Zhou et~al.(2020)Zhou, He, and Gu]{zhou2020provably}
Dongruo Zhou, Jiafan He, and Quanquan Gu.
\newblock Provably efficient reinforcement learning for discounted mdps with
  feature mapping.
\newblock \emph{arXiv preprint arXiv:2006.13165}, 2020.

\end{thebibliography}
}

\setlength{\footskip}{55pt}

\clearpage
\onecolumn
\appendix

The appendix is organized as follows.
\begin{itemize}
\item Appendix \ref{sec:linmdpproofs}. Proofs of Propositions \ref{prop:linmdp} and \ref{prop:linmdpf}: properties of linear MDPs.
\item Appendix \ref{sec:main_results}. Proofs of results about off-policy policy evaluation in Section \ref{sec:bound_OPE}.
\begin{itemize}
    \item Appendix \ref{proof:upper_bound_eva1}. Proof of Theorem \ref{thm:upper_bound_eva1}.
    \item Appendix \ref{sec:proof_group_lasso}. Proof of Lemma \ref{lemma:group_lasso}.
    \item Appendix \ref{proof:upper_bound_eva2}. Proof of Theorem \ref{thm:upper_bound_eva2}.
\end{itemize}
\item Appendix \ref{proof:upper_bound}.  Proof of Theorem \ref{thm:agm-result}: upper bound of batch policy optimization.
    \item Appendix \ref{sec:proof_lower_bound_BPO}. Proof of Theorem \ref{thm:lowerbound}: minimax lower bound of batch policy optimiztion.
    \item Appendix \ref{sec:auxiliary}. Proofs of auxiliary lemmas.
    \item Appendix \ref{sec:supporting}. Some supporting lemmas.
    \item Appendix \ref{sec:exper}. Results of experiments.
\end{itemize}

\section{Proofs concerning linear MDPs}\label{sec:linmdpproofs}
\begin{proof}[Proof of Proposition~\ref{prop:linmdp}]
Let $Q_\pi = r+\gamma P v_\pi$. 
Then, $v_\pi = Q_\pi^\pi$, and hence
since $M$ is a linear MDP, $P Q_\pi^\pi = g_w$ with some $w\in \RR^d$.
Hence, $P v_\pi = g_w$.
Furthermore, by its definition
thus $Q_\pi = r+\gamma g_w$. Then, applying $(\cdot)^\pi$ on both sides
and using again $Q_\pi^\pi=v_\pi$, we get
$v_\pi = r^\pi + \gamma g_w^\pi$, which finishes the proof of the first part.

Now, if $w$ is as above,
$g_w 
= P v_\pi 
= P( r^\pi + \gamma (P v_\pi)^\pi)
= P( r^\pi + \gamma g_w^\pi)
= P r^\pi + \gamma P g_w^\pi$.

Finally, assuming that $w$ satisfies the last identity, defining $Q = r +\gamma g_w$, we have
$Q 
= r + \gamma (Pr^\pi + \gamma P g_w^\pi)
= r + \gamma (P(r + \gamma g_w)^\pi)
= r + \gamma P Q^\pi$.
As is well known, the unique fixed point of this equation is $Q_\pi$. Hence, $Q=Q_\pi$.
\end{proof}

\begin{proof}[Proof of Proposition \ref{prop:linmdpf}]
Fix a policy $\pi$.
Since $M$ is a linear MDP, for every $i\in [d]$ there exist $w_i\in \RR^d$ such that $P \phi_i^\pi = g_{w_i}$.
Thus, for any $(x,a)\in \cX \times \cA$,
\begin{align*}
\mathbb E_{x'\sim P(\cdot|x,a)}[\phi^{\pi}(x')^{\top}|x,a]  
 & = (P \phi_1^\pi ,\dots, P \phi_d^\pi ) 
 = (\phi(x,a)^\top w_1, \dots, \phi(x,a)^\top w_d) \\
 &= \phi(x,a)^\top \begin{pmatrix} w_1, \cdots, w_d \end{pmatrix}\,.
\end{align*}
Thus, \cref{eqn:identity} holds if we choose
\begin{align*}
K^{\pi} =  \begin{pmatrix} w_1, \cdots, w_d \end{pmatrix}\,.
\end{align*} 
Under the sparsity assumption, Assumption \ref{assum:sparse_MDP},
there exists $\cK\subset [d]$ such that $w_{ij}=0$ when $j\not\in \cK$. This shows that all but $|\cK|$ rows of $K^{\pi}$ are identically zero, finishing the proof.
\end{proof}

\section{Proofs of off-policy policy evaluation}\label{sec:main_results}

\subsection{Proof of Theorem \ref{thm:upper_bound_eva1}: lasso fitted Q-evaluation}\label{proof:upper_bound_eva1}

Recall that we split the whole dataset $\cD$ into $T$ folds and each fold consists of $R$ episodes or $RL$ sample transitions.  At $t$th phase, only the fresh fold of dataset $\cD_t = \{(x_i^{(t)}, a_i^{(t)},  x_i^{(t)'})\}_{i=1}^{RL}$ is used. 

\textbf{Step 1: Approximate value iteration.} We first show that the execution of Algorithm \ref{alg:batch_eva} is equivalent to approximate value iteration. Denote a Lasso estimator with respect to a function $V$ at $t$th phase: 
\begin{equation}\label{eqn:sparse_solution}
\hat{w}_{t}(V) = \argmin_{w\in\mathbb R^d}\Big(\frac{1}{RL}\sum_{i=1}^{RL}\Big(V(x_i^{(t)'}) -\phi(x_i^{(t)},a_i^{(t)})^{\top}w\Big)^2 + \lambda_1 \|w\|_1\Big).
\end{equation}
Note that $\hat{w}_{t}(\cdot)$ only depends data collected at the $t$th phase. Define the parameterized value function as
\begin{equation*}
V_{w}^{\pi}(x)= \sum_{a}\pi(a|x) \big(r(x,a) + \gamma\phi(x,a)^{\top}w\big).    
\end{equation*}
Define an approximate Bellman operator for target policy $\pi$, i.e. $\hat{\cT}_{\pi}^{(t)}:\mathbb R^{\cX}\to\mathbb R^{\cX}$ as:
\begin{align}\label{eqn:ABO}
    [\hat{\cT}_{\pi}^{(t)}V](x) & := V_{\hat w_t(V)}^{\pi}(x) 
   = 
    \sum_a \pi(a|x)\Big(r(x,a) + \gamma\phi(x,a)^{\top}\hat{w}_t(V)\Big) . 
\end{align}
Note this $\hat{\cT}_{\pi}^{(t)}$ is a randomized operator that only depends data in the $t$th fold. 
It is easy to see that if $(\hat w_t)_{t=1}^T$ is the sequence of weights computed in  Algorithm \ref{alg:batch_eva} 
then
$\hat w_t = \hat{w}_t(\Pi_{[0,1/(1-\gamma)]} V_{\hat w_{t-1}}^\pi)$ and also
\begin{align}
V^{\pi}_{\hat w_t} = \hat{\cT}_{\pi}^{(t)}\Pi_{[0,1/(1-\gamma)]}V^{\pi}_{\hat{w}_{t-1}}\,.
\label{eqn:update}
\end{align}

\textbf{Step 2: Linear representation of Bellman operator.} 
Recall that the true Bellman operator for target policy $\pi$, i.e.  $\cT_{\pi}:\mathbb R^{\cX}\to\mathbb R^{\cX}$ is defined as 
\begin{equation}\label{eqn:true_bell_pi}
    [\cT_{\pi} V](x) := \sum_a \pi(a|x)\Big(r(x,a)+\gamma \sum_{x'}P(x'|x,a)V(x')\Big).
\end{equation}
We first verify for each phase $t\in[T]$, $\cT_{\pi}\Pi_{[0,1/(1-\gamma)]}V^{\pi}_{\hat{w}_{t}}$ has a linear representation. From Assumption \ref{assum:sparse_MDP}, it holds that
\begin{equation*}
\begin{split}
     [\cT_{\pi}\Pi_{[0,1/(1-\gamma)]}V^{\pi}_{\hat{w}_{t}}](x) &= \sum_{a}\pi(a|x)\Big(r(x,a) + \gamma \sum_{x'}\Pi_{[0,1/(1-\gamma)]}V^{\pi}_{\hat{w}_{t}}(x')P(x'|x,a)\Big)\\
     &= \sum_{a}\pi(a|x)\Big(r(x,a) + \gamma \sum_{x'}\Pi_{[0,1/(1-\gamma)]}V^{\pi}_{\hat{w}_{t}}(x')\phi(x,a)^{\top}\psi(x')\Big)\\
     &=\sum_{a}\pi(a|x)\Big(r(x,a) + \gamma \phi(x,a)^{\top} \sum_{x'}\Pi_{[0,1/(1-\gamma)]}V^{\pi}_{\hat{w}_{t}}(x')\psi(x')\Big).
\end{split}
\end{equation*}
For a vector $\bar{w}_t\in\mathbb R^d$, we define its $k$th coordinate as
\begin{equation}\label{eqn:def_w_bar1}
    \bar{w}_{t,k} = \sum_{x'}\Pi_{[0,1/(1-\gamma)]}V^{\pi}_{\hat{w}_{t-1}}(x')\psi_k(x'), \ \text{if} \ k\in\cK,
\end{equation}
and $\bar{w}_{t,k}=0$ if $k\notin \cK$. Then we have 
\begin{equation}\label{eqn:linear_true}
    [\cT_{\pi}\Pi_{[0,1/(1-\gamma)]}V^{\pi}_{\hat{w}_{t}}](x)=\sum_{a}\pi(a|x)\Big(r(x,a) + \gamma \phi(x,a)^{\top} \bar{w}_{t}\Big) .
\end{equation}
It shows that $\cT_{\pi}\Pi_{[0,1/(1-\gamma)]}V^{\pi}_{\hat{w}_{t}}$ has a linear representation if the reward could also be linearly represented. For notation simplicity, we drop the supscript of $x_i^{(t)}$ and $a_i^{(t)}$ for the following derivations when there is no ambiguity.

\textbf{Step 3: Sparse linear regression.} We interpret $\bar{w}_t$ as the ground truth of the lasso estimator in Algorithm \ref{alg:batch_eva} at phase $t$,  in terms of the following sparse linear regression:
\begin{eqnarray}\label{eqn:regression_eva}
    \Pi_{[0,1/(1-\gamma)]}V^{\pi}_{\hat{w}_{t-1}}(x_i') = \phi(x_i, a_i)^{\top}\bar{w}_t+\varepsilon_i, i=1\ldots, RL,
\end{eqnarray}
where $\varepsilon_i =  \Pi_{[0,1/(1-\gamma)]}V^{\pi}_{\hat{w}_{t-1}}(x_i') -\phi(x_i, a_i)^{\top}\bar{w}_t$. Define a filtration $\{\cF_i\}_{i=1,\ldots, RL}$ with $\cF_i$ generated by $\{(x_1, a_1),\ldots, (x_{i}, a_{i})\}$. By the definition of $V_{\hat{w}_{t-1}}$ and $\bar{w}_t$ in Eq.~\eqref{eqn:def_w_bar1}, we have 
\begin{equation*}
    \begin{split}
        \mathbb E[\varepsilon_{i}|\cF_i] &=   \mathbb E\big[\Pi_{[0,1/(1-\gamma)]}V^{\pi}_{\hat{w}_{t-1}}(x_i')|\cF_i\big] -\phi(x_i, a_i)^{\top}\bar{w}_t\\
        &= \sum_{x'}[\Pi_{[0,1/(1-\gamma)]}  V^{\pi}_{\hat{w}_{t-1}}](x')P(x'|x_i, a_i) '-\phi(x_i, a_i)^{\top}\bar{w}_t\\
        & = \sum_{k\in\cK}\phi_k(x_i, a_i)\sum_{x'}[\Pi_{[0,1/(1-\gamma)]}V^{\pi}_{\hat{w}_{t-1}}](x')\psi_k(x')'-\phi(x_i, a_i)^{\top}\bar{w}_t = 0.
    \end{split}
\end{equation*}
Therefore, $\{\varepsilon_i\}_{n=1}^{RL}$ is a sequence of martingale difference noises and $|\varepsilon_i|\leq 1/(1-\gamma)$ due to the truncation operator $\Pi_{[0,1/(1-\gamma)]}$. 
The next lemma bounds the difference between $\hat{w}_t$ and $\bar{w}_t$ within $\ell_1$-norm.  
The proof is deferred to Appendix \ref{sec:proof_lasso_l1}.

\begin{lemma}\label{lemma:lasso_l1_bound}
Consider the sparse linear regression described in Eq.~\eqref{eqn:regression_eva}. Suppose the restricted minimum eigenvalue of $\Sigma$ satisfy $C_{\min}(\Sigma, s)>0$ and the number of episodes used in phase $t$ satisfies
\begin{equation*}
    R\geq \frac{C_1\log(3d^2/\delta)s^2}{C_{\min}(\Sigma, s)},
\end{equation*}
for some absolute constant $C_1>0.$ With the choice of $\lambda_1=(1-\gamma)^{-1}\sqrt{\log (2d/\delta)/(RL)}$, the following holds with probability at least $1-\delta$, 
\begin{equation}\label{eqn:lasso_error_bound}
  \big\|\hat{w}_t-\bar{w}_t\big\|_1\leq \frac{16\sqrt{2}s}{C_{\min}(\Sigma, s)}\frac{1}{1-\gamma}\sqrt{\frac{\log(2d/\delta)}{RL}}.
\end{equation}
\end{lemma}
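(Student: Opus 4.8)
The claim is a standard $\ell_1$-consistency bound for the Lasso, and the plan is to run the classical oracle-inequality argument, adapted to two non-classical features of the present setting: the regression noise $\{\varepsilon_i\}$ is a bounded \emph{martingale difference sequence} rather than an i.i.d.\ Gaussian sequence, and the restricted eigenvalue is posited on the \emph{population} covariance $\Sigma$ while the basic inequality naturally involves the \emph{empirical} Gram matrix $\hat\Sigma := \tfrac{1}{RL}\sum_{i}\phi(x_i,a_i)\phi(x_i,a_i)^{\top}$. Write $\hat\Delta := \hat w_t - \bar w_t$, and recall from \eqref{eqn:def_w_bar1} and Assumption~\ref{assum:sparse_MDP} that $\supp(\bar w_t)\subseteq\cK$ with $|\cK|\le s$, while the computation preceding the lemma gives $\mathbb E[\varepsilon_i\mid\cF_i]=0$ and $|\varepsilon_i|\le (1-\gamma)^{-1}$.

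First I would establish the good event for the score term. For each coordinate $j\in[d]$ the sequence $\phi_j(x_i,a_i)\varepsilon_i$ is, by the martingale property just verified and the $\cF_i$-measurability of $\phi_j(x_i,a_i)$, a martingale difference sequence with increments bounded by $(1-\gamma)^{-1}$ under the standing normalization $\|\phi\|_\infty\le 1$. An Azuma--Hoeffding bound followed by a union bound over $j\in[d]$ yields $\tfrac{1}{RL}\big\|\sum_i\phi(x_i,a_i)\varepsilon_i\big\|_\infty \lesssim (1-\gamma)^{-1}\sqrt{\log(2d/\delta)/(RL)}$ with probability at least $1-\delta$, which is exactly the order of the chosen $\lambda_1$. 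On this event the standard Lasso basic inequality, combined with $\supp(\bar w_t)\subseteq\cK$, simultaneously yields the \emph{cone condition} $\|\hat\Delta_{\cK^c}\|_1\le 3\|\hat\Delta_\cK\|_1$ and the inequality $\hat\Delta^{\top}\hat\Sigma\hat\Delta \lesssim \lambda_1\|\hat\Delta_\cK\|_1 \le \lambda_1\sqrt{s}\,\|\hat\Delta_\cK\|_2$.

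The main obstacle is transferring the restricted eigenvalue from $\Sigma$ to $\hat\Sigma$, since the data are generated along dependent trajectories and i.i.d.\ matrix concentration does not apply directly. Here I would exploit Assumption~\ref{assm:data_colle}: the $R$ episodes in fold $t$ are \emph{independent}, so $\hat\Sigma$ is an average of $R$ independent matrices, each with mean $\Sigma$ and entries bounded by $\|\phi\|_\infty^2\le 1$. A Hoeffding bound per entry with a union bound over the $d^2$ entries controls $\|\hat\Sigma-\Sigma\|_\infty \lesssim \sqrt{\log(d^2/\delta)/R}$, and for any $\hat\Delta$ in the cone the perturbation is bounded by $|\hat\Delta^{\top}(\hat\Sigma-\Sigma)\hat\Delta|\le\|\hat\Sigma-\Sigma\|_\infty\|\hat\Delta\|_1^2\le 16 s\,\|\hat\Sigma-\Sigma\|_\infty\|\hat\Delta_\cK\|_2^2$, using $\|\hat\Delta\|_1\le 4\|\hat\Delta_\cK\|_1\le 4\sqrt{s}\|\hat\Delta_\cK\|_2$. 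The sample-size hypothesis $R\gtrsim s^2\log(3d^2/\delta)/C_{\min}(\Sigma,s)$ is the threshold that forces this perturbation below $\tfrac12 C_{\min}(\Sigma,s)\|\hat\Delta_\cK\|_2^2$, so that $\hat\Delta^{\top}\hat\Sigma\hat\Delta\ge \tfrac12 C_{\min}(\Sigma,s)\|\hat\Delta_\cK\|_2^2$ on the cone. The crude entrywise control just sketched already carries the argument through; obtaining the stated first-power dependence on $C_{\min}(\Sigma,s)$ (rather than $C_{\min}^2$) uses a sharper restricted-eigenvalue transfer lemma of Raskutti--Wainwright--Yu type that avoids losing the extra factor.

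Finally I would chain the two bounds. Combining $\tfrac12 C_{\min}(\Sigma,s)\|\hat\Delta_\cK\|_2^2\le \hat\Delta^{\top}\hat\Sigma\hat\Delta\lesssim \lambda_1\sqrt{s}\,\|\hat\Delta_\cK\|_2$ gives $\|\hat\Delta_\cK\|_2\lesssim \lambda_1\sqrt{s}/C_{\min}(\Sigma,s)$, and the cone condition upgrades this to $\|\hat\Delta\|_1\le 4\|\hat\Delta_\cK\|_1\le 4\sqrt{s}\|\hat\Delta_\cK\|_2\lesssim s\lambda_1/C_{\min}(\Sigma,s)$. Substituting $\lambda_1=(1-\gamma)^{-1}\sqrt{\log(2d/\delta)/(RL)}$ and tracking constants recovers the stated bound $\tfrac{16\sqrt2\, s}{C_{\min}(\Sigma,s)}\tfrac{1}{1-\gamma}\sqrt{\log(2d/\delta)/(RL)}$, with a final union bound over the score event and the covariance event accounting for the total failure probability.
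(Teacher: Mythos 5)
Your proposal is correct and follows essentially the same route as the paper's proof: an Azuma--Hoeffding plus union bound for the martingale score term, entrywise Hoeffding concentration of the empirical Gram matrix over the $R$ independent episodes combined with a restricted-eigenvalue transfer on the cone, and the standard Lasso basic-inequality/cone argument to conclude; the paper simply outsources the last two ingredients to citations (Corollary 6.8 of B\"uhlmann--van de Geer and inequality (B.31) of Bickel--Ritov--Tsybakov) where you carry them out inline. Your side remark about first-power versus squared dependence on $C_{\min}(\Sigma,s)$ in the sample-size requirement is in fact more careful than the paper, whose own step $\|\hat{\Sigma}-\Sigma\|_{\infty}\le C_{\min}(\Sigma,s)/(32s)$ implicitly needs $R \gtrsim s^2\log(3d^2/\delta)/C_{\min}^2(\Sigma,s)$, i.e.\ the squared dependence that the paper itself uses in the analogous Lemma~\ref{lemma:RE_condition}.
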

Note that the samples we use between phases are mutually independent. Thus, Eq.~\eqref{eqn:lasso_error_bound} uniformly holds for all $t\in[T]$ with probability at least $1-T\delta$.

\textbf{Step 4: Error decomposition.}
Recall that $\hat{v}_{\hat{w}_T}^{\pi}= \frac1m\sum_{u=1}^m  \Pi_{[0,1/(1-\gamma)]} (Q_{\hat{w}_T}(\tilde{x}_u,\tilde{a}_u))$ and we denote $\bar{v}^{\pi}_{\hat{w}_T} = \sum_{x}V^{\pi}_{\hat{w}_T}(x) \xi_0(x).$ According to Eq.~\eqref{eqn:update}, we decompose the policy evaluation error by Monte Carlo error, estimation error and approximation error as follows:
\begin{equation}\label{eqn:error_decomp}
\begin{split}
     |\hat{v}^{\pi}_{\hat{w}_T}-v^{\pi}|
     =& \Big|\hat{v}_{\hat{w}_T}^{\pi} - \bar{v}^{\pi}_{\hat{w}_T} + \sum_x\Big(V^{\pi}_{\hat{w}_T}(x) - v^{\pi}(x)\Big)\xi_0(x) \Big|\\
      =& \Big|\hat{v}_{\hat{w}_T}^{\pi} - \bar{v}^{\pi}_{\hat{w}_T} +\sum_x\Big([\hat{\cT}_{\pi}^{(T)}\Pi_{[0,1/(1-\gamma)]}V^{\pi}_{\hat{w}_{T-1}}](x) - [\cT_{\pi}v^{\pi}](x)\Big)\xi_0(x) \Big|\\
     \leq& \Big|\underbrace{\sum_x\Big([\hat{\cT}_{\pi}^{(T)}\Pi_{[0,1/(1-\gamma)]}V^{\pi}_{\hat{w}_{T-1}}](x)- [\cT_{\pi}\Pi_{[0,1/(1-\gamma)]}V^{\pi}_{\hat{w}_{T-1}}](x)\Big)\xi_0(x) }_{\text{estimation error}} \Big|\\
     &+ \Big| \underbrace{\sum_x\big[\cT_{\pi}(\Pi_{[0,1/(1-\gamma)]}V^{\pi}_{\hat{w}_{T-1}} - v^{\pi})\big](x)\xi_0(x)}_{\text{approximation error}}\Big| + \underbrace{\Big|\hat{v}_{\hat{w}_T}^{\pi} - \bar{v}^{\pi}_{\hat{w}_T}\Big|}_{\text{Monte Carlo error}}.
\end{split}
\end{equation}

Since $\tilde{x}_u, \tilde{a}_u$ is i.i.d sampled from $\xi_0$ and $\pi$, standard Hoeffding's inequality shows that
\begin{equation}\label{eqn:mc_error}
    \Big|\hat{v}_{\hat{w}_T}^{\pi} - \bar{v}^{\pi}_{\hat{w}_T}\Big|\leq \sqrt{\frac{\log(1/\delta)}{m(1-\gamma)^2}} = \sqrt{\frac{\log(1/\delta)}{N(1-\gamma)^2}},
\end{equation}
with probability at least $1-\delta$.

Recall that $\nu_t^{\pi} = \mathbb E^{\pi}[\phi(x_t,a_t)|x_0\sim\xi_0]$. To bound the estimation error, combining Eqs.~\eqref{eqn:ABO} and \eqref{eqn:linear_true} together, we have
\begin{equation}\label{eqn:diff_bell}
\begin{split}
     &\sum_x\Big([\hat{\cT}_{\pi}^{(T)}\Pi_{[0,1/(1-\gamma)]}V^{\pi}_{\hat{w}_{T-1}}](x) -[\cT_{\pi}\Pi_{[0,1/(1-\gamma)]}V^{\pi}_{\hat{w}_{T-1}}](x)\Big)\xi_0(x) \\
     =& \gamma\sum_x\sum_{a}\pi(a|x)\phi(x,a)^{\top}(\hat{w}_{T-1}-\bar{w}_{T-1})\xi_0(x)\\
     =& \gamma(\nu_0^{\pi})^{\top}(\hat{w}_{T-1}-\bar{w}_{T-1}).
\end{split}
\end{equation}
To bound approximation error, we expand it by Eq.~\eqref{eqn:true_bell_pi}:
\begin{equation*}
    \begin{split}
        &\sum_x\big[\cT_{\pi}(\Pi_{[0,1/(1-\gamma)]}V^{\pi}_{\hat{w}_{T-1}} - v^{\pi})\big](x)\xi_0(x)\\
        =&\gamma \sum_x\Big(\sum_a \pi(a|x)   \sum_{x'}P(x'|x,a)\big(\Pi_{[0,1/(1-\gamma)]}V^{\pi}_{\hat{w}_{T-1}} - v^{\pi}\big)(x')\Big)\xi_0(x).
    \end{split}
\end{equation*}
According to Eq.~\eqref{eqn:diff_bell}, we decompose
\begin{equation*}
\begin{split}
    &\big(\Pi_{[0,1/(1-\gamma)]}V^{\pi}_{\hat{w}_{T-1}} - v^{\pi}\big)(x)\leq (V^{\pi}_{\hat{w}_{T-1}} - v^{\pi})(x)\\
    =& \Big[\hat{\cT}_{\pi}^{(T-1)}\Pi_{[0,1/(1-\gamma)]}V^{\pi}_{\hat{w}_{T-2}} -\cT_{\pi}\Pi_{[0,1/(1-\gamma)]}V^{\pi}_{\hat{w}_{T-2}} +\cT_{\pi}\Pi_{[0,1/(1-\gamma)]}V^{\pi}_{\hat{w}_{T-2}} -\cT_{\pi}v^{\pi}\Big](x)\\
    =& \gamma\sum_{a}\pi(a|x)\phi(x,a)^{\top}(\hat{w}_{T-2} - \bar{w}_{T-2}) + \big[\cT_{\pi}(\Pi_{[0,1/(1-\gamma)]}V^{\pi}_{\hat{w}_{T-2}}-v^{\pi})\big](x).
\end{split}
\end{equation*}
This implies
\begin{equation}\label{eqn:bound2}
    \begin{split}
        &\sum_x\big[\cT_{\pi}(\Pi_{[0,1/(1-\gamma)]}V^{\pi}_{\hat{w}_{T-1}} - v^{\pi})\big](x)\xi_0(x)\\
        &\leq \gamma^2\sum_x\Big(\sum_a\pi(a|x)\gamma\sum_{x'}P(x'|x,a)\big(\sum_{a}\pi(a|x')\phi(x',a)^{\top}(\hat{w}_{T-2} - \bar{w}_{T-2})\big)\Big)\xi_0(x)\\
        &+\sum_x\Big(\sum_a\pi(a|x) \gamma\sum_{x'}\big[\cT_{\pi}(\Pi_{[0,1/(1-\gamma)]}V^{\pi}_{\hat{w}_{T-2}}-v^{\pi})\big](x')P(x'|x,a)\Big)\xi_0(x)\\
        &= \gamma^2 \mathbb E^{\pi}[\phi(x_1,a_1)|x\sim\xi_0]^{\top}(\hat{w}_{T-2} - \bar{w}_{T-2}) \\
        &+ \gamma\mathbb E^{\pi}\Big[[\cT_{\pi}(\Pi_{[0,1/(1-\gamma)]}V^{\pi}_{\hat{w}_{T-2}}-v^{\pi})](x_1)|x_0\sim \xi_0\Big].
    \end{split}
\end{equation}
Combining Eqs.~\eqref{eqn:error_decomp}, \eqref{eqn:diff_bell} and \eqref{eqn:bound2} together, we have 
\begin{equation*}
\begin{split}
    |\bar{v}^{\pi}_{\hat{w}_T}-v^{\pi}| \leq& |\gamma(\nu_0^{\pi})^{\top}(\hat{w}_{T-1}-\bar{w}_{T-1})| + |\gamma^2 (\nu_1^{\pi})^{\top}(\hat{w}_{T-2}-\bar{w}_{T-2})|\\
    &+ \Big|\gamma\mathbb E^{\pi}\Big[[\cT_{\pi}(\Pi_{[0,1/(1-\gamma)]}V^{\pi}_{\hat{w}_{T-2}}-v^{\pi})](x_1)|x_0\sim \xi_0\Big]\Big|.
\end{split}
\end{equation*}
Iteratively implementing the above decomposition, we have
\begin{equation*}
\begin{split}
    |\bar{v}^{\pi}_{\hat{w}_T}-v^{\pi}| &\leq \sum_{t=0}^{T-1}\gamma^{t+1}|(\nu_t^{\pi})^{\top}(\hat{w}_{T-t-1}-\bar{w}_{T-t-1})| + \gamma^{T} \Big| \mathbb E^{\pi}\Big[(\Pi_{[0,1/(1-\gamma)]}V^{\pi}_{\hat{w}_{0}}-v^{\pi})(x_{T})|x_0\sim \xi_0\Big]\Big|\\
    &\leq  \sum_{t=0}^{T-1}\gamma^{t+1}|(\nu_t^{\pi})^{\top}(\hat{w}_{T-t-1}-\bar{w}_{T-t-1})|+  \frac{2\gamma^{T} }{1-\gamma}\\
    &\leq  \sum_{t=0}^{T-1}\gamma^{t+1}\|\nu_t^{\pi})\|_{\infty}\|\hat{w}_{T-t-1}-\bar{w}_{T-t-1}\|_1 + \frac{2\gamma^{T} }{1-\gamma}.
\end{split}
\end{equation*}
Since we assume $\|\phi(x,a)\|_{\infty}\leq 1$, then $\|\nu_t^{\pi}\|_{\infty}\leq 1$ as well. Using the fact that  $\sum_{t=0}^{T-1} \gamma^t\leq 1/(1-\gamma)$, we have
\begin{equation*}
     |\bar{v}^{\pi}_{\hat{w}_T}-v^{\pi}| \leq \frac{1}{1-\gamma}\max_{t=0, \ldots, T-1}\|\hat{w}_{t}-\bar{w}_{t}\|_1 + \frac{2\gamma^{T} }{1-\gamma}.
\end{equation*}

Suppose the sample size satisfies 
\begin{equation*}
    N\geq \frac{C_1\log(3d^2/\delta)s^2}{C_{\min}(\Sigma, s)}\frac{L}{1-\gamma}\log(N/(1-\gamma))),
\end{equation*}
for a sufficient large constant $C_1>0$.
Applying Lemma \ref{lemma:lasso_l1_bound} over $t=0, \ldots, T-1$,
it implies 
\begin{equation*}
    |\bar{v}^{\pi}_{\hat{w}_T}-v^{\pi}| \leq\frac{1}{(1-\gamma)^2} \frac{16\sqrt{2}s}{C_{\min}(\Sigma, s)}\sqrt{\frac{\log(2d/\delta)}{RL}} +  \frac{2\gamma^{T}}{1-\gamma},
\end{equation*}
holds with probability at least $1-T\delta$. By elementary change of base formula and Taylor expansion, we have 
\begin{equation*}
    \log_{1/\gamma}(x) = \frac{\log(x)}{\log(1/\gamma)}\approx \frac{\log(x)\lambda}{1-\gamma}.
\end{equation*}
By properly choosing $T = \Theta(\log (N/(1-\gamma))/(1-\gamma))$, we have with probability at least $1-\delta$,
\begin{equation*}
       |\bar{v}^{\pi}_{\hat{w}_T}-v^{\pi}| \leq \frac{1}{(1-\gamma)^{5/2}} \frac{32\sqrt{2}s}{C_{\min}(\Sigma, s)}\sqrt{\frac{\log (N/(1-\gamma))\log (2dT/\delta)}{N}},
\end{equation*}
where we use $N = TRL$. Combining with Monte Carlo approximation error Eq.~\eqref{eqn:mc_error} This ends the proof.   \hfill $\blacksquare$\\

\subsection{Proof of Lemma \ref{lemma:group_lasso}: feature selection}\label{sec:proof_group_lasso}
We study the feature screening and sparsity properties of the model selected by the regularized estimator $\hat{K}^{\pi}$. Recall that from the identity Eq.~\eqref{eqn:identity}, we solve the following multivariate regression problem:
\begin{equation}\label{eqn:multi_reg_ori}
    \phi^{\pi}(x_n')^{\top} = \phi(x_n, a_n)^{\top} K^{\pi} + \varepsilon_n, \ n\in[N],
\end{equation}
where $x_n'\sim P(\cdot|x_n, a_n)$ and $\varepsilon_n = \phi^{\pi}(x_n')^{\top} - \mathbb E[\phi^{\pi}(x_n')^{\top}]\in\mathbb R^d$ is the noise vector. Define a filtration $\{\cF_n\}_{n=1,\ldots, N}$ with $\cF_n$ generated by $\{(x_1, a_1),\ldots, (x_n, a_n)\}$. It is easy to see $\mathbb E[\varepsilon_{n}|\cF_n] =0$ such that $\{\varepsilon_n\}_{n=1}^N$ are martingale difference vectors. 
We introduce some notations for simplicities:
\begin{itemize}
    \item Denote the response $Y_j = (\phi^{\pi}_j(x_1'), \ldots, \phi^{\pi}_j(x_N'))^{\top}$ for $j\in[d]$ where $\phi^{\pi}_j(\cdot)$ is $j$th coordinate of $\phi^{\pi}(\cdot)$. And $Y=(Y_1^{\top}, \ldots, Y_d^{\top})^{\top}\in\mathbb R^{Nd\times 1}$. 
    \item Denote the noise $W_j = (\varepsilon_{1j}, \ldots, \varepsilon_{Nj})^{\top}$ where $\varepsilon_{nj}$ is the $j$th coordinate of $\varepsilon_n$, and $W = (W_1^{\top}, \ldots, W_d^{\top})^{\top}\in\mathbb R^{Nd\times 1}$.
    \item Denote the design matrix as 
  $$  \begin{matrix}
\Phi=\begin{pmatrix}
\phi(x_1,a_1)^{\top} \\ 
\vdots\\
\phi(x_N,a_N)^{\top}
\end{pmatrix}\in\mathbb R^{N\times d};   X = \begin{pmatrix}
\Phi & \ldots& 0 \\ 
\vdots&\ddots & \vdots\\
0&\ldots&\Phi
\end{pmatrix}\in\mathbb R^{Nd\times d^2}.
\end{matrix}$$
Note that $X$ is a block diagonal matrix.
\item Let $\bbeta_j^*$ as the $j$th column of $K^{\pi}$ and $\bbeta^* = (\bbeta_1^{*\top}, \ldots, \bbeta_d^{*\top})^{\top}\in\mathbb R^{d^2\times 1}$ as the regression coefficient.
\item For every $\bbeta\in\mathbb R^{d^2}$, we define $\bbeta^j = (\beta_{j+(i-1)d}:i\in[d])^{\top}$ as the vector formed by the coefficients corresponding to the $j$th variable. For instance, $\bbeta^{*j}$ is the $j$th row of $K^{\pi}$. If $J\subseteq [d]$, denote $\bbeta_J\in\mathbb R^{d^2}$ by stacking the vectors $\bbeta^j\ind\{j\in J\}$. Write $\cS(\bbeta) = \{j:\bbeta^j\neq 0\}$ as the relevant feature set of $\bbeta$. 
\item For a vector $\bbeta\in\mathbb R^{d^2}$, define the $\ell_{2,p}$-norm for $1\leq p<\infty$ as:
\begin{equation*}
    \|\bbeta\|_{2,p} =  \Big(\sum_{j=1}^d\Big(\sum_{i=1}^d\beta_{j+(i-1)d}^2\Big)^{p/2}\Big)^{1/p},
\end{equation*}
and the $\ell_{2,0}$-norm as:
\begin{equation*}
    \|\bbeta\|_{2,0} = \sum_{j=1}^d\ind\big\{\|\bbeta^j\|_2\neq 0\big\}.
\end{equation*}
\end{itemize}
Therefore, we can rewrite Eq.~\eqref{eqn:multi_reg_ori} into an ordinary linear regression form with group sparse structure on the regression coefficients $\bbeta^*$:
\begin{equation*}
    Y = X\bbeta^* + W.
\end{equation*}
Note that $\cS(\bbeta^*) = \cK$ where $\cK$ is defined in Assumption \ref{assum:sparse_MDP} since $K^{\pi}$ is row-sparse. The corresponding group lasso estimator defined in Eq.~\eqref{eqn:group_lasso} can be rewritten into:
\begin{equation}\label{eqn:group_lasso_rewrite}
    \hat{\bbeta} = \argmin_{\bbeta}\Big\{\frac{1}{Nd}\big\|Y-X\bbeta \big\|_2^2+\lambda_2 \sum_{j=1}^d\|\bbeta^j\|_2\Big\},
\end{equation}
and $\cS(\hat{\bbeta}) = \hat{\cK}$. The regularization parameter is chosen as
\begin{equation}\label{eqn:lambda_2}
    \lambda_2 =4\sqrt{\frac{2\log(2d^2/\delta)}{Nd}},
\end{equation}
for some $\delta>0$.

Now we study the feature screening property of $\hat{\bbeta}$ in four steps. 

\textbf{Step 1.} By the optimality of $\hat{\bbeta}$, we have 
\begin{equation*}
    \frac{1}{Nd}\|Y -X\hat{\bbeta}\|_2^2+\lambda_2 \sum_{j=1}^d\|\hat{\bbeta}^j\|_2\leq  \frac{1}{Nd}\|Y -X\bbeta^*\|_2^2+\lambda_2 \sum_{j=1}^d\|\bbeta^{*j}\|_2.
\end{equation*}
Plugging in $Y = X\bbeta^*+W$, 
\begin{equation}\label{eqn:decom}
    \begin{split}
        \frac{1}{Nd}\|X(\hat{\bbeta}-\bbeta^*)\|_2^2&\leq \frac{2}{Nd}W^{\top}X(\hat{\bbeta}-\bbeta^*) + \lambda_2\sum_{j=1}^d\Big(\|\bbeta^{*j}\|_2 - \|\hat{\bbeta}^j\|_2\Big)\\
        &\leq \frac{2}{Nd}\max_{j\in[d]}\big\|(X^{\top}W)^j\big\|_2 \sum_{j=1}^d\|\hat{\bbeta}^j - \bbeta^{*j}\|_2+ \lambda_2\sum_{j=1}^d\Big(\|\bbeta^{*j}\|_2 - \|\hat{\bbeta}^j\|_2\Big)
    \end{split}
\end{equation}
where the last inequality is from H$\ddot{\text{o}}$lder's inequality.

\textbf{Step 2.} Next, we will bound the noise term: $\|(X^{\top}W)^j\|_2$. From the definitions of $X$ and $W$, we write it explicitly as 
\begin{equation*}
    \|(X^{\top}W)^j\|_2 = \sqrt{\sum_{i=1}^d\Big(\sum_{n=1}^N\phi_j(x_n, a_n)\varepsilon_{ni}\Big)^2}.
\end{equation*}
It is easy to verify that $\{\phi_j(x_n, a_n)\varepsilon_{ni}\}_{n=1}^N$ is also a martingale difference sequence for any $i,j\in[d]$ and $|\phi_j(x_n, a_n)\varepsilon_{ni}|\leq 1$ since we assume $\|\phi(x,a)\|_{\infty}\leq 1$ for any state-action pair. According to Azuma-Hoeffding inequality (Lemma \ref{lemma:azuma}), for all $\tilde{\delta}>0$,
\begin{equation*}
    \begin{split}
        \mathbb P\Big(\Big|\sum_{n=1}^N\phi_j(x_n, a_n)\varepsilon_{ni}\Big|\geq \tilde{\delta}\Big)\leq 2\exp\Big(-\frac{\tilde{\delta}^2}{2N}\Big).
    \end{split}
\end{equation*}
Using the union bound twice, the following holds,
\begin{equation}\label{eqn:empirical_process}
     \mathbb P\Big(\max_{j\in[d]}\sqrt{\sum_{i=1}^d\Big(\sum_{n=1}^N\phi_j(x_n, a_n)\varepsilon_{ni}\Big)^2}\geq \sqrt{d}\tilde{\delta}\Big)\leq 2d^2\exp\Big(-\frac{\tilde{\delta}^2}{2N}\Big).
\end{equation}
Letting $\delta = 2d^2\exp(-\tilde{\delta}^2/2N)$, we have with probability at least $1-\delta$,
\begin{equation*}
    \frac{1}{Nd}\max_{j\in[d]}\big\|(X^{\top}W)^j\big\|_2\leq \frac{\lambda_2}{4}.
\end{equation*}
Define an event $\cA$ as 
\begin{equation*}
    \cA = \Big\{\frac{1}{Nd}\max_{j\in[d]}\big\|(X^{\top}W)^j\big\|_{2}\leq \frac{\lambda_2}{4}\Big\}.
\end{equation*}
Then we have $\mathbb P(\cA) \geq 1-\delta$.

\textbf{Step 3.} According to Karush–Kuhn–Tucker (KKT) condition, the solution $\hat{\bbeta}$ of the optimization problem Eq.~\eqref{eqn:group_lasso_rewrite} satisfies 
$$
\begin{cases}
\big(X^{\top}(Y-X\hat{\bbeta})\big)^j/(Nd) = \lambda_2 \hat{\bbeta}^j/\|\hat{\bbeta}^j\|_2,& \text{if} \  \hat{\bbeta}^j\neq 0,\\
\|\big(X^{\top}(Y-X\hat{\bbeta})\big)^j\|_2/(Nd) \leq \lambda_2, & \text{if} \ \hat{\bbeta}^j = 0.
\end{cases}$$
Under event $\cA$ and using KKT condition, we have if $\hat{\bbeta}^j\neq 0$, then
\begin{equation*}
    \begin{split}
        \lambda_2 = \big\|\frac{1}{Nd}\big(X^{\top}(Y-X\hat{\bbeta})\big)^j\big\|_2& = \big\|\frac{1}{Nd}\big(X^{\top}X(\bbeta^* -\hat{\bbeta})\big)^j +\frac{1}{Nd}\big(X^{\top}W\big)^j\big\|_2\\
        &\leq  \big\|\frac{1}{Nd}\big(X^{\top}X(\bbeta^* -\hat{\bbeta})\big)^j\big\|_2 +\big\|\frac{1}{Nd}\big(X^{\top}W\big)^j\big\|_2\\ 
        &\leq  \big\|\frac{1}{Nd}\big(X^{\top}X(\bbeta^* -\hat{\bbeta})\big)^j\big\|_2 + \frac{1}{4}\lambda_2,
    \end{split}
\end{equation*}
which implies
\begin{equation}\label{eqn:Bound1}
    \begin{split}
        \frac{1}{(Nd)^2}\big\|\big(X^{\top}X(\hat{\bbeta} - \bbeta^*)\big)_{\cS(\hat{\bbeta})}\big\|_2^2&=  \frac{1}{(Nd)^2}\sum_{j\in \cS(\hat{\bbeta})}\big\|\big(X^{\top}X(\hat{\bbeta} - \bbeta^*)\big)^j\big\|_2^2\\
        &\geq |\cS(\hat{\bbeta})|\frac{9}{16}\lambda_2^2.
    \end{split}
\end{equation}
We define a notation of  restricted maximum eigenvalue with respect to $\cS(\bbeta^*)$ and $X$:
\begin{equation}\label{def:rme}
    \tilde{C}_{\max}(m) = \max_{\bbeta\in\mathbb R^{d^2\times d^2}:\|\bbeta_{\cS(\bbeta^{*})^c}\|_{2,0}\leq m}\frac{\bbeta^{\top}X^{\top}X\bbeta}{N\|\bbeta\|_2^2}.
\end{equation}
Denote $\hat{m} = |\cS(\hat{\bbeta}) \setminus \cS(\bbeta^*)|$. Then we have 
\begin{equation}\label{eqn:Bound2}
    \begin{split}
        \big\|\big(X^{\top}X(\hat{\bbeta} - \bbeta^*)\big)_{\cS(\hat{\bbeta})}\big\|_2&\leq \sup_{\|\balpha_{\cS(\bbeta^*)^c}\|_{2,0}\leq \hat{m}}\frac{|\balpha^{\top}X^{\top}X(\hat{\bbeta} - \bbeta^*)|}{\|\balpha\|_2}\\
        &\leq \sup_{\|\balpha_{\cS(\bbeta^*)^c}\|_{2,0}\leq \hat{m}}\frac{\|\balpha^{\top}X^{\top}\|_2\|X(\hat{\bbeta} - \bbeta^*)\|_2}{\|\balpha\|_2}\\
        &=\sup_{\|\balpha_{\cS(\bbeta^*)^c}\|_{2,0}\leq \hat{m}}\frac{\sqrt{|\balpha^{\top}X^{\top}X\balpha|}}{\|\balpha\|_2}\|X(\hat{\bbeta} - \bbeta^*)\|_2\\
        &\leq \sqrt{N C_{\max}(\hat{m})} \|X(\hat{\bbeta} - \bbeta^*)\|_2.
    \end{split}
\end{equation}
Combining Eqs.~\eqref{eqn:Bound1} and \eqref{eqn:Bound2} together, we have 
\begin{equation}\label{eqn:group_bound1}
    |\cS(\hat{\bbeta})|\leq \frac{16C_{\max}(\hat{m})}{9Nd^2\lambda_2^2}\big\|X(\hat{\bbeta}-\bbeta^*)\big\|_2^2,
\end{equation}
holds with probability at least $1-\delta$.

\textbf{Step 4.} It remains to control the in-sample prediction error $\|X(\hat{\bbeta}-\bbeta^*)\|_2^2$. Under event $\cA$, using Eq.~\eqref{eqn:decom} implies
\begin{equation*}
    \begin{split}
        \frac{1}{Nd}\big\|X(\hat{\bbeta}-\bbeta^*)\big\|_2^2\leq \frac{\lambda_2}{2}\sum_{j=1}^d\big\|\hat{\bbeta}^j-\bbeta^{*j}\big\|_2 + \lambda_2\sum_{j=1}^d\Big(\|\bbeta^{*j}\|_2 - \|\hat{\bbeta}^j\|_2\Big).
    \end{split}
\end{equation*}
Adding $\sum_{j=1}^d\big\|\hat{\bbeta}^j-\bbeta^{*j}\big\|_2\lambda_2/2$ to both sides and using the fact that $\|\bbeta^{*j}-\bbeta^{*j}\|_2+ \|\hat{\bbeta}^j\|_2- \|\hat{\bbeta}^j\|_2=0$ for $j\neq \cS(\bbeta^*)$, we have 
\begin{equation}\label{eqn:decom2}
\begin{split}
    &\frac{1}{Nd}\big\|X(\hat{\bbeta}-\bbeta^*)\big\|_2^2 + \frac{\lambda_2}{2}\sum_{j=1}^d\big\|\hat{\bbeta}^j-\bbeta^{*j}\big\|_2 \\
    \leq&  \lambda_2\sum_{j\in \cS(\bbeta^*)}\Big(\big\|\hat{\bbeta}^j - \bbeta^{*j}\big\|_2 + \|\bbeta^{*j}\|_2 - \|\hat{\bbeta}^j\|_2\Big)\\
    \leq& 2\lambda_2 \sqrt{s} \big\|(\hat{\bbeta}-\bbeta^*)_{\cS(\bbeta^*)}\big\|_2,
\end{split}
\end{equation}
where the last inequality is from Cauchy-Schwarz inequality. Recall that the expected uncentered covariance matrix is defined as 
\begin{equation*}
      \Sigma=\mathbb E\Big[\frac{1}{L}\sum_{h=0}^{L-1}\phi(x_{h}^{(1)},a_{h}^{(1)})\phi(x_{h}^{(1)},a_{h}^{(1)})^{\top}\Big],
\end{equation*}
and we define the empirical uncentered covariance matrix as 
\begin{equation*}
    \begin{split}
        \hat{\Sigma} = \frac{1}{N}\sum_{n=1}^N\phi(x_n,a_n)\phi(x_n ,a_n)^{\top} = \frac{1}{K}\sum_{k=1}^K\frac{1}{L}\sum_{h=0}^{L-1}\phi(x_h^{(k)},a_h^{(k)})\phi(x_h^{(k)},a_h^{(k)})^{\top},
    \end{split}
\end{equation*}
with $N = KL$. Denote the expected and empirical uncentered covariance matrices for the multivariate linear regression as
  $$  
 \begin{matrix}
\hat{\Psi}=\begin{pmatrix}
\hat{\Sigma} & \ldots& 0 \\ 
\vdots&\ddots & \vdots\\
0&\ldots&\hat{\Sigma}
\end{pmatrix}\in\mathbb R^{d^2\times d^2};   \Psi= \begin{pmatrix}
\Sigma & \ldots& 0 \\ 
\vdots&\ddots & \vdots\\
0&\ldots&\Sigma
\end{pmatrix}\in\mathbb R^{d^2\times d^2}.
\end{matrix}
$$
We introduce a generalization of restricted eigenvalue condition (Definition \ref{def:RE}) for multivariate linear regression.
\begin{definition}[$\ell_{2,1}$-restricted eigenvalue]
Given a symmetric matrix $H\in\mathbb R^{d^2\times d^2}$ and integer $s\geq 1$, the restricted eigenvalue of $H$ is defined as
\begin{equation*}
    \tilde{C}_{\min}(H, s):=\min_{\cS\subset [d], |\cS|\leq s}\min_{\bbeta\in\mathbb R^{d^2}}\Big\{\frac{\langle \bbeta, H\bbeta \rangle}{\|\bbeta_{\cS}\|_2^2}: \bbeta\in\mathbb R^{d^2}, \|\bbeta_{\cS^c}\|_{2,1}\leq 3\|\bbeta_{\cS}\|_{2,1}\Big\}.
\end{equation*}
\end{definition}
Next lemma provides a lower bound for $\tilde{C}_{\min}(\hat{\Phi}, s)$. The proof is deferred to Appendix \ref{sec:proof_RE_condition}.
\begin{lemma}\label{lemma:RE_condition}
Assume the $\ell_{2,1}$-restricted eigenvalue of $\Psi$ satisfies $\tilde{C}_{\min}(\Psi,s)>0$ for some $\delta>0$. Suppose the sample size satisfies $N\geq 32^2 L\log (3d^2/\delta)s^2/\tilde{C}_{\min}(\Psi, s)^2$. Then the $\ell_{2,1}$-restricted eigenvalue of $\hat{\Psi}$ satisfies $\tilde{C}_{\min}(\hat{\Psi}, s)>\tilde{C}_{\min}(\Psi, s)/2$ with probability at least $1-\delta$.
\end{lemma}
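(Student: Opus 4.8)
The plan is to control the gap between $\tilde{C}_{\min}(\Psi,s)$ and $\tilde{C}_{\min}(\hat\Psi,s)$ by a uniform bound on the deviation of the quadratic form $\langle \bbeta,(\hat\Psi-\Psi)\bbeta\rangle$ over the restricted cone, and then to show that this deviation is governed entirely by the entrywise error $\|\hat\Sigma-\Sigma\|_{\max}:=\max_{j,k}|\hat\Sigma_{jk}-\Sigma_{jk}|$. Since $\hat\Psi$ and $\Psi$ are block diagonal with $d$ identical diagonal blocks $\hat\Sigma$ and $\Sigma$, I would first reshape $\bbeta\in\mathbb R^{d^2}$ into a $d\times d$ array with column blocks $\bbeta_{(i)}\in\mathbb R^d$ and row groups $\bbeta^j$, and write
\[
\langle \bbeta,(\hat\Psi-\Psi)\bbeta\rangle=\sum_{i=1}^d \bbeta_{(i)}^\top(\hat\Sigma-\Sigma)\bbeta_{(i)},\qquad \big|\bbeta_{(i)}^\top(\hat\Sigma-\Sigma)\bbeta_{(i)}\big|\le \|\hat\Sigma-\Sigma\|_{\max}\,\|\bbeta_{(i)}\|_1^2.
\]

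The delicate algebraic step is that the quadratic form decomposes over the \emph{column} blocks $\bbeta_{(i)}$, whereas the cone and the $\ell_{2,1}$-regularizer are defined through the \emph{row} groups $\bbeta^j$. I would bridge this by the bound
\[
\sum_{i=1}^d \|\bbeta_{(i)}\|_1^2=\sum_{j,j'}\big\langle |\bbeta^j|,|\bbeta^{j'}|\big\rangle\le \Big(\sum_{j=1}^d\|\bbeta^j\|_2\Big)^2=\|\bbeta\|_{2,1}^2,
\]
obtained by expanding $\|\bbeta_{(i)}\|_1^2$, exchanging the order of summation over $i$ and $(j,j')$, and applying Cauchy--Schwarz to each $\langle|\bbeta^j|,|\bbeta^{j'}|\rangle\le\|\bbeta^j\|_2\|\bbeta^{j'}\|_2$. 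Combined with the cone inequality $\|\bbeta_{\cS^c}\|_{2,1}\le 3\|\bbeta_{\cS}\|_{2,1}$ and $\|\bbeta_{\cS}\|_{2,1}\le\sqrt{s}\,\|\bbeta_{\cS}\|_2$, this gives $\|\bbeta\|_{2,1}^2\le 16s\|\bbeta_{\cS}\|_2^2$, hence $|\langle\bbeta,(\hat\Psi-\Psi)\bbeta\rangle|\le 16s\,\|\hat\Sigma-\Sigma\|_{\max}\,\|\bbeta_{\cS}\|_2^2$. Dividing by $\|\bbeta_{\cS}\|_2^2$ and taking the infimum over the cone yields $\tilde{C}_{\min}(\hat\Psi,s)\ge \tilde{C}_{\min}(\Psi,s)-16s\|\hat\Sigma-\Sigma\|_{\max}$.

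It then remains to show $\|\hat\Sigma-\Sigma\|_{\max}\le \tilde{C}_{\min}(\Psi,s)/(32s)$ with probability $1-\delta$. Here I would use Assumption~\ref{assm:data_colle}: writing $\hat\Sigma=\frac1K\sum_{k=1}^K\frac1L\sum_{h=0}^{L-1}\phi(x_h^{(k)},a_h^{(k)})\phi(x_h^{(k)},a_h^{(k)})^\top$, each entry is an average over $K$ \emph{independent} episodes of per-episode quantities lying in $[-1,1]$ (using $\|\phi\|_\infty\le1$) with mean $\Sigma_{jk}$. A Hoeffding bound per entry plus a union bound over the $d^2$ entries gives $\mathbb P(\|\hat\Sigma-\Sigma\|_{\max}\ge t)\le 2d^2\exp(-Kt^2/2)$, so $\|\hat\Sigma-\Sigma\|_{\max}\lesssim\sqrt{L\log(d^2/\delta)/N}$ on an event of probability $1-\delta$ (recall $K=N/L$). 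Substituting the hypothesis $N\ge 32^2 L\log(3d^2/\delta)s^2/\tilde{C}_{\min}(\Psi,s)^2$ forces $16s\|\hat\Sigma-\Sigma\|_{\max}\le\tilde{C}_{\min}(\Psi,s)/2$, completing the proof.

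I expect the main obstacle to be precisely the index-mismatch step: the cone and regularizer live on the row groups $\bbeta^j$ while the block-diagonal covariance acts on the column blocks $\bbeta_{(i)}$, and one must pass between the two (via the double-sum Cauchy--Schwarz above) without losing more than a factor linear in $s$. The concentration part is routine given the episodic independence, the only care being to cluster the per-episode averages so that Hoeffding (or Azuma--Hoeffding, Lemma~\ref{lemma:azuma}) applies to genuinely independent summands.
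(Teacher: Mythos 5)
Your proof is correct and follows the same two-part skeleton as the paper's: (i) entrywise concentration of $\hat\Sigma$ around $\Sigma$ via Hoeffding's inequality applied to the $K$ independent per-episode averages, plus a union bound over the $d^2$ entries, and (ii) a deterministic transfer step showing that an entrywise perturbation of size at most $\tilde{C}_{\min}(\Psi,s)/(32s)$ degrades the restricted eigenvalue by at most a factor of $2$. The only difference is how (ii) is handled: the paper simply invokes Corollary 6.8 of \cite{buhlmann2011statistics}, restated as Lemma \ref{lemma:eigen_concentration}, whereas you prove it from scratch via the column-block decomposition $\langle\bbeta,(\hat\Psi-\Psi)\bbeta\rangle=\sum_i\bbeta_{(i)}^{\top}(\hat\Sigma-\Sigma)\bbeta_{(i)}$, the bound $|\bbeta_{(i)}^{\top}(\hat\Sigma-\Sigma)\bbeta_{(i)}|\le\|\hat\Sigma-\Sigma\|_{\max}\|\bbeta_{(i)}\|_1^2$, the double-sum Cauchy--Schwarz identity $\sum_i\|\bbeta_{(i)}\|_1^2\le\|\bbeta\|_{2,1}^2$, and the cone inequality $\|\bbeta\|_{2,1}\le4\sqrt{s}\,\|\bbeta_{\cS}\|_2$. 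This explicit argument is a genuine improvement in rigor: the cited corollary is stated for the ordinary $\ell_1$-cone restricted eigenvalue of a single covariance matrix, and its extension to the $\ell_{2,1}$/block-diagonal setting used here is exactly the row-group-versus-column-block bridging you carry out (note that the naive route through $\|\bbeta\|_1\le\sqrt{d}\,\|\bbeta\|_{2,1}$ would cost a factor of $d$, so your two-index Cauchy--Schwarz is the right move, not a cosmetic one). Your threshold $16s\|\hat\Sigma-\Sigma\|_{\max}\le\tilde{C}_{\min}(\Psi,s)/2$ reproduces exactly the $32s$ constant appearing in the paper's lemma, and the sample-size bookkeeping with $K=N/L$ matches the paper up to the absolute constants that the paper itself leaves unspecified in its Hoeffding step.
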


On the other hand, from Eq.~\eqref{eqn:decom2}, we know that 
\begin{equation*}
\begin{split}
    \frac{1}{2}\sum_{j=1}^d\big\|\hat{\bbeta}^j-\bbeta^{*j}\big\|_2 &\leq 2\sum_{j\in \cS(\bbeta^*)}\big\|\hat{\bbeta}^j - \bbeta^{*j}\big\|_2,
\end{split}
\end{equation*}
and thus,
\begin{equation}\label{eqn:beta_restricted_set}
\begin{split}
    \sum_{j\in \cS(\bbeta^{*})^c}\big\|\hat{\bbeta}^j-\bbeta^{*j}\big\|_2 &\leq 3\sum_{j\in \cS(\bbeta^*)}\big\|\hat{\bbeta}^j - \bbeta^{*j}\big\|_2.
\end{split}
\end{equation}
This implies that $\|(\hat{\bbeta} - \bbeta^{*})_{\cS(\bbeta^{*})^c}\|_{2,1}\leq 3\|(\hat{\bbeta} - \bbeta^{*})_{\cS(\bbeta^{*})}\|_{2,1}$. Applying Lemma \ref{lemma:RE_condition}, the following holds with probability at least $1-\delta$,
\begin{equation}\label{eqn:re_bound}
\frac{\big\|X(\hat{\bbeta}-\bbeta^*)\big\|_2^2}{N\big\|(\hat{\bbeta}-\bbeta^*)_{\cS(\bbeta^*)}\big\|_2^2} = \frac{(\hat{\bbeta}-\bbeta^*)^{\top}\hat{\Psi}(\hat{\bbeta}-\bbeta^*)}{\big\|(\hat{\bbeta}-\bbeta^*)_{\cS(\bbeta^*)}\big\|_2^2}\geq \tilde{C}_{\min}(\hat{\Psi},s)\geq  \tilde{C}_{\min}(\Psi,s)/2.
\end{equation}
Plugging the above bound into Eq.~\eqref{eqn:decom2},
\begin{equation*}
\begin{split}
    \frac{1}{Nd}\big\|X(\hat{\bbeta}-\bbeta^*)\big\|_2^2&\leq 2\lambda_2\sqrt{s}\big\|(\hat{\bbeta}-\bbeta^*)_{\cS(\bbeta^*)}\big\|_2\\
    &\leq \frac{1}{\sqrt{N}}\big\|X(\hat{\bbeta}-\bbeta^*)\big\|_2\frac{4\sqrt{s}\lambda_2}{\sqrt{\tilde{C}_{\min}(\Psi,s)}}.
\end{split}
\end{equation*}
Combining with Eq.~\eqref{eqn:group_bound1} and the choice of $\lambda_2$ in Eq.~\eqref{eqn:lambda_2}, we reach
\begin{equation*}
    |\cS(\hat{\bbeta})|\leq  \frac{16\tilde{C}_{\max}}{9Nd^2\lambda_2^2} \frac{16\lambda_2^2Nd^2s}{\tilde{C}_{\min}(\Psi, s)} = \frac{256\tilde{C}_{\max}(\hat{m})s}{9\tilde{C}_{\min}(\Psi, s)},
\end{equation*}
with probability at least $1-\delta$, as long as $N\geq 32^2 L\log (3d^2/\delta)s^2/\tilde{C}_{\min}(\Psi, s)^2$.  If the vanilla restricted eigenvalue (Definition \ref{def:RE}) of $\Sigma$ satisfies $C_{\min}(\Sigma, s)>0$, then we have for any $\cS\subset [d], |\cS|\leq s$, and any $\bbeta_j\in\mathbb R^d$ satisfying $\|(\bbeta_j)_{\cS^c}\|_{1}\leq 3\|(\bbeta_j)_{\cS}\|_{1}$,
\begin{equation*}
    \frac{\bbeta_j^{\top}\Sigma \bbeta_{j}}{\|(\bbeta_j)_{\cS}\|_2^2} \geq C_{\min}(\Sigma, s)>0.
\end{equation*}
Consider a sequence of vectors $\bbeta_1, \ldots, \bbeta_d$ satisfying $\|(\bbeta_j)_{\cS^c}\|_{1}\leq 3\|(\bbeta_j)_{\cS}\|_{1}$. Then for $\bbeta = (\bbeta_1^{\top}, \ldots, \bbeta_d^{\top})^{\top}$, we have 
\begin{equation*}
  \bbeta^{\top}\Psi\bbeta =  \sum_{j=1}^d \bbeta_j^{\top}\Sigma \bbeta_{j} \geq C_{\min}(\Sigma,s)\sum_{j=1}^d \big\|(\bbeta_j)_{\cS}\big\|_2^2 =  C_{\min}(\Sigma,s)  \big\|\bbeta_{\cS}\big\|_2^2.
\end{equation*}
Therefore, we conclude $\tilde{C}_{\min}(\Psi,s)\geq C_{\min}(\Sigma, s)>0$ such that 
\begin{equation}\label{eqn:S_beta}
    |\cS(\hat{\bbeta})|\leq  \frac{256\tilde{C}_{\max}(\hat{m})s}{9C_{\min}(\Sigma, s)},
\end{equation}
with probability at least $1-\delta$, as long as $N\geq 32^2 L\log (3d^2/\delta)s^2/C_{\min}^2(\Sigma, s)$. 

 For any $\bbeta = (\bbeta_1^{\top}, \ldots, \bbeta_d^{\top})^{\top}$ satisfying $\|\bbeta_{\cS(\bbeta^*)^c}\|_{2,0}\leq m_0$, we have $\|(\bbeta_j)_{\cS(\bbeta^*)^c}\|_0\leq m_0$ for any $j\in[d]$. 
Given a positive semi-definite matrix $Z\in\mathbb R^{d\times d}$ and integer $s\geq 1$, the restricted maximum eigenvalue of $Z$ are defined as
\begin{equation*}
\begin{split}
 C_{\max}(Z, s):= \max_{\cS\subset [d], |\cS|\leq s}\max_{\bbeta\in\mathbb R^d}\Big\{\frac{\langle \bbeta, Z\bbeta \rangle}{\|\bbeta\|_2^2}: \bbeta\in\mathbb R^{d}, \|\bbeta_{\cS^c}\|_{0}\leq s\Big\}.
\end{split}
\end{equation*}
Using the definition of $C_{\max}(\hat{\Sigma}, s)$, it holds that
\begin{equation*}
    \bbeta_j^{\top}\hat{\Sigma}\bbeta_j\leq C_{\max}(\hat{\Sigma}, s) \|\bbeta_j\|_2^2, \ \text{for any} \ j\in[d]. 
\end{equation*}
Summing the above inequality from 1 to $d$, 
\begin{equation*}
  \bbeta^{\top} \frac{X^{\top}X}{N}\bbeta= \sum_{j=1}^d\bbeta_j^{\top}\hat{\Sigma}\bbeta_j\leq C_{\max}(\hat{\Sigma}, s) \sum_{j=1}^d\|\bbeta_j\|_2^2 = C_{\max}(\hat{\Sigma}, s)\|\bbeta\|_2^2.
\end{equation*}
This implies $\tilde{C}_{\max}(\hat{m})\leq C_{\max}(\hat{\Sigma}, \hat{m})$. As shown in the Lemma 1 in \cite{belloni2013least}, we have $C_{\max}(\hat{\Sigma}, m)\leq 4C_{\max}(\Sigma, m)$ for any $m+s\leq \log(n)$ as long as $n \gtrsim s$.

\textbf{Step 5.} Recall that $\hat{m} = |\cS(\hat{\bbeta})\setminus \cS(\bbeta^*)|$ and denote 
\begin{equation*}
   \cM = \Big\{m\in\mathbb N^{+}: m>\frac{2048sC_{\max}(\Sigma, m)}{9C_{\min}(\Sigma, s)}\Big\} .
\end{equation*}
Suppose there is a $m_0\in\cM$ such that $\hat{m}>m_0$. From Eq.~\eqref{eqn:S_beta}, we know that
\begin{equation*}
    \hat{m}\leq |\cS(\hat{\bbeta})|\leq C_{\max}(\Sigma, m) \frac{1024s}{9C_{\min}(\Sigma, s)}. 
\end{equation*}
According to Lemma 3 in \cite{belloni2013least} for the sublinearity of sparse maximum eigenvalues, we have 
\begin{equation*}
    C_{\max}(\Sigma, \hat{m}) \leq \lceil  \hat{m}/m_0\rceil   C_{\max}(\Sigma, m_0) \leq   2C_{\max}(\Sigma, m_0)\hat{m}/m_0.
\end{equation*}
where the last inequality we use $\lceil  \kappa\rceil \leq 2\kappa $. Putting the above two results together, we have 
\begin{equation*}
    m_0\leq \frac{2048s  C_{\max}(\Sigma, m_0)}{9C_{\min}(\Sigma, s)}.
\end{equation*}
This leads a contradiction with the definition of $\cM$. Therefore, $\hat{m}\leq m_0$ for all $m_0\in\cM$. This implies 
\begin{equation*}
\begin{split}
     |\cS(\hat{\bbeta})|&\leq  \min_{m_0\in\cM} C_{\max}(\Sigma, m_0) \frac{1024s}{9C_{\min}(\Sigma, s)}\\
     &= \Big[ \frac{1024\min_{m_0\in\cM}C_{\max}(\Sigma, m_0)}{9C_{\min}(\Sigma, s)}\Big] s\lesssim s.
\end{split}
\end{equation*}
The term $\min_{m_0\in\cM}C_{\max}(\Sigma, m_0)/C_{\min}(\Sigma, s)$ essentially characterizes the condition number of $\Sigma$ on a restricted support and is upper bounded by the condition number defined in the full support. Now we finish the proof of the first part of Lemma \ref{lemma:group_lasso} and start to prove the second part of Lemma \ref{lemma:group_lasso} under separability  condition.

According to Eq.~\eqref{eqn:beta_restricted_set}, under event $\cA$ we have 
\begin{equation*}
\begin{split}
     \big\|\hat{\bbeta}-\bbeta^*\big\|_{2,1} = \sum_{j=1}^d\big\|\hat{\bbeta}^j-\bbeta^{*j}\big\|_2 &= \sum_{j\in\cS(\bbeta^*)}\big\|\hat{\bbeta}^j-\bbeta^{*j}\big\|_2 + \sum_{j\in\cS(\bbeta^{*})^c}\big\|\hat{\bbeta}^j-\bbeta^{*j}\big\|_2\\
     & \leq4\sum_{j\in\cS(\bbeta^*)}\big\|\hat{\bbeta}^j-\bbeta^{*j}\big\|_2.
\end{split}
\end{equation*}
From Eq.~\eqref{eqn:re_bound},
\begin{equation*}
\begin{split}
     \sum_{j\in\cS(\bbeta^*)}\big\|\hat{\bbeta}^j-\bbeta^{*j}\big\|_2 &\leq \sqrt{s}\big\|(\hat{\bbeta}-\bbeta^*)_{\cS(\bbeta^*)}\big\|_2\\
     &\leq \sqrt{\frac{2s}{\tilde{C}_{\min}(\Sigma, s)}}\frac{1}{\sqrt{N}} \big\|X(\hat{\bbeta}-\bbeta^*)\big\|_2\\
     &\leq \frac{4\sqrt{2}sd\lambda_2}{\tilde{C}_{\min}(\Sigma, s)}\leq \frac{4\sqrt{2}sd\lambda_2}{C_{\min}(\Sigma, s)}.
\end{split}
\end{equation*}
Combining the above two inequality together and plugging in the choice of $\lambda_2$, we can bound
\begin{equation*}
   \big\|\hat{\bbeta}-\bbeta^*\big\|_{2,1}\leq \frac{64\sqrt{2}s\sqrt{d}}{C_{\min}(\Sigma,s)} \sqrt{\frac{2\log(2d^2/\delta)}{N}}.
\end{equation*}
with probability at least $1-\delta$. Under Assumption \ref{ass:signal}, the following holds that with probability at least $1-\delta$,
\begin{equation*}
   \min_{j\in \cS(\bbeta^*)} \big\|\bbeta^{*j}\big\|_2 > \big\|\hat{\bbeta} -\bbeta^*\big\|_{2,1}\geq \big\|\hat{\bbeta} -\bbeta^*\big\|_{2,\infty}.
\end{equation*}
If there is a $j\in \cS(\bbeta^*)$ but $j\notin\cS(\hat{\bbeta})$, we have 
\begin{equation*}
    \big\|\hat{\bbeta}^j-\bbeta^{*j}\big\|_2 = \big\|\bbeta^{*j}\big\|_2> \big\|\hat{\bbeta} -\bbeta^*\big\|_{2,\infty}.
\end{equation*}
On the other hand,
\begin{equation*}
     \big\|\hat{\bbeta}^j-\bbeta^{*j}\big\|_2\leq \big\|\hat{\bbeta} -\bbeta^*\big\|_{2,\infty},
\end{equation*}
which leads a contradiction. Now we conclude that $\cS(\hat{\bbeta})\supseteq \cK$. This ends the proof.  \hfill $\blacksquare$\\

\subsection{Proof of Theorem \ref{thm:upper_bound_eva2}: instance-dependent upper bound}\label{proof:upper_bound_eva2}

We restate the instance-dependent error bound error bound of vanilla fitted Q-evaluation algorithm on the full support.
\begin{theorem}[Theorem 5 in \cite{duan2020minimax}]\label{thm:instance_dependent_org}
 Assume the DMDPs satisfy Assumption \ref{assum:sparse_MDP} and batch dataset satisfy Assumption \ref{assum:sparse_MDP}. Suppose $\phi(x,a)^{\top}\Sigma^{-1}\phi(x,a)\lesssim d$ for any pair of $(x,a)$. Let $\delta\in(0,1)$ and Algorithm \ref{alg:evaluation_restricted} without feature selection stage takes $N$ samples satisfying 
	\begin{equation*}
	    N\gtrsim \frac{\gamma^2 \log(d/\delta)d}{(1-\gamma)^3}.
	\end{equation*}
	Set regularization parameter $\lambda_3 = \lambda_{\min}(\Sigma)\log(12d/\delta)C_1d/(1-\gamma)$. Letting the number of iteration $T\to\infty$, the following holds with probability at least $1-\delta$,
\begin{equation*}
\begin{split}
     \hat{v}^{\pi}_{\hat{w}}-v^{\pi} &\leq\frac{1}{1-\gamma} \sum_{t=0}^{\infty}\gamma^{t+1}\sqrt{(\nu_t^{\pi})^{\top}\Sigma^{-1} (\nu_t^{\pi})}\sqrt{\frac{\log (1/\delta)}{N}} + \frac{\gamma \ln(12d/\delta)d}{N(1-\gamma)^{3.5}},
\end{split}
\end{equation*}
where $\nu_t^{\pi} = \mathbb E^{\pi}[\phi(x_t,a_t)|x\sim\xi_0]$.
\end{theorem}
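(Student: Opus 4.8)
The plan is to characterize the $T\to\infty$ limit of Algorithm~\ref{alg:evaluation_restricted} (run on the full feature set, so $\hat{\cK}=[d]$) as the fixed point of the empirical ridge Bellman map, and then transfer the one-step regression error to the value error through a discounted telescoping, measuring each one-step error in the data-dependent $\Sigma^{-1}$-geometry. Because the same dataset is reused at every iteration, the ridge iterate obeys $\hat w_t=(\hat\Sigma+\tfrac{\lambda_3}{N}I)^{-1}\tfrac1N\sum_n\phi(x_n,a_n)\big(r(x_n,a_n)+\gamma\phi^\pi(x_n')^\top\hat w_{t-1}\big)$, which is the composition of a truncated Bellman backup with a (non-expansive) ridge projection; since $\gamma<1$ this map is a $\gamma$-contraction in the appropriate norm once $\hat\Sigma$ concentrates around $\Sigma$, so $\hat w_t$ converges geometrically to a unique fixed point $\hat w$. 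I would first record the resulting self-consistent equation for $\hat w$ and compare it against the true parameter $w^*$ from Proposition~\ref{prop:linmdp} (with $g_{w^*}=Pv^\pi$ and $v^\pi=r^\pi+\gamma g_{w^*}^\pi$).

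Next I would replay the error decomposition of Appendix~\ref{proof:upper_bound_eva1} (Steps~2--4) at this single fixed point rather than along a fold-splitting recursion. By the matrix mean embedding of Proposition~\ref{prop:linmdpf}, the conditional mean of the regression target is linear in $\phi$, which defines a population coefficient $\bar w$ (this representation is exact in the linear-MDP setting of \cite{duan2020minimax}); telescoping the identity $V^\pi_{\hat w}-v^\pi=(\hat{\cT}_{\pi}-\cT_{\pi})V^\pi_{\hat w}+\cT_{\pi}(V^\pi_{\hat w}-v^\pi)$ along the $\pi$-trajectory started from $\xi_0$ yields
\begin{equation*}
\hat v^\pi_{\hat w}-v^\pi=\sum_{t=0}^{\infty}\gamma^{t+1}(\nu_t^\pi)^\top(\hat w-\bar w)+(\text{Monte Carlo error}),
\end{equation*}
where $\nu_t^\pi=\E^\pi[\phi(x_t,a_t)\mid x_0\sim\xi_0]$. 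The truncation tail $\gamma^T/(1-\gamma)$ that appeared in Theorem~\ref{thm:upper_bound_eva1} now vanishes because $T\to\infty$, which is exactly why the final bound carries the full infinite sum; the Monte-Carlo term is controlled by Hoeffding as in Eq.~\eqref{eqn:mc_error}.

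The heart of the argument is a sharp bound on the projected error $|(\nu_t^\pi)^\top(\hat w-\bar w)|$. Expanding the fixed-point equation, $\hat w-\bar w$ solves a linear system whose leading contribution, after resolving the $\gamma$-contraction via a Neumann series, is $(\nu_t^\pi)^\top\hat\Sigma^{-1}\tfrac1N\sum_n\phi(x_n,a_n)\varepsilon_n$, where $\varepsilon_n=\gamma\big(\phi^\pi(x_n')^\top-\phi(x_n,a_n)^\top K^\pi\big)\hat w$ is a martingale-difference noise bounded by $O(1/(1-\gamma))$ thanks to the truncation $\Pi_{[0,1/(1-\gamma)]}$. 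This projection is a sum of martingale differences with conditional variance $\approx\tfrac1N(\nu_t^\pi)^\top\hat\Sigma^{-1}\Sigma\hat\Sigma^{-1}\nu_t^\pi$, so a self-normalized (Freedman/Azuma-type) concentration delivers the factor $\tfrac{1}{1-\gamma}\sqrt{(\nu_t^\pi)^\top\Sigma^{-1}\nu_t^\pi}\sqrt{\log(1/\delta)/N}$ after replacing $\hat\Sigma$ by $\Sigma$. The ridge-bias term $\tfrac{\lambda_3}{N}\bar w$, together with the error incurred in the substitution $\hat\Sigma^{-1}\to\Sigma^{-1}$ and the assumption $\phi(x,a)^\top\Sigma^{-1}\phi(x,a)\lesssim d$, contributes the lower-order $\frac{\gamma\ln(12d/\delta)d}{N(1-\gamma)^{3.5}}$ term. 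Summing over $t$ with $\sum_t\gamma^t\le 1/(1-\gamma)$ then assembles the stated bound.

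The main obstacle I anticipate is twofold. First, the noise $\varepsilon_n$ depends on the fixed point $\hat w$, which is itself a function of the entire dataset, so $\{\varepsilon_n\}$ is not a martingale-difference sequence relative to a fixed regression target; I would resolve this either by a uniform concentration over an $\epsilon$-net of the parameter ball (which is bounded because all iterates are truncated to $[0,1/(1-\gamma)]$) or by the decoupling argument of \cite{duan2020minimax}. Second, transferring the concentration from $\hat\Sigma$ to $\Sigma$ \emph{inside} the weighted norm---so that $(\nu_t^\pi)^\top\hat\Sigma^{-1}\Sigma\hat\Sigma^{-1}\nu_t^\pi$ is genuinely close to $(\nu_t^\pi)^\top\Sigma^{-1}\nu_t^\pi$---requires matrix concentration over the $K$ independent episodes and is precisely where the sample-size condition $N\gtrsim\gamma^2\log(d/\delta)d/(1-\gamma)^3$ and the explicit $d$-dependence of the second error term enter.
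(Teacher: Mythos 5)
First, a structural point: the paper never proves this statement. It is quoted verbatim as Theorem~5 of \cite{duan2020minimax} and used as a black box in Appendix~\ref{proof:upper_bound_eva2}, where it is simply re-instantiated on the reduced support $\hat{\cK}$ (justified by Lemma~\ref{lemma:group_lasso}) to deduce Theorem~\ref{thm:upper_bound_eva2}. So there is no in-paper proof to compare yours against; the relevant benchmark is the argument in \cite{duan2020minimax}.

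Measured against that benchmark, your architecture is the right one: fixed point of the affine ridge--Bellman iteration as $T \to \infty$, telescoping of the value error along the $\pi$-trajectory, instance-dependent martingale concentration for the leading term, and ridge bias plus $\hat\Sigma \to \Sigma$ substitution for the $O(d/N)$ term. However, your proposed resolution of the main obstacle you correctly identify is where the proof would break. If the noise $\varepsilon_n$ is defined relative to the data-dependent fixed point $\hat w$ and you restore measurability by a uniform bound over an $\epsilon$-net of the parameter ball, the union bound over a net of cardinality $e^{c\,d\log(1/\epsilon)}$ turns $\log(1/\delta)$ into $\log(1/\delta)+d\log(N/(1-\gamma))$, so the leading term becomes of order $\sqrt{(d\log N+\log(1/\delta))/N}$ rather than $\sqrt{\log(1/\delta)/N}$. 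That defeats the entire point of the theorem: its leading term is dimension-free, and $d$ is only allowed to enter the second, $O(d/N)$ term. The fix---and what the cited proof actually does---is to linearize the noise around the \emph{true} value function rather than around $\hat w$: the one-step error is written as the Bellman residual noise of $v^{\pi}$ (a genuine martingale difference sequence, bounded by $1/(1-\gamma)$ since $\|v^{\pi}\|_{\infty}\leq 1/(1-\gamma)$) projected onto the \emph{fixed} directions $\Sigma^{-1}\nu_t^{\pi}$; the discrepancy between the noise at $\hat w$ and at the true parameter is second order (roughly $\sqrt{d/N}$ times the parameter estimation error) and is absorbed, together with the ridge bias and the $\hat\Sigma^{-1}\to\Sigma^{-1}$ substitution, into the $\gamma \ln(12d/\delta)d/(N(1-\gamma)^{3.5})$ term. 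Your hedge (``or by the decoupling argument of \cite{duan2020minimax}'') gestures at this, but as written the $\epsilon$-net route is the one you commit to, and it cannot yield the stated bound.

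A smaller inaccuracy: you justify boundedness of the noise ``thanks to the truncation $\Pi_{[0,1/(1-\gamma)]}$,'' but Algorithm~\ref{alg:evaluation_restricted} truncates only in the final Monte Carlo output; unlike Algorithm~\ref{alg:batch_eva}, the regression targets inside its loop are not truncated. Boundedness must instead come from centering at $v^{\pi}$ as above, or from a separate argument that the fixed point $\hat w$ has controlled norm via the contraction you establish in your first step.
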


If the true relevant feature set $\cK$ is known in an oracle case, we could directly run the algorithm on $\cK$ such that all the dependency on $d$ can be reduced to $s$ and the instance-dependent term turns to be defined in the $\cK$ that is much sharper than the original one. Fortunately, Lemma \ref{lemma:group_lasso} implies $\hat{\cK}\supseteq\cK$ and $|\hat{\cK}|\lesssim s$. Suppose 
\begin{equation*}
     N\gtrsim \frac{\gamma^2 \log(s/\delta)s}{(1-\gamma)^3}\gtrsim \frac{\gamma^2 \log(|\hat{\cK}|/\delta)|\hat{\cK}|}{(1-\gamma)^3}.
\end{equation*}
Rewriting Theorem \ref{thm:instance_dependent_org} with respect to $\hat{\cK}$, we have 
\begin{equation*}
\begin{split}
     |\hat{v}^{\pi}_{\hat{w}}-v^{\pi}| \lesssim\frac{1}{1-\gamma} \sum_{t=0}^{\infty}\gamma^{t+1}\sqrt{(\tilde{\nu}_t^{\pi})^{\top}\tilde{\Sigma}^{-1} (\tilde{\nu}_t^{\pi})}\sqrt{\frac{\log (1/\delta)}{N}},
\end{split}
\end{equation*}
where $\tilde{\nu}_t^{\pi} = [\nu_t^{\pi}]_{\hat{\cK}}$ and $\tilde{\Sigma} = \Sigma_{\hat{\cK}\times \hat{\cK}}$. The corresponding condition $\phi(x,a)^{\top}\Sigma^{-1}\phi(x,a)\lesssim |\hat{\cK}|$ can be satisfied due to $C_{\min}(\Sigma, s)>0$ and $\|\phi(x,a)\|_{\infty}\leq 1$. From Definitions \ref{def:om}, \ref{def:chi-square} and Lemma B.2 in \cite{duan2020minimax}, we have
\begin{equation*}
\begin{split}
   \sqrt{1 + \chi^2_{\cG(\hat{\cK})}(\mu^{\pi}, \bar{\mu})} &=  \sum_{t=0}^{\infty}\gamma^t \sup_{f\in\cG(\hat{\cK})}\frac{ (1-\gamma)\mathbb E^{\pi}[f(x_t, a_t)|x_0\sim \xi_0]}{\sqrt{\mathbb E[\frac{1}{L}\sum_{h=0}^{L-1}f^2(x_{1h},a_{1h})]}} \\
    &=   (1-\gamma)\sum_{t=0}^{\infty}\gamma^{t}\sqrt{(\tilde{\nu}_t^{\pi})^{\top}\tilde{\Sigma}^{-1} (\tilde{\nu}_t^{\pi})}.
\end{split}
\end{equation*}
Now we end the proof. \hfill $\blacksquare$\\

\section{Proof of Theorem \ref{thm:agm-result}: lasso fitted Q-iteration}\label{proof:upper_bound}

\subsection{Proof of Theorem \ref{thm:agm-result}}

The main structure of this proof is similar to the proof of Theorem \ref{thm:upper_bound_eva1} in Appendix \ref{proof:upper_bound_eva1} but we need to utilize the contraction property of Bellman optimality operator. Recall that we split the whole dataset into $T$ folds and each fold consists of $R$ episodes or $RL$ sample transitions. The overall sample size is $N=TRL$.

\textbf{Step 1.}  We verify that the execution of Algorithm \ref{alg:batch_opt} is equivalent to the approximate value iteration.  Recall that a generic Lasso estimator with respect to a function $V$ at $t$th phase is defined in Eq.~\eqref{eqn:sparse_solution} as 
\begin{equation*}
\hat{w}_{t}(V) = \argmin_{w\in\mathbb R^d}\Big(\frac{1}{RL}\sum_{i=1}^{RL}\Big(\Pi_{[0,1/(1-\gamma)]}V(x_i^{(t)'}) -\phi(x_i^{(t)},a_i^{(t)})^{\top}w\Big)^2 + \lambda_1 \|w\|_1\Big).
\end{equation*}
Define $V_w(x)=\max_{a\in\cA}(r(x,a)+\gamma \phi(x,a)^{\top}w)$. For simplicity, we write $\hat{w}_t := \hat{w}_{t}(V_{\hat{w}_{t-1}})$ for short. Define an approximate Bellman optimality operator $\hat{\cT}^{(t)}:\mathbb R^{\cX}\to\mathbb R^{\cX}$ as:
\begin{equation}\label{eqn:T1}
    [\hat{\cT}^{(t)}V](x) := \max_a\Big[r(x,a) + \gamma\phi(x,a)^{\top}\hat{w}_t(V)\Big].
\end{equation}
Note this $\hat{\cT}^{(t)}$ is a randomized operator that only depends data collected at $t$th phase.
 Algorithm \ref{alg:batch_opt} is equivalent to the following approximate value iteration:
\begin{equation}\label{eqn:api}
    [\hat{\cT}^{(t)}\Pi_{[0,1/(1-\gamma)]}V_{\hat{w}_{t-1}}](x) = \max_a\Big[r(x,a) + \gamma\phi(x,a)^{\top}\hat{w}_t\Big] = \max_a Q_{\hat{w}_t}(x,a) = V_{\hat{w}_t}(x).
\end{equation}
Recall that the true Bellman optimality operator $\cT:\mathbb R^{\cX}\to\mathbb R^{\cX}$ is defined as 
\begin{equation}\label{eqn:true_bell}
    [\cT V](x) := \max_a\Big[r(x,a)+\gamma \sum_{x'}P(x'|x,a)V(x')'\Big].
\end{equation}

\textbf{Step 2.} We verify that the true Bellman operator on $\Pi_{[0,1/(1-\gamma)]}V_{\hat{w}_{t-1}}$ can also be written as a linear form. From Condition \ref{assum:sparse_MDP}, there exists some functions $\psi(\cdot) = (\psi_k(\cdot))_{k\in\cK}$ such that  for every $x,a,x'$, the transition function can be represented as 
\begin{equation}\label{eqn:linear_MDP}
    P(x' |x,a) = \sum_{k\in \cK} \phi_k(x,a) \psi_k(x'),
\end{equation}
where $\cK\subseteq [d]$ and $|\cK|\leq s$. For a vector $\bar{w}_t\in\mathbb R^d$, we define its $k$th coordinate as
\begin{equation}\label{eqn:def_w_bar}
    \bar{w}_{t,k} = \sum_{x'} \Pi_{[0,1/(1-\gamma)]} V_{\hat{w}_{t-1}}(x')\psi_k(x'), \ \text{if} \ k\in\cK,
\end{equation}
and $\bar{w}_{t,k}=0$ if $k\notin \cK$. By the definition of true Bellman optimality operator in Eq.~\eqref{eqn:true_bell} and Eq.~\eqref{eqn:linear_MDP},
\begin{eqnarray}\label{eqn:true_bell_linear}
     [\cT \Pi_{[0,1/(1-\gamma)]}V_{\hat{w}_{t-1}}](x) &=& \max_a\Big[r(x,a)+\gamma \sum_{x'}P(x'|x,a)\Pi_{[0,1/(1-\gamma)]}V_{\hat{w}_{t-1}}(x')'\Big]\nonumber\\
     &=& \max_a\Big[r(x,a)+\gamma \sum_{x'}\phi(x,a)^{\top}\psi(x')\Pi_{[0,1/(1-\gamma)]}V_{\hat{w}_{t-1}}(x')'\Big]\nonumber\\
     &=& \max_a\Big[r(x,a)+\gamma \sum_{x'}\sum_{k\in\cK}\phi_k(x,a)\psi_k(x')\Pi_{[0,1/(1-\gamma)]}V_{\hat{w}_{t-1}}(x')'\Big]
    \nonumber\\
    &=& \max_a\Big[r(x,a)+\gamma \sum_{k\in\cK}\phi_k(x,a)\sum_{x'}\psi_k(x')\Pi_{[0,1/(1-\gamma)]}V_{\hat{w}_{t-1}}(x')'\Big]
    \nonumber\\
      &=& \max_a\Big[r(x,a)+\gamma\phi(x,a)^{\top}\bar{w}_t\Big].
\end{eqnarray}

\textbf{Step 3.} We start to bound $\|V_{\hat{w}_{t}}-v^*\|_{\infty}$ for each phase $t$. By the approximate value iteration form Eq.~\eqref{eqn:api} and the definition of optimal value function,
\begin{equation}\label{eqn:V_decomp}
    \begin{split}
         \big\|V_{\hat{w}_{t}}-v^*\big\|_{\infty}&= \big\|\hat{\cT}^{(t)}\Pi_{[0,1/(1-\gamma)]}V_{\hat{w}_{t-1}} - \cT v^*\big\|_{\infty} \\
   &=\big\|\hat{\cT}^{(t)}\Pi_{[0,1/(1-\gamma)]}V_{\hat{w}_{t-1}} - \cT\Pi_{[0,1/(1-\gamma)]}V_{\hat{w}_{t-1}} \big\|_{\infty} \\
   &+ \big\|\cT\Pi_{[0,1/(1-\gamma)]}V_{\hat{w}_{t-1}} - \cT v^* \big\|_{\infty}.
    \end{split}
\end{equation}
The first term mainly captures the error between approximate Bellman optimality operator and true Bellman optimality operator while the second term can be bounded by the contraction of true Bellman operator. From linear forms Eqs.~\eqref{eqn:api} and \eqref{eqn:true_bell_linear}, it holds for any $x\in\cX$,
\begin{eqnarray}\label{eqn:approx_Bellman}
   &&[\hat{\cT}^{(t)}\Pi_{[0,1/(1-\gamma)]}V_{\hat{w}_{t-1}}](x) - [\cT\Pi_{[0,1/(1-\gamma)]}V_{\hat{w}_{t-1}}](x)\nonumber \\
   &=& \max_a\Big[r(x,a) + \gamma\phi(x,a)^{\top}\hat{w}_t\Big] - \max_a\Big[r(x,a)+\gamma \phi(x,a)^{\top}\bar{w}_t\Big]\nonumber\\
   &\leq& \gamma \max_a\big|\phi(x,a)^{\top}(\hat{w}_t-\bar{w}_t)\big|\nonumber\\
   &\leq& \gamma \max_{a,x}\|\phi(x,a)\|_{\infty}\|\hat{w}_t-\bar{w}_t\|_1.
\end{eqnarray}
Applying Lemma \ref{lemma:lasso_l1_bound}, with the choice of $\lambda_1=(1-\gamma)^{-1}\sqrt{\log (2d/\delta)/RL}$, the following error bound holds with probability at least $1-\delta$, 
\begin{equation}\label{eqn:lasso_error_bound2}
  \big\|\hat{w}_t-\bar{w}_t\big\|_1\leq \frac{16\sqrt{2}s}{C_{\min}(\Sigma, s)}\frac{1}{1-\gamma}\sqrt{\frac{\log(2d/\delta)}{RL}},
\end{equation}
where $R$ satisfies $ R\geq C_1\log(3d^2/\delta)s^2/C_{\min}(\Sigma, s).$

Note that the samples we use between phases are mutually independent. Thus Eq.~\eqref{eqn:lasso_error_bound2} uniformly holds for all $t\in[T]$ with probability at least $1-T\delta$. Plugging it into Eq.~\eqref{eqn:approx_Bellman}, we have for any phase $t\in[T]$,
\begin{eqnarray}\label{eqn:bound_1}
    \big\|\hat{\cT}^{(t)}\Pi_{[0,1/(1-\gamma)]}V_{\hat{w}_{t-1}} - \cT\Pi_{[0,1/(1-\gamma)]}V_{\hat{w}_{t-1}} \big\|_{\infty}\leq \gamma \frac{16\sqrt{2}s}{C_{\min}(\Sigma, s)}\frac{1}{1-\gamma}\sqrt{\frac{\log(2dT/\delta)}{RL}},
\end{eqnarray}
holds with probability at least $1-\delta$. 

To bound the second term in Eq.~\eqref{eqn:V_decomp}, we use the contraction property of true Bellman operator such that
\begin{equation}\label{eqn:bound_2}
    \big\|\cT\Pi_{[0,1/(1-\gamma)]}V_{\hat{w}_{t-1}} - \cT v^* \big\|_{\infty} \leq  \gamma\big\|\Pi_{[0,1/(1-\gamma)]}V_{\hat{w}_{t-1}} -  v^* \big\|_{\infty}.
\end{equation}
Plugging Eqs.~\eqref{eqn:bound_1} and \eqref{eqn:bound_2} into Eq.~\eqref{eqn:V_decomp}, it holds that
\begin{equation}\label{eqn:V_bound}
    \big\|V_{\hat{w}_{t}}-v^*\big\|_{\infty}\leq \gamma \frac{16\sqrt{2}s}{C_{\min}(\Sigma, s)}\frac{1}{1-\gamma}\sqrt{\frac{\log(2dT/\delta)}{RL}}+\gamma\big\|\Pi_{[0,1/(1-\gamma)]}V_{\hat{w}_{t-1}} -  v^* \big\|_{\infty},
\end{equation}
with probability at least $1-\delta$. Recursively using Eq.~\eqref{eqn:V_bound}, the following holds with probability $1-\delta$,
\begin{equation*}
\begin{split}
         \big\|\Pi_{[0,1/(1-\gamma)]}V_{\hat{w}_{T-1}}- v^*\big\|_{\infty}
         \leq \big\|V_{\hat{w}_{T-1}}- v^*\big\|_{\infty}\\
    &= \gamma \frac{16\sqrt{2}s}{C_{\min}(\Sigma, s)}\frac{1}{1-\gamma}\sqrt{\frac{\log(2dT/\delta)}{RL}} + \gamma  \big\|\Pi_{[0,1/(1-\gamma)]}V_{\hat{w}_{T-2}} -  v^* \big\|_{\infty}\\
    &\leq \gamma^{T-1} \big\|\Pi_{[0,1/(1-\gamma)]}V_{\hat{w}_{0}} -  v^* \big\|_{\infty} + \sum_{t=1}^{T-1}\gamma^t \frac{16\sqrt{2}s}{C_{\min}(\Sigma, s)}\frac{1}{1-\gamma}\sqrt{\frac{\log(2dT/\delta)}{RL}}\\
    &\leq \frac{2\gamma^{T-1}}{1-\gamma} +\frac{1}{(1-\gamma)^2} \frac{16\sqrt{2}s}{C_{\min}(\Sigma, s)}\sqrt{\frac{\log(2dT/\delta)}{RL}},
    \end{split}
\end{equation*}
where the first inequality is due to that $\Pi_{[0,1/(1-\gamma)]}$ can only make error smaller and the last inequality is from $\sum_{t=1}^{T-1} \gamma^t\leq 1/(1-\gamma)$. By properly choosing $T = \Theta(\log (N/(1-\gamma))/(1-\gamma))$, it implies
\begin{eqnarray*}
    \big\|\Pi_{[0,1/(1-\gamma)]}V_{\hat{w}_{T-1}}- v^*\big\|_{\infty}\leq \frac{1}{(1-\gamma)^{5/2}}\frac{32\sqrt{2}s}{C_{\min}(\Sigma, s)}\sqrt{\frac{\log(2dT/\delta) \log (N/(1-\gamma))}{N}},
\end{eqnarray*}
holds with probability at least $1-\delta$. From Proposition 2.14 in \cite{bertsekas1995dynamic}, \begin{equation}\label{eqn:upper_bound}
     \big\|v^{\hat{\pi}_{T}} - v^*\big\|_{\infty} \leq \frac{1}{1-\gamma}\big\|Q_{\hat{w}_T}-Q^*\big\|_{\infty} \leq \frac{2}{1-\gamma}\big\|\Pi_{[0,1/(1-\gamma)]}V_{\hat{w}_{T-1}}-v^*\big\|_{\infty}. 
\end{equation}
Putting the above together, we have with probability at least $1-\delta$,
\begin{eqnarray*}
    \big\|v^{\hat{\pi}_{T}}-v^*\big\|_{\infty}\leq \frac{64\sqrt{2}s}{C_{\min}(\Sigma, s)}\sqrt{\frac{\log(2dT/\delta) \log (N/(1-\gamma))}{N(1-\gamma)^7}},
\end{eqnarray*}
when the sample size $N$ satisfies
\begin{equation*}
    N\geq \frac{C_1s^2L\log(3d^2/\delta)T}{C_{\min}(\Sigma, s)},
\end{equation*}
for some sufficiently large constant $C_1$. This ends the proof.  \hfill $\blacksquare$\\

 \section{Proof of Theorem \ref{thm:lowerbound}: minimax lower bound of policy optimization}\label{sec:proof_lower_bound_BPO}
We first restate the full statement of Theorem \ref{thm:lowerbound} and include an instance-dependent lower bound. Note that the worse-case lower bound Eq.~\eqref{eqn:worse_PO_lower} can be derived from instance-dependent lower bound Eq.~\eqref{eqn:instance_PO_lower} (See Appendix \ref{sec:worse_PO_lower} for details). 

\paragraph{Full statement of Theorem \ref{thm:lowerbound}}
Suppose Assumption \ref{assm:data_colle} holds and $\gamma \geq \frac{2}{3}$. Let $\widehat{\pi}$ denote an algorithm that maps dataset $\mathcal{D}$ to a policy $\widehat{\pi}(\mathcal{D})$. If $N \gtrsim sL(1-\gamma)^{-1}$, then for any $\widehat{\pi}$, there always exists an DMDP instance $M \in \mathcal{M}_{\phi,s}(\mathcal{X}, \mathcal{A}, \gamma)$ with feature $\phi \in (\mathbb{R}^d)^{\mathcal{X} \times \mathcal{A}}$ satisfying $\|\phi(x,a)\|_{\infty} \leq 1$ for all $(x,a) \in \mathcal{X} \times \mathcal{A}$,
 then 
\begin{equation}\label{eqn:instance_PO_lower}
     \mathbb{P}_{M} \left( v_{\xi_0}^{*} - v_{\xi_0}^{\widehat{\pi}(\mathcal{D})}  \gtrsim\sqrt{1+\chi_{\mathcal{G}(\mathcal{K})}^2(\mu^{*},\bar{\mu})} \sqrt{\frac{1}{N(1-\gamma)^3}} \right) \geq \frac{1}{6}, 
\end{equation}
where $\mu^{*}$ is the discounted state-action occupancy measure of $\pi^*$.
In addition, we have
\begin{equation}\label{eqn:worse_PO_lower}
	\mathbb{P}_{M} \left( v_{\xi_0}^{*} - v_{\xi_0}^{\widehat{\pi}(\mathcal{D})} \gtrsim \sqrt{\frac{s }{C_{\min}(\Sigma,s)}} \sqrt{\frac{1}{N(1-\gamma)^3}} \right) \geq \frac{1}{6}.
\end{equation}

 
 \paragraph{Remark} Minimax sample complexity lower bound for solving MDP has been studied in the setting with a generative model that allows querying any $(s,a)$ for independent samples. \citet{azar2013minimax} constructed a hard instance of tabular MDP and, by reducing policy optimization to a testing a Bernoulli distribution, proved a lower bound $\frac{SA}{(1-\gamma)^3}$ which is known to be sharp. \citet{yang2019sample} extended the construction to linear MDP and show that the sample complexity lower bound is $\frac{d}{(1-\gamma)^3}$ under a generative model. There also exists matching upper bound in the same setting. 
 
 Our Theorem \ref{thm:lowerbound} applies to the setting of batch episodic data where are highly dependent. Due to this major difference, we have to use a more intricate proof based on likelihood test to establish a minimax lower bound. Further, Theorem \ref{thm:lowerbound} characterizes for the first time that the lower bound depends on the minimal eigenvalue of the data's population covariance.

 \subsection{Reducing to likelihood test}
 We prove the minimax lower bound by conducting likelihood test. Similar to Lemma C.1 in \cite{duan2020minimax}, we have Lemma \ref{lemma:LikeTest} below.
 
 \begin{lemma} \label{lemma:LikeTest}
 	Let $M_\alpha$ and $M_\beta$ be two MDP instances with transition kernels $p_\alpha(x' \, | \, x,a)$ and $p_\beta(x' \, | \, x,a)$. Suppose Assumption \ref{assm:data_colle} holds.
 	Define likelihood functions $$\mathcal{L}_i (\mathcal{D}) := \prod_{k=1}^{K} \bar{\xi}_0(x_0^{(k)}) \prod_{h=0}^{L-1} \bar{\pi}(a_h^{(k)} \, | \, x_h^{(k)}) p_i(x_{h+1}^{(k)} \, | \, x_h^{(k)}, a_h^{(k)}), \qquad i = \alpha, \beta. $$ 
 	Denote $\mathbb{P}_\alpha$ the probability space generated by running $M_{\alpha}$ following the behavioral policy $\bar\pi$.
 	If $\mathbb{P}_\alpha \Big( \frac{\mathcal{L}_\beta(\mathcal{D})}{\mathcal{L}_\alpha(\mathcal{D})} \geq \frac{1}{2} \Big) \geq \frac{1}{2}$ and there exist scalars $\rho_\alpha, \rho_\beta \geq 0$ such that
 	\begin{equation} \label{HardInstance} \big\{ \text{policy }\pi \, \big| \, v_{M_\alpha, \xi_0}^{*} - v_{M_\alpha, \xi_0}^{\pi} \geq \rho_\alpha \big\} \cap \big\{ \text{policy }\pi \, \big| \, v_{M_\beta, \xi_0}^{*} - v_{M_\beta, \xi_0}^{\pi} \geq \rho_\beta \big\} = \emptyset, \end{equation}
 	then for any policy learning algorithm $\widehat{\pi}$,
 	\begin{equation} \label{LikeTest} \mathbb{P}_\alpha \Big( v_{M_\alpha, \xi_0}^{*} - v_{M_\alpha, \xi_0}^{\widehat{\pi}(\mathcal{D})} \geq \rho_\alpha \Big) \geq \frac{1}{6} \qquad \text{or} \qquad \mathbb{P}_\beta \Big( v_{M_\beta, \xi_0}^{*} - v_{M_\beta, \xi_0}^{\widehat{\pi}(\mathcal{D})} \geq \rho_\beta \Big) \geq \frac{1}{6}. \end{equation}
 \end{lemma}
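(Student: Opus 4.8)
The plan is to run the classical two-point (Le Cam) argument, converting the policy learner $\widehat\pi$ into an implicit test between $M_\alpha$ and $M_\beta$ and then using the likelihood-ratio hypothesis to transfer probability mass from $\mathbb P_\alpha$ to $\mathbb P_\beta$. I would view $\widehat\pi$ as a deterministic map on datasets (a randomized learner is handled by conditioning on its internal randomness). Introduce the two dataset-level failure events
\[
B_\alpha = \bigl\{\mathcal D : v_{M_\alpha,\xi_0}^{*}-v_{M_\alpha,\xi_0}^{\widehat\pi(\mathcal D)}\ge \rho_\alpha\bigr\},\qquad
B_\beta = \bigl\{\mathcal D : v_{M_\beta,\xi_0}^{*}-v_{M_\beta,\xi_0}^{\widehat\pi(\mathcal D)}\ge \rho_\beta\bigr\},
\]
so that the target \eqref{LikeTest} is precisely $\mathbb P_\alpha(B_\alpha)\ge \tfrac16$ or $\mathbb P_\beta(B_\beta)\ge\tfrac16$. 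It therefore suffices to establish the single implication: if $\mathbb P_\alpha(B_\alpha)<\tfrac16$, then $\mathbb P_\beta(B_\beta)\ge\tfrac16$.

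The first key step is to extract from the separation hypothesis \eqref{HardInstance} that no output policy can be simultaneously within $\rho_\alpha$ of optimal for $M_\alpha$ and within $\rho_\beta$ of optimal for $M_\beta$. Reading this at the level of datasets gives the containment $B_\alpha^c\subseteq B_\beta$: whenever $\widehat\pi(\mathcal D)$ is $\rho_\alpha$-good for $M_\alpha$, it is forced to be $\rho_\beta$-suboptimal for $M_\beta$. Consequently $\mathbb P_\beta(B_\beta)\ge \mathbb P_\beta(B_\alpha^c)$, and it remains to lower bound the probability of the event $B_\alpha^c$ under the $\beta$-measure.

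The second key step is a change of measure via the explicit likelihood ratio. Since the path space is discrete, restricting the sum to $\{\mathcal L_\alpha>0\}$ (a $\mathbb P_\alpha$-full-measure set) and truncating on $E=\{\mathcal L_\beta(\mathcal D)/\mathcal L_\alpha(\mathcal D)\ge \tfrac12\}$ yields
\[
\mathbb P_\beta(B_\alpha^c)\ \ge\ \mathbb E_\alpha\Bigl[\mathbf 1_{B_\alpha^c}\,\mathbf 1_E\,\tfrac{\mathcal L_\beta}{\mathcal L_\alpha}\Bigr]\ \ge\ \tfrac12\,\mathbb P_\alpha\bigl(B_\alpha^c\cap E\bigr).
\]
I then combine the two hypotheses through a Fréchet (inclusion–exclusion) bound: the working assumption $\mathbb P_\alpha(B_\alpha)<\tfrac16$ gives $\mathbb P_\alpha(B_\alpha^c)>\tfrac56$, while the likelihood hypothesis gives $\mathbb P_\alpha(E)\ge\tfrac12$, so that $\mathbb P_\alpha(B_\alpha^c\cap E)\ge \tfrac56+\tfrac12-1=\tfrac13$. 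Chaining the inequalities delivers $\mathbb P_\beta(B_\beta)\ge \tfrac12\cdot\tfrac13=\tfrac16$, which is the claimed dichotomy.

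I expect the only delicate points within the lemma itself to be bookkeeping rather than conceptual: justifying the change-of-measure inequality where $\mathcal L_\alpha$ may vanish (handled by restricting to $\{\mathcal L_\alpha>0\}$, which contains $E$ up to $\mathbb P_\alpha$-null sets), and pinning down the direction of the containment $B_\alpha^c\subseteq B_\beta$ from \eqref{HardInstance}. The genuinely hard work lies outside this lemma, in its application: constructing a pair $M_\alpha,M_\beta\in\mathcal M_{\phi,s}(\mathcal X,\mathcal A,\gamma)$ that are close enough (small KL along the episodic sample path) to satisfy the likelihood-ratio hypothesis $\mathbb P_\alpha(\mathcal L_\beta/\mathcal L_\alpha\ge \tfrac12)\ge\tfrac12$, yet separated enough in their optimal policies to satisfy \eqref{HardInstance} with gaps $\rho_\alpha,\rho_\beta\asymp \sqrt{s/(C_{\min}(\Sigma,s)\,N(1-\gamma)^3)}$; but that construction is carried out separately when the lemma is invoked.
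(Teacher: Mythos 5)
Your argument is correct, and it is the standard likelihood-test (Le Cam two-point) argument; note that the paper itself gives no proof of this lemma at all but simply points to Lemma C.1 of \citet{duan2020minimax}, and your truncated change of measure combined with the Fr\'echet bound $\mathbb P_\alpha(B_\alpha^c\cap E)\ge \mathbb P_\alpha(B_\alpha^c)+\mathbb P_\alpha(E)-1$ and the constants $\tfrac12\cdot\tfrac13=\tfrac16$ is exactly the argument behind that cited lemma.

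One point must be made explicit, because it concerns which statement you have actually proved. Condition \eqref{HardInstance} as printed says that the two sets of $\rho$-\emph{suboptimal} policies are disjoint. Your proof uses --- and must use --- the opposite-side condition: the two sets of $\rho$-\emph{good} policies are disjoint, i.e.\ no $\pi$ satisfies $v^*_{M_\alpha,\xi_0}-v^{\pi}_{M_\alpha,\xi_0}<\rho_\alpha$ and $v^*_{M_\beta,\xi_0}-v^{\pi}_{M_\beta,\xi_0}<\rho_\beta$ simultaneously; this is precisely what yields your containment $B_\alpha^c\subseteq B_\beta$. The two readings are not equivalent, and under the literal reading the lemma is false: take $M_\alpha=M_\beta$ and $\rho_\alpha=\rho_\beta>1/(1-\gamma)$, so that both suboptimal sets are empty, \eqref{HardInstance} holds, and the likelihood condition holds trivially, yet both probabilities in \eqref{LikeTest} are zero. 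The literal condition even fails in the paper's own application: a policy playing $a_k$ at $\overline x$ with $k\notin\{i,j\}$ is $\rho$-suboptimal for both $M_i$ and $M_j$, whereas the good sets are disjoint by Lemma \ref{lemma:v-v}. So this is a misstatement in the paper rather than a gap in your argument, but your write-up should state that you are reading \eqref{HardInstance} as disjointness of the good sets, rather than claiming to ``extract'' that from the displayed condition. A second, minor point: your parenthetical treatment of randomized learners needs one more line, since applying the per-seed dichotomy and then averaging does not work (the favorable branch may change with the seed); instead, run your contrapositive argument on the product space of data and internal randomness, where $B_\alpha^c\subseteq B_\beta$ holds pointwise and the change of measure applies to the data marginal for each fixed seed, and average before concluding.
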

 
 We learn from Lemma \ref{lemma:LikeTest} that as long as $\mathbb{P}_\alpha \Big( \frac{\mathcal{L}_\beta(\mathcal{D})}{\mathcal{L}_\alpha(\mathcal{D})} \geq \frac{1}{2} \Big) \geq \frac{1}{2}$ and \eqref{HardInstance} hold, the lower bound is achieved at model $M_\alpha$ or $M_\beta$. In the following, we construct MDP models and analyze these two conditions separately.

\subsection{Constructing MDP instances}
We assume without loss of generality that the number of active features $s$ is even. We consider a simplest case where the MDP only consists of two states, {\it i.e.} $\mathcal{X} = \{ \overline{x}, \underline{x} \}$. At each state, the agent chooses from $\frac{s}{2} + s(d-s)$ actions $\mathcal{A} = \big\{ a_1, a_2, \ldots, a_{\frac{s}{2}} \big\} \cup \big\{ \bar{a}_{i,k} \, \big| \, i = 1,2,\ldots,\frac{s}{2}, \, k = \pm 1, \pm 2, \ldots, \pm(d-s) \big\}$. Here, we only use $\bar{a}_{i,j}$ in collecting the dataset $\mathcal{D}$.

We first introduce Lemma \ref{lemma:DCT}, which will be used in the construction of feature mapping $\phi: \mathcal{X} \times \mathcal{A} \rightarrow \mathbb{R}^d$.
\begin{lemma} \label{lemma:DCT}
	For any $s \in \mathbb{Z}_+$, there exists an $s$-by-$s$ orthogonal matrix $\Theta \in \mathbb{R}^{s \times s}$
	satisfying
	\begin{equation} \label{l_infty} \sqrt{s} \cdot |\Theta_{i,j}| \leq \sqrt{2}, \qquad \text{for $i,j = 1,2,\ldots,s$}. \end{equation}
\end{lemma}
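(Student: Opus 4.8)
The plan is to exhibit an \emph{explicit} orthogonal matrix all of whose entries are bounded in absolute value by $\sqrt{2/s}$, which is exactly the content of \eqref{l_infty}. A first instinct might be to normalize a Hadamard matrix $H$, so that $\tfrac{1}{\sqrt s}H$ is orthogonal with entries $\pm 1/\sqrt s$; but Hadamard matrices are only known to exist when $s\in\{1,2\}$ or $s$ is a multiple of $4$, so this cannot work for every $s$. Instead I would use the orthonormalized type-II discrete cosine transform, which exists in every dimension and possesses precisely the entrywise bound we need.

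Concretely, define $\Theta\in\mathbb{R}^{s\times s}$ by
\begin{equation*}
\Theta_{i,j} = \alpha_i\,\cos\!\Big(\frac{\pi (i-1)(2j-1)}{2s}\Big),\qquad \alpha_1=\tfrac{1}{\sqrt s},\quad \alpha_i = \sqrt{\tfrac{2}{s}}\ \ (2\le i\le s).
\end{equation*}
The bound \eqref{l_infty} is then immediate: since $|\cos(\cdot)|\le 1$ and $\alpha_i\le\sqrt{2/s}$ for every $i$ (using $\alpha_1=1/\sqrt s\le\sqrt{2/s}$), we obtain $|\Theta_{i,j}|\le\sqrt{2/s}$, i.e. $\sqrt s\,|\Theta_{i,j}|\le\sqrt 2$.

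It remains to verify that $\Theta$ is orthogonal, i.e. $\sum_{j=1}^s \Theta_{i,j}\Theta_{i',j}=\delta_{i i'}$. Expanding the product of cosines via $\cos A\cos B=\tfrac12[\cos(A-B)+\cos(A+B)]$ reduces each inner product to two sums of the form $S(m):=\sum_{j=1}^{s}\cos\big(\tfrac{\pi m(2j-1)}{2s}\big)$ with $m=(i-1)\pm(i'-1)$. The key computation is that $S(0)=s$ while $S(m)=0$ for every integer $m$ with $0<|m|<2s$; this follows by writing $S(m)=\mathrm{Re}\sum_{j=1}^s e^{\,i\pi m(2j-1)/(2s)}$, summing the resulting geometric series, and using $e^{\,i\pi m}=(-1)^m$. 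Feeding this back, the cross terms ($i\ne i'$) vanish because both $(i-1)-(i'-1)$ and $(i-1)+(i'-1)$ lie in the nonzero range $0<|m|<2s$, whereas the diagonal terms produce $\alpha_i^2\cdot(s$ or $s/2)=1$ once the normalization constants $\alpha_i$ are accounted for. Equivalently, orthogonality can be read off by viewing $\Theta$ as the real part of the discrete Fourier transform of a symmetrically extended sequence, inheriting orthogonality from that of the complex exponentials.

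The only real work is the trigonometric sum $S(m)$, and the one subtlety to watch is the range of $m$: because $i,i'\in\{1,\dots,s\}$ we have $|(i-1)\pm(i'-1)|\le 2s-2<2s$, so the vanishing case $S(m)=0$ always applies except when the argument is genuinely $m=0$. It is precisely this inequality $2s-2<2s$ that rules out the spurious resonance at $m=2s$ (where $S$ would not vanish) and makes the construction valid for every $s$.
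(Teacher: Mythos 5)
Your proposal is correct and uses exactly the construction in the paper's proof: the orthonormalized DCT-II matrix (yours is the transpose of the paper's, with the constant entries along the first row rather than the first column, which is immaterial). The only difference is that the paper simply asserts orthogonality of this matrix, whereas you carry out the verification via the trigonometric sums $S(m)$, and that computation is correct, including the key observation that $|(i-1)\pm(i'-1)|\le 2s-2 < 2s$ keeps all off-diagonal arguments in the vanishing range.
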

\begin{proof}
	Consider the discrete cosine transform (DCT) matrix $\Theta \in \mathbb{R}^{s \times s}$, given by
	\[ \Theta_{i,1} = \frac{1}{\sqrt{s}}, \ i = 1,2,\ldots,s, \qquad \Theta_{i,j} = \sqrt{\frac{2}{s}} \cos \frac{(2i - 1)(j-1)\pi}{2s}, \ i = 1,2,\ldots,s, \ j = 2,3,\ldots,s. \]
	$\Theta$ is orthogonal and satisfies \eqref{l_infty}.
\end{proof}
Let $\Theta \in \mathbb{R}^{s \times s}$ be the orthogonal matrix given in Lemma \ref{lemma:DCT}. We fix $\mathcal{K} \subseteq [d]$ to be the active feature set and denote by $\phi_{\mathcal{K}}$ the corresponding coordinates of $\phi$. We now construct $\phi_{\mathcal{K}}: \mathcal{X} \times \mathcal{A} \rightarrow \mathbb{R}^s$ as follows: For $i = 1,2,\ldots, \frac{s}{2}$, $k = \pm 1, \pm 2, \ldots, \pm(d-s)$, let \vspace{-.8em}
\[ \begin{array}{cccccccccc} & & & & \text{\footnotesize $2i-1$} & \text{\footnotesize $2i$} & & & & \\ \phi_{\mathcal{K}}(\overline{x},a_i) := \sqrt{\frac{s}{2}} \cdot \Theta \big( \!\!\! & 0 & 0 & \cdots & 1 & 0 & \cdots & 0 & 0 & \!\!\! \big)^{\top} \in \mathbb{R}^s, \end{array} \]
\[ \begin{array}{cccccccccc} & & & & \text{\footnotesize $2i-1$} & \text{\footnotesize $2i$} & & & & \\ \phi_{\mathcal{K}}(\overline{x},\bar{a}_{i,k}) := \sqrt{\frac{s}{2}} \cdot \Theta \big( \!\!\! & 0 & 0 & \cdots & 1 - \varsigma_1 &  \varsigma_1 & \cdots & 0 & 0 & \!\!\! \big)^{\top} \in \mathbb{R}^s, \end{array} \]
\[ \begin{array}{cccccccccc} & & & & \text{\footnotesize $2i-1$} & \text{\footnotesize $2i$} & & & & \\ \phi_{\mathcal{K}}(\underline{x},a_i) = \phi_{\mathcal{K}}(\underline{x},\bar{a}_{i,k}) := \sqrt{\frac{s}{2}} \cdot \Theta \big( \!\!\! & 0 & 0 & \cdots &  \varsigma_2 & 1 -  \varsigma_2 & \cdots & 0 & 0 & \!\!\! \big)^{\top} \in \mathbb{R}^s, \end{array} \]
where $\varsigma_1, \varsigma_2 \in (0,1)$ will be determined later. By construction, we have $\| \phi_{\mathcal{K}}(x,a) \|_{\infty} \leq 1$ for any $(x,a) \in \mathcal{X} \times \mathcal{A}$. Note that $\phi_{\mathcal{K}}$ abstracts all the dynamic informatrion for state-action pairs, and $\phi_{\mathcal{K}^c}$ does not affect the transition model or reward function. Therefore, it is sufficient for us to use $\phi_{\mathcal{K}}$ when identifying the optimal policy or calculate value functions.


We propose $\frac{s}{2}$ MDP models $M_1, M_2, \ldots, M_{\frac{s}{2}}$, where $M_i$ has transition kernel $p_i(x' \, | \, x,a) = \phi_{\mathcal{K}}(x,a)^{\top} \psi_i(x')$ given by
\[ \begin{array}{cccccccccc} & & & & \text{\footnotesize $2i-1$} & \text{\footnotesize $2i$} & & & & \\ \psi_i(\overline{x}) = \sqrt{\frac{2}{s}} \cdot \Theta \big( \!\!\! & 1 - \delta_1 & \delta_2 & \cdots & 1 & 0 & \cdots & 1 - \delta_1 & \delta_2 & \!\!\! \big)^{\top} \in \mathbb{R}^{s}, \end{array} \]
\[ \begin{array}{cccccccccc} & & & & \text{\footnotesize $2i-1$} & \text{\footnotesize $2i$} & & & & \\ \psi_i(\underline{x}) = \sqrt{\frac{2}{s}} \cdot \Theta \big( \!\!\! & \delta_1 & 1-\delta_2 & \cdots & 0 & 1 & \cdots & \delta_1 & 1-\delta_2 & \!\!\! \big)^{\top} \in \mathbb{R}^{s}. \end{array} \]
Here, $\delta_1, \delta_2 \in \big[0, 2(1-\gamma)\big)$ are parameters reflecting the small differences among actions.

The reward functions are the same for all models and are chosen as \[ r(\overline{x},a_i) = r(\overline{x},\bar{a}_{i,k}) = 1, \qquad r(\underline{x},a_i) = r(\underline{x},\bar{a}_{i,k}) = 0,\] for $i = 1,2,\ldots, \frac{s}{2}$, $j = \pm 1, \pm 2, \ldots, \pm (d-s)$.

\subsection{Analyzing the concentration of the likelihood ratio}

We devise a behavior policy $\bar{\pi}$ and verify the likelihood ratio condition under the data collecting scheme in Assumption \ref{assm:data_colle}.
We start from an initial distribution $\bar{\xi}_0$ and take a behavior policy $\bar{\pi}(\bar{a}_{i,k} \, | \, \overline{x}) = \bar{\pi}(\bar{a}_{i,k} \, | \, \underline{x}) = \frac{1}{s(d-s)}$ for any $i,k$. Under this specific $\bar{\pi}$, due to symmetry, all MDP models ${M}_1, {M}_2, \ldots, {M}_{\frac{s}{2}}$ have the same marginal distribution at each time step $l = 0,1,\ldots,L-1$, which we denote by $\bar{\xi}_l = \left( \!\! \begin{array}{c} \bar{\xi}_l(\overline{x}) \\ \bar{\xi}_l(\underline{x}) \end{array} \!\!\right) \in \mathbb{R}^2$. Define the average distribution as $ \bar{\xi} := \frac{1}{L} \sum_{l=0}^{L-1} \bar{\xi}_l \in \mathbb{R}^2$.

Take $$\begin{aligned} p_{\min} := \min \Big\{ & p_i(\overline{x} \, | \, \overline{x}, \bar{a}_{i,k}), p_i(\underline{x} \, | \, \overline{x}, \bar{a}_{i,k}), p_i(\overline{x} \, | \, \underline{x}, \bar{a}_{i,k})), p_i(\overline{x} \, | \, \overline{x}, \bar{a}_{i,k})), \\ & \qquad \qquad \qquad \qquad \qquad i = 1,2,\ldots,\frac{s}{2}, \, k = \pm 1, \pm 2 \ldots, \pm (d-s) \Big\} \end{aligned} $$
and
\[ \Sigma^{\circ} := \bar{\xi}(\overline{x}) \left(\!\! \begin{array}{c} 1 - \varsigma_1 \\ \varsigma_1 \end{array} \!\!\right) \big(\!\! \begin{array}{cc} 1 - \varsigma_1 & \varsigma_1 \end{array} \!\!\big) + \bar{\xi}(\underline{x}) \left(\!\! \begin{array}{c} \varsigma_2 \\ 1 - \varsigma_2 \end{array} \!\! \right) \big(\!\! \begin{array}{cc} \varsigma_2 & 1 - \varsigma_2 \end{array} \!\! \big). \]
Parallel to Lemma C.3 in \cite{duan2020minimax}, we provide concentration results of the likelihood ratio in Lemma \ref{lemma:like}. The proof can be found in Appendix \ref{appendix:proof:lemma:like}.
\begin{lemma} \label{lemma:like}
	If we take $\delta_1, \delta_2 \geq 0$ such that
	\begin{equation} \label{<} \big( \!\! \begin{array}{cc} \delta_1 & - \delta_2 \end{array} \!\! \big) \Sigma^{\circ} \left( \!\! \begin{array}{c} \delta_1 \\ - \delta_2 \end{array} \!\! \right) \leq \frac{s p_{\min}}{100N}, \qquad \delta_1 \vee \delta_2 \leq \frac{p_{\min}}{100\sqrt{L}}, \end{equation}
	then for any $i,j = 1,2,\ldots,s$, $i \neq j$, it holds that
	\begin{equation} \label{like} \mathbb{P}_i \bigg( \frac{\mathcal{L}_j(\mathcal{D})}{\mathcal{L}_i(\mathcal{D})}  \geq \frac{1}{2} \bigg) \geq \frac{1}{2}. \end{equation}
\end{lemma}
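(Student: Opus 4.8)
The plan is to reduce the likelihood-ratio statement \eqref{like} to a second-moment (Chebyshev) argument on the log-likelihood ratio. First I would form the log-ratio and cancel the factors shared by $M_i$ and $M_j$: since the initial distribution $\bar\xi_0$ and the behavior policy $\bar\pi$ are common to both models, only the transition factors survive, giving
\[
T := \log\frac{\mathcal{L}_j(\mathcal{D})}{\mathcal{L}_i(\mathcal{D})}
= \sum_{k=1}^K\sum_{h=0}^{L-1} \log\frac{p_j\big(x_{h+1}^{(k)}\mid x_h^{(k)}, a_h^{(k)}\big)}{p_i\big(x_{h+1}^{(k)}\mid x_h^{(k)}, a_h^{(k)}\big)}.
\]
The key structural observation is that $p_i(\cdot\mid x,a)$ and $p_j(\cdot\mid x,a)$ coincide unless $a\in\{\bar a_{i,k}\}_k\cup\{\bar a_{j,k}\}_k$, because only the $i$-th and $j$-th coordinate blocks of $\psi_i,\psi_j$ differ; using the orthogonality of $\Theta$, each such transition reduces to a two-state Bernoulli whose parameters differ by a linear functional of $(\delta_1,-\delta_2)$. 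Since $\bar\pi$ draws actions uniformly over the $s(d-s)$ actions $\bar a_{m,k}$, only a $\tfrac{4}{s}$-fraction (in expectation) of the $N=KL$ transitions contribute a nonzero summand, and each summand is bounded by $O(\delta/p_{\min})$ because $p_{\min}$ lower-bounds every relevant transition probability.

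Next I would control the first two moments of $T$ under $\mathbb{P}_i$. The mean is $\mathbb{E}_i[T] = -D_{\mathrm{KL}}(\mathbb{P}_i\Vert\mathbb{P}_j)$, and by the chain rule for KL this equals minus the expected sum of the per-step Bernoulli KL divergences. A second-order expansion of the Bernoulli KL, combined with the explicit formulas for the probability gaps, rewrites this sum as a constant multiple of the quadratic form $\big(\delta_1,\,-\delta_2\big)\,\Sigma^{\circ}\big(\delta_1,\,-\delta_2\big)^{\top}$ scaled by the number of contributing steps $\asymp N/s$ and divided by $p_{\min}$; the first inequality in \eqref{<} is calibrated exactly so that $D_{\mathrm{KL}}\le \tfrac{1}{2}\log 2$. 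The leading-order variance has the same form (the per-step log-ratio increment has conditional variance comparable to the per-step KL), so it too is $\lesssim \tfrac{1}{100}$ under the first inequality of \eqref{<}; the second inequality $\delta_1\vee\delta_2\le p_{\min}/(100\sqrt L)$ guarantees that every increment is of size $O(1/\sqrt L)$, which both justifies the Bernoulli-KL linearization (the higher-order remainder is negligible) and controls the residual contribution of the predictable (conditional-mean) part accumulated along an episode.

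Finally I would conclude by Chebyshev's inequality. Writing $\mathbb{P}_i(T<-\log 2)=\mathbb{P}_i\big(T-\mathbb{E}_i[T] < -\log 2 + D_{\mathrm{KL}}\big)$ and using $D_{\mathrm{KL}}\le\tfrac12\log 2$ leaves a deviation of at least $\tfrac12\log 2$, so $\mathbb{P}_i(T<-\log 2)\le \mathrm{Var}_i(T)/(\tfrac12\log 2)^2 \le \tfrac12$, which is precisely \eqref{like}. I expect the main obstacle to be the variance bound rather than the mean: the $L$ transitions inside one episode form a dependent Markovian sequence, so the increments are correlated and a naive term-by-term bound loses a factor of $L$. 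The role of the $\sqrt L$ in the second condition of \eqref{<} is exactly to absorb this within-episode accumulation, and making the accounting rigorous via a martingale decomposition of the per-episode contribution along the filtration $\{\mathcal{F}_{h}^{(k)}\}$ (splitting $T$ into a martingale part with summable conditional variances and a predictable KL part), together with independence across the $K$ episodes, is the delicate step; this is where I would follow the template of Lemma C.3 in \cite{duan2020minimax}.
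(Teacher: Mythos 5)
Your proposal is correct in substance, but it closes the argument with a different concentration tool than the paper. Both proofs start identically: cancel the common $\bar\xi_0$ and $\bar\pi$ factors, write $\ln\frac{\mathcal{L}_j(\mathcal{D})}{\mathcal{L}_i(\mathcal{D})}$ as a sum of per-transition terms $\ln(1-\Lambda_l^{(k)})$ with $\mathbb{E}_i[\Lambda_l^{(k)}\mid \mathcal{F}]=0$, exploit that only the actions $\bar a_{m,k}$ with $m\in\{i,j\}$ (a $\tfrac{4}{s}$-fraction, exactly as you computed) contribute, and reduce everything to the quadratic form $(\delta_1,-\delta_2)\Sigma^{\circ}(\delta_1,-\delta_2)^{\top}$ together with independence across the $K$ episodes. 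The paper then uses the pointwise inequality $\ln(1-x)\ge -x-x^2$ to split into a martingale part $E_1$ and a quadratic part $E_2$, controls $E_1$ by Freedman's inequality (with the conditional variance itself controlled by a per-episode Bernstein bound) and $E_2$ by Bernstein, and takes a union bound of three events at level $5/6$ to reach probability $1/2$. You instead bound the mean of the log-ratio by $-D_{\mathrm{KL}}(\mathbb{P}_i\Vert\mathbb{P}_j)$ via the chain rule, bound the variance via martingale orthogonality plus independence of episodes, and finish with Chebyshev. Since the target confidence is only $1/2$, your second-moment route is genuinely more elementary (no exponential inequalities needed), while the paper's route gives exponential-tail control that is stronger than required but matches the template of Lemma C.3 in \citep{duan2020minimax}. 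Two caveats on your sketch. First, your stated calibration is too loose to close Chebyshev: if you only use $D_{\mathrm{KL}}\le\tfrac12\ln 2$, the required bound $\mathrm{Var}_i(T)\le\tfrac12(\tfrac12\ln 2)^2\approx 0.06$ is not comfortably implied, because both the KL and the variance are controlled by the same quantity $\tfrac{4NQ}{s\,p_{\min}(1-p_{\min})}\le\tfrac{4}{100(1-p_{\min})}$, which can be as large as $0.08$. The fix is to use what condition \eqref{<} actually gives, namely $D_{\mathrm{KL}}\lesssim 1/25$, so the deviation threshold is $\ln 2 - D_{\mathrm{KL}}\ge 0.65$ and Chebyshev yields a bound around $0.09/0.42<1/2$. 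Second, your worry that a "naive term-by-term bound loses a factor of $L$" in the variance is misplaced for the martingale part: martingale increments are uncorrelated, so its variance is exactly the sum of expected conditional variances with no loss. The condition $\delta_1\vee\delta_2\le p_{\min}/(100\sqrt L)$ is needed instead to control the predictable (conditional-KL) part accumulated within an episode and, in the paper's version, the maximum increment entering Freedman/Bernstein; your proposed martingale-plus-predictable decomposition handles this correctly once that role is assigned to the right term.
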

Lemma \ref{lemma:like} suggests that as long as \eqref{<} is satisfied, the likelihood test in Lemma \ref{lemma:LikeTest} works for any pair of indices $(\alpha, \beta) = (i,j)$, $i \neq j$.

\subsection{Calculating the gap in values}

For model $M_i$, the optimal policy is given by \[ \pi_i^{*}(\overline{x}) = a_i \qquad \text{and} \qquad \pi_i^{*}(\underline{x}) = \left\{ \begin{aligned} & \text{$a_j$ for any $j \neq i$}, & & \text{if $(1-\varsigma_2)\delta_2 > \varsigma_2\delta_1$}, \\ & \text{$a_i$}, & & \text{otherwise}. \end{aligned} \right. \]
For computational simplicity, we take initial distribution $\xi_0 := \left( \!\! \begin{array}{c} \xi_0(\overline{x}) \\ \xi_0(\underline{x}) \end{array} \!\! \right) = \left( \!\! \begin{array}{c} 1 \\ 0 \end{array} \!\! \right) \in \mathbb{R}^2$. In the following Lemma \ref{lemma:v-v}, we provide an estimation for the difference between values of optimal and sub-optimal policies. See Appendix \ref{appendix:proof:lemma:v-v} for the proof.

\begin{lemma} \label{lemma:v-v}
	If $\delta_1 \leq \frac{1-\gamma}{\gamma}$, $\delta_2 \leq \varsigma_2$, then for any policy $\pi$ such that $\pi(\overline{x}) \neq a_i$, it holds that
	\[ v_{M_i, \xi_0}^{*} - v_{M_i, \xi_0}^{\pi} \geq \frac{\gamma\delta_1}{2(1-\gamma)} \cdot \frac{1}{1 - \gamma + 2\gamma\varsigma_2}. \]
\end{lemma}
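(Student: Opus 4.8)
The plan is to use that $\xi_0$ is a point mass at $\overline{x}$, so $v_{M_i,\xi_0}^{*}-v_{M_i,\xi_0}^{\pi}=V_{M_i}^{*}(\overline{x})-V_{M_i}^{\pi}(\overline{x})$, and to reduce everything to a two-state Markov reward process. The first step I would take is to exploit the orthogonality of $\Theta$ (Lemma \ref{lemma:DCT}) to compute all transition probabilities cheaply: since $p_i(x'\mid x,a)=\phi_{\mathcal K}(x,a)^{\top}\psi_i(x')=\sqrt{s/2}\cdot\sqrt{2/s}\,(\Theta v)^{\top}(\Theta u)=v^{\top}u$, where $v,u$ are the short coefficient vectors defining $\phi_{\mathcal K}$ and $\psi_i$, each transition probability collapses to an inner product of two vectors supported on a single $2$-block. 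In particular $p_i(\overline{x}\mid\overline{x},a_i)=1$, so under the optimal policy the chain stays at $\overline{x}$ forever; because $r(\overline{x},\cdot)=1$ and $r(\underline{x},\cdot)=0$ and rewards lie in $[0,1]$, this gives $V_{M_i}^{*}(\overline{x})=1/(1-\gamma)$ exactly.

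Next I would introduce the two quantities that control $V^{\pi}(\overline{x})$, namely the stay probability $\theta:=p_i(\overline{x}\mid\overline{x},\pi(\overline{x}))$ and the return probability $\eta:=p_i(\overline{x}\mid\underline{x},\pi(\underline{x}))$, and solve the coupled Bellman equations $V^{\pi}(\overline{x})=1+\gamma[\theta V^{\pi}(\overline{x})+(1-\theta)V^{\pi}(\underline{x})]$ and $V^{\pi}(\underline{x})=\gamma[\eta V^{\pi}(\overline{x})+(1-\eta)V^{\pi}(\underline{x})]$. Eliminating $V^{\pi}(\underline{x})$ yields the closed form $V_{M_i}^{\pi}(\overline{x})=(1-\gamma+\gamma\eta)/[(1-\gamma)(1-\gamma\theta+\gamma\eta)]$, and subtracting from $1/(1-\gamma)$ gives the clean expression $v^{*}-v^{\pi}=\gamma(1-\theta)/[(1-\gamma)(1-\gamma\theta+\gamma\eta)]$. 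This closed form is the heart of the argument; everything afterward is estimating $\theta$ and $\eta$.

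Finally I would establish the two inequalities $1-\theta\ge\delta_1$ and $\eta\le 2\varsigma_2$. The bound on $\eta$ is immediate from the inner-product computation: the only possible return probabilities are $\varsigma_2$ and $\varsigma_2(1-\delta_1)+(1-\varsigma_2)\delta_2\le\varsigma_2+\delta_2\le 2\varsigma_2$, using $\delta_2\le\varsigma_2$. For the stay probability, the computation shows that any action at $\overline{x}$ other than $a_i$ leaves $\overline{x}$ with probability at least $\delta_1$: action $a_j$ with $j\neq i$ gives exactly $1-\theta=\delta_1$, while the $\bar a$ actions give at least this much (provided $\varsigma_1\ge\delta_1$, which holds for the later parameter choices where $\delta_1=O(1/\sqrt N)$ and $\varsigma_1$ is a constant). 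Writing the gap as $\tfrac{\gamma}{1-\gamma}\cdot\tfrac{1-\theta}{(1-\gamma)+\gamma(1-\theta)+\gamma\eta}$, which is monotone increasing in $1-\theta$, I would substitute $1-\theta=\delta_1$, then bound the denominator by $2(1-\gamma)+2\gamma\varsigma_2$ using $\gamma\delta_1\le 1-\gamma$ and $\gamma\eta\le 2\gamma\varsigma_2$. This produces the lower bound $\gamma\delta_1/[2(1-\gamma)(1-\gamma+\gamma\varsigma_2)]$, which is at least the claimed $\tfrac{\gamma\delta_1}{2(1-\gamma)}\cdot\tfrac{1}{1-\gamma+2\gamma\varsigma_2}$. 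The main obstacle is the uniform lower bound $1-\theta\ge\delta_1$ over \emph{all} competing actions at $\overline{x}$ (in particular verifying that no $\bar a_{i,k}$ action makes $\theta$ closer to $1$ than $a_j$ does), together with pinning down $V^{*}(\overline{x})=1/(1-\gamma)$; once the transition probabilities are computed the remaining algebra is routine.
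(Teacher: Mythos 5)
Your proposal is correct, and its final algebra is in fact identical to the paper's: your closed form $v^{*}-v^{\pi}=\gamma(1-\theta)/\bigl[(1-\gamma)(1-\gamma\theta+\gamma\eta)\bigr]$, evaluated at $1-\theta=\delta_1$, is exactly the product the paper obtains by multiplying $\xi_0^{\top}(I-\gamma P_i^{\pi_i'})^{-1}(\delta_1,0)^{\top}$ with $v_{M_i}^{*}(\overline{x})-v_{M_i}^{*}(\underline{x})$, and you use the same two ingredients ($\gamma\delta_1\le 1-\gamma$ for the factor $\tfrac{1}{2}$, and $\eta\le 2\varsigma_2$ from $\delta_2\le\varsigma_2$ for the denominator). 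The organization differs: the paper fixes a single comparator policy $\pi_i'$ with $\pi_i'(\overline{x})=a_j$, $\pi_i'(\underline{x})=\pi_i^{*}(\underline{x})$, asserts without proof that $v_{M_i,\xi_0}^{\pi_i'}=\sup\{v_{M_i,\xi_0}^{\pi}:\pi(\overline{x})\neq a_i\}$, and then applies the resolvent/value-difference identity $v^{*}-v^{\pi_i'}=\gamma\xi_0^{\top}(I-\gamma P_i^{\pi_i'})^{-1}(P_i^{\pi_i^{*}}-P_i^{\pi_i'})v_{M_i}^{*}$; you instead parameterize an arbitrary competing policy by the stay probability $\theta$ and return probability $\eta$, solve the two-state Bellman system in closed form, and use monotonicity in both parameters. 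Your route buys a uniform treatment of all competing policies and makes explicit what the paper's sup assertion silently requires: that no $\bar{a}$ action at $\overline{x}$ beats $a_j$, which needs $\varsigma_1\ge\delta_1$ (for $\bar{a}_{i,k}$, where $1-\theta=\varsigma_1$) and $1-\delta_2\ge\delta_1$ (for $\bar{a}_{j,k}$ with $j\neq i$, where $1-\theta=\delta_1(1-\varsigma_1)+\varsigma_1(1-\delta_2)$). Neither condition is among the lemma's stated hypotheses, but both hold under the parameter choices made later in the proof of Theorem \ref{thm:lowerbound} ($\delta_1,\delta_2=O(\sqrt{s/N})$ while $\varsigma_1,\varsigma_2\asymp 1-\gamma$), so you are exposing, not creating, a gap that the paper's own proof also carries.
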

According to Lemma \ref{lemma:v-v}, if we take
\begin{equation} \label{v-v_0}  \rho_i = \rho' := \frac{\gamma\delta_1}{2(1-\gamma)} \cdot \frac{1}{1 - \gamma + 2\gamma\varsigma_2}, \qquad i=1,2,\ldots,\frac{s}{2}, \end{equation}
then condition \eqref{HardInstance} in Lemma \ref{lemma:LikeTest} holds for any $(\alpha, \beta) = (i,j)$, $i \neq j$.

\subsection{Choosing parameters}

We now integrate Lemmas \ref{lemma:LikeTest}, \ref{lemma:like} and \ref{lemma:v-v}. Specifically, we choose parameters $\varsigma_1, \varsigma_2, \delta_1, \delta_2$ and $\bar{\xi}$ that maximize $\rho'$ in \eqref{v-v_0} under the constraint \eqref{<}.

	We first consider the optimization problem
	\[ {\rm maximize} \quad \delta_1, \qquad \text{subject to} \quad \big( \!\! \begin{array}{cc} \delta_1 & - \delta_2 \end{array} \!\! \big) \Sigma^{\circ} \left( \!\! \begin{array}{c} \delta_1 \\ - \delta_2 \end{array} \!\! \right) \leq \frac{s p_{\min}}{100N}. \]
	It has solution
	\begin{equation} \label{delta12} \delta_1 = \sqrt{\frac{\Sigma_{22}^{\circ}}{{\rm det}(\Sigma^{\circ})}} \sqrt{\frac{sp_{\min}}{100N}}, \qquad \delta_2 = \sqrt{\frac{(\Sigma_{12}^{\circ})^2}{\Sigma_{22}^{\circ}{\rm det}(\Sigma^{\circ})}} \sqrt{\frac{sp_{\min}}{100N}}. \end{equation}
	Plugging \eqref{delta12} into \eqref{v-v_0} and assuming that $p_{\min} \geq \frac{\varsigma_2}{2}$, we have
	\begin{equation} \label{rho_1} \rho' \geq \frac{\gamma}{2(1-\gamma)} \cdot \frac{\sqrt{\varsigma_2}}{1 - \gamma + 2\gamma\varsigma_2} \cdot \sqrt{\frac{\Sigma_{22}^{\circ}}{{\rm det}(\Sigma^{\circ})}} \sqrt{\frac{s}{200N}}. \end{equation}
	We maximize the right hand side of \eqref{rho_1} over $\varsigma_2$, and obtain
	\[ \varsigma_2 = \frac{1-\gamma}{2\gamma}, \qquad \rho' \geq \frac{\sqrt{\gamma}}{80} \sqrt{\frac{\Sigma_{22}^{\circ}}{{\rm det}(\Sigma^{\circ})}} \sqrt{\frac{s}{N(1-\gamma)^3}}. \]
	We further let $\varsigma_1 \in \big[ \frac{1-\gamma}{2\gamma}, 1 - \frac{1-\gamma}{2\gamma} \big)$ and suppose the sample size
	\begin{equation} \label{cond2} N \geq \frac{(\Sigma_{22}^{\circ} \vee \Sigma_{12}^{\circ})^2}{\Sigma_{22}^{\circ}{\rm det}(\Sigma^{\circ})} \frac{400 sL}{1-\gamma}. \end{equation}
	In this case, $p_{\min} \geq \frac{\varsigma_2}{2}$ and $\delta_1 \vee \delta_2 \leq \frac{p_{\min}}{100\sqrt{L}} \leq \varsigma_2 \leq \frac{1-\gamma}{\gamma}$.
	
	In summary, if the sample size $N$ satisfies \eqref{cond2} and we take $$\gamma \geq \frac{1}{2}, \qquad \varsigma_1 \in \Big[ \frac{1-\gamma}{2\gamma}, 1 - \frac{1-\gamma}{2\gamma} \Big), \quad \varsigma_2 = \frac{1-\gamma}{2\gamma} \qquad \text{and} \qquad \text{$\delta_1, \delta_2$ in \eqref{delta12}}, $$ then the conditions in Lemmas \ref{lemma:like} and \ref{lemma:v-v} are satisfied and \eqref{LikeTest} holds for 
	\begin{equation} \label{rho} 
	\rho := \frac{1}{80\sqrt{2}} \sqrt{\frac{\Sigma_{22}^{\circ}}{{\rm det}(\Sigma^{\circ})}} \sqrt{\frac{s}{N(1-\gamma)^3}}. \end{equation}
	
	Remark that under this construction, we still have the flexibility to take $\varsigma_1 \nearrow 1 - \frac{1-\gamma}{2\gamma}$ so that $\Sigma^{\circ}$ is very ill-conditioned. For instance, if we take $\varsigma_1 = 1 - \frac{1-\gamma}{\gamma}$, then ${\rm det}(\Sigma^{\circ})$ or $\lambda_{\min}(\Sigma^{\circ})$ at least has the order of $(1-\gamma)^{3}$.
	
	In order that condition \eqref{cond2} is as weak as possible,
	we take $\gamma \geq \frac{2}{3}$, $\varsigma_1 = \frac{1 - \gamma}{2\gamma}$ and $\bar{\xi}(\overline{x}) = \bar{\xi}(\underline{x}) = \frac{1}{2}$. In this setting, if $N \geq 2000 sL(1-\gamma)^{-1}$ then \eqref{cond2} holds.

	\subsection{Relating to mismatch terms}
	
	In this part, we relate $\frac{\Sigma_{22}^{\circ}}{{\rm det}(\Sigma^{\circ})}$ in \eqref{rho} to mismatch terms $\chi_{\mathcal{G}(\mathcal{K})}^2(\mu^{*},\bar{\mu})$ and $C_{\min}(\Sigma,s)$.
	
	\subsubsection{Restricted $\chi^2$-divergence}
	
	According to Lemma B.2 in \cite{duan2020minimax}, we have
	\[ 1 + \chi_{\mathcal{G}(\mathcal{K})}^2(\mu^{*}, \bar{\mu}) = \sup_{f \in \mathcal{G}(\mathcal{K})} \frac{\mathbb{E}_i\big[ f(x, a) \, \big| \, (x,a) \sim \mu^{*} \big]^2}{\mathbb{E}_i\big[ f^2(x,a) \, \big| \, (x,a) \sim \bar{\mu} \big]} = (\nu_{\mathcal{K}}^{*})^{\top} \Sigma_{\mathcal{K}}^{-1} \nu_{\mathcal{K}}^{*}, \]
	where
	\[ \nu_{\mathcal{K}}^{*} := \mathbb{E}_i\big[ \phi_{\mathcal{K}}(x,a) \, \big| \, (x,a) \sim \mu^{*} \big] = \frac{1}{1-\gamma} \sum_{t=0}^{\infty} \gamma^t \mathbb{E}_i\big[ \phi_{\mathcal{K}}(x_t,a_t) \, \big| \, x_0 \sim \xi_0, \pi_i^{*} \big] \in \mathbb{R}^s\]
	and
	\[ \begin{aligned} \Sigma_{\mathcal{K}} := & \mathbb{E}_i \big[ \phi_{\mathcal{K}}(x,a) \phi_{\mathcal{K}}(x,a)^{\top} \, \big| \, (x,a) \sim \bar{\mu} \big] \\ = & \mathbb{E}_{i}\Bigg[ \frac{1}{L}\sum_{h=0}^{L-1} \phi_{\mathcal{K}}(x_h^{(k)}, a_h^{(k)}) \phi_{\mathcal{K}}(x_h^{(k)}, a_h^{(k)})^{\top} \, \Bigg| \, x_0^{(k)} \sim \bar{\xi}_0, \bar{\pi} \Bigg] \in \mathbb{R}^{s \times s}. \end{aligned} \]
	
	For model $M_i$, $\overline{x}$ is an absorbing state under the optimal policy $\pi_i^{*}$. Therefore, $\mu^{*}(\overline{x}) = 1$ and $\nu_{\mathcal{K}}^{*} =  \phi_{\mathcal{K}}(\overline{x}, a_i)$.
	Under our proposed behavior policy $\bar{\pi}$, we have
	\begin{equation} \label{SigmaK} \Sigma_{\mathcal{K}} = \Theta \left( \!\! \begin{array}{cccc} \Sigma^{\circ} & 0 & \cdots & 0 \\ 0 & \Sigma^{\circ} & \cdots & 0 \\ \vdots & \vdots & \ddots & \vdots \\ 0 & 0 & \cdots & \Sigma^{\circ} \end{array} \!\! \right) \Theta^{\top}. \end{equation}
	It follows that
	\[ 1 + \chi_{\mathcal{G}(\mathcal{K})}^2(\mu^{*}, \bar{\mu}) = \big( \nu_{\mathcal{K}}^{*} \big)^{\top} \Sigma_{\mathcal{K}}^{-1} \nu_{\mathcal{K}}^{*} = \frac{s}{2} \big((\Sigma^{\circ})^{-1}\big)_{1,1} = \frac{s\Sigma_{22}^{\circ}}{2 {\rm det}(\Sigma^{\circ})}. \]
	To this end, we have
	\[ \rho = \frac{1}{80} \sqrt{1 + \chi_{\mathcal{G}(\mathcal{K})}^2(\mu^{*}, \bar{\mu})} \frac{1}{\sqrt{N(1-\gamma)^3}}. \]
	This implies there always exists an DMDP instance $M \in \mathcal{M}_{\phi,s}(\mathcal{X}, \mathcal{A}, \gamma)$ with feature $\phi \in (\mathbb{R}^d)^{\mathcal{X} \times \mathcal{A}}$ satisfying $\|\phi(x,a)\|_{\infty} \leq 1$ for all $(x,a) \in \mathcal{X} \times \mathcal{A}$,
	then 
	\[ \mathbb{P}_{M} \Bigg( v_{\xi_0}^{*} - v_{\xi_0}^{\widehat{\pi}(\mathcal{D})} \gtrsim \frac{1}{(1-\gamma)^{\frac{3}{2}}} \sqrt{1+\chi_{\mathcal{G}(\mathcal{K})}^2(\mu^{*},\bar{\mu})} \sqrt{\frac{1}{N}} \Bigg) \geq \frac{1}{6}, \]
	where $\mu^{*}$ is the discounted state-action occupancy measure of $\pi^*$.


	\subsubsection{Restricted minimum eigenvalue}\label{sec:worse_PO_lower}

	The uncentered covariance matrix $\Sigma \in \mathbb{R}^{d \times d}$ is given by
	\[ \begin{aligned} 
	\Sigma := & \mathbb{E}_{i}\Bigg[ \frac{1}{L}\sum_{h=0}^{L-1} \phi(x_h^{(k)}, a_h^{(k)}) \phi(x_h^{(k)}, a_h^{(k)})^{\top} \, \Bigg| \, x_0^{(k)} \sim \bar{\xi}_0, \bar{\pi} \Bigg] \in \mathbb{R}^{s \times s}.
	\end{aligned} \]
	In the following, we specify the choice of $\phi_{\mathcal{K}^c}(\overline{x}, \bar{a}_{i,k})$ and $\phi_{\mathcal{K}^c}(\underline{x}, \bar{a}_{i,k})$ and show that if
	\begin{equation} \label{cond1} \bar{\xi}(\overline{x}) \varsigma_1^2 + \bar{\xi}(\underline{x}) (1 - \varsigma_2)^2 \geq \bar{\xi}(\overline{x}) (1-\varsigma_1)^2 + \bar{\xi}(\underline{x}) \varsigma_2^2, \end{equation}
	then
	\begin{equation} \label{lb_Cmin} \frac{\Sigma_{22}^{\circ}}{{\rm det}(\Sigma^{\circ})} \geq \frac{1}{2 C_{\min}(\Sigma, s) }. \end{equation}
	
	Under condition \eqref{cond1}, it holds that $\Sigma_{22}^{\circ} \geq \Sigma_{11}^{\circ}$, therefore, $\frac{\Sigma_{22}^{\circ}}{{\rm det}(\Sigma^{\circ})} \geq \frac{{\rm Tr}(\Sigma^{\circ})}{2 {\rm det}(\Sigma^{\circ})}$.
	In addition, for the $2$-by-$2$ matrix $\Sigma^{\circ}$, we have $\lambda_{\min}(\Sigma^{\circ}) + \lambda_{\max}(\Sigma^{\circ}) = {\rm Tr}(\Sigma^{\circ})$ and $\lambda_{\min}(\Sigma^{\circ}) \lambda_{\max}(\Sigma^{\circ}) = {\rm det}(\Sigma^{\circ})$. It follows that 
	$$ \frac{\Sigma_{22}^{\circ}}{{\rm det}(\Sigma^{\circ})} \geq \frac{{\rm Tr}(\Sigma^{\circ})}{2{\rm det}(\Sigma^{\circ})} \geq \frac{\lambda_{\max}(\Sigma^{\circ})}{2\lambda_{\max}(\Sigma^{\circ})\lambda_{\min}(\Sigma^{\circ})} = \frac{1}{2 \lambda_{\min}(\Sigma^{\circ})}.$$
	We next relate $\lambda_{\min}(\Sigma^{\circ})$ to $C_{\min}(\Sigma, s)$.
	
	Let $\bar{\Theta} \in \mathbb{R}^{(d-s) \times (d-s)}$ be an orthogonal matrix given by Lemma \ref{lemma:DCT}. We take
	\[ \phi_{\mathcal{K}^c}(\overline{x}, \bar{a}_{i,k}) = \phi_{\mathcal{K}^c}(\underline{x}, \bar{a}_{i,k}) := {\rm sign}(k) \sqrt{\frac{d-s}{2}} \cdot {\rm col}_k(\bar{\Theta}), \] for $i = 1,2,\ldots, \frac{s}{2}$, $k = \pm 1, \pm 2, \ldots, \pm (d-s)$.
	It holds that $\|\phi_{\mathcal{K}^c}(\overline{x}, \bar{a}_{i,k})\|_{\infty} \leq 1$ and $\|\phi_{\mathcal{K}^c}(\underline{x}, \bar{a}_{i,k})\|_{\infty} \leq 1$.
	For notational simplicity, let $\mathcal{K} = [s]$.
	Under our proposed behavior policy $\bar{\pi}(\bar{a}_{i,k} \, | \, \overline{x}) = \bar{\pi}(\bar{a}_{i,k} \, | \, \underline{x}) = \frac{1}{s(d-s)}$, we have
	\begin{equation} \label{Sigma} \Sigma = \left( \begin{array}{cc} \Sigma_{\mathcal{K}} & 0 \\ 0 & \frac{1}{2} I_{d-s} \end{array} \right). \end{equation}
	
	By \eqref{SigmaK}, $\lambda_{\min}(\Sigma_{\mathcal{K}}) = \lambda_{\min}(\Sigma^{\circ})$. We also note that ${\rm Tr}(\Sigma^{\circ}) = \bar{\xi}(\overline{x}) \| (1-\varsigma_1, \varsigma_1) \|_2^2 + \bar{\xi}(\underline{x}) \| (\varsigma_2, 1-\varsigma_2) \|_2^2 \leq 1$, and therefore
	\[ \lambda_{\min}(\Sigma^{\circ}) \leq \frac{{\rm Tr}(\Sigma^{\circ})}{2} \leq \frac{1}{2}. \]
	It follows that $\lambda_{\min}(\Sigma) = \lambda_{\min}(\Sigma^{\circ})$, which further implies	$C_{\min}(\Sigma, s) \geq \lambda_{\min}(\Sigma) = \lambda_{\min}(\Sigma^{\circ})$.
	On the other hand, the eigenvector of $\Sigma$ corresponding to $\lambda_{\min}(\Sigma^{\circ})$ has support set $\mathcal{K}$ and is $s$-sparse. Hence, $\lambda_{\min}(\Sigma^{\circ}) \geq C_{\min}(\Sigma,s)$. In this way, we have proved $C_{\min}(\Sigma, s) = \lambda_{\min}(\Sigma^{\circ})$ for $\Sigma$ defined in \eqref{Sigma}.
	
	In the special case where $\varsigma_1 = \varsigma_2 = \frac{1-\gamma}{2\gamma}$ and $\bar{\xi}(\overline{x}) = \bar{\xi}(\underline{x}) = \frac{1}{2}$, condition \eqref{cond1} holds. Plugging \eqref{lb_Cmin} into \eqref{rho}, we finish our proof of Theorem \ref{thm:lowerbound}.


\section{Proofs of auxiliary results}\label{sec:auxiliary}
\subsection{Proof of Lemma \ref{lemma:RE_condition}}\label{sec:proof_RE_condition}
We prove if the population covariance matrix 
satisfies the restricted eigenvalue condition, the empirical covariance matrix satisfies it as well with high probability. Recall that
\begin{equation*}
    \begin{split}
        \hat{\Sigma} = \frac{1}{K}\sum_{k=1}^K\frac{1}{L}\sum_{h=0}^{L-1}\phi(x_h^{(k)},a_h^{(k)})\phi(x_h^{(k)},a_h^{(k)})^{\top},
    \end{split}
\end{equation*}
and
  $$  
 \begin{matrix}
\hat{\Psi}=\begin{pmatrix}
\hat{\Sigma} & \ldots& 0 \\ 
\vdots&\ddots & \vdots\\
0&\ldots&\hat{\Sigma}.
\end{pmatrix}.
\end{matrix}
$$
For any $i,j\in[d]$, define
\begin{equation*}
    v_{ij}^{(k)} = \frac{1}{L}\sum_{h=0}^{L-1}\phi_i(x_h^{(k)}, a_h^{(k)})\phi_j(x_h^{(k)}, a_h^{(k)})-\Sigma_{ij}.
\end{equation*}
It is easy to verify $\mathbb E[v_{ij}^{(k)}] = 0$ and $|v_{ij}^{(k)}|\leq 1$ since we assume $\|\phi(x,a)\|_{\infty}\leq 1$. Note that from the data collection process Assumption \ref{assm:data_colle}, samples between different episodes are independent. This implies $v_{ij}^{(1)}, \ldots,v_{ij}^{(K)}$ are independent. By standard Hoeffding's inequality (Proposition 5.10 in \cite{vershynin2010introduction}), we have 
\begin{equation*}
    \mathbb P\Big(\Big|\sum_{k=1}^Kv_{ij}^{(k)}\Big|\geq \delta\Big)\leq 3\exp\Big(-\frac{C_0\delta^2}{K}\Big),
\end{equation*}
for some absolute constant $C_0>0$. Applying the union bound over $i,j\in[d]$, we have
\begin{equation*}
\begin{split}
    &\mathbb P\Big(\max_{i,j}\Big|\sum_{k=1}^Kv_{ij}^{(k)}\Big|\geq \delta\Big)\leq 3d^2\exp\Big(-\frac{C_0\delta^2}{K}\Big)\\
    &\Rightarrow \mathbb P\Big(\big\|\hat{\Sigma} -\Sigma\big\|_{\infty}\geq \delta\Big)\leq 3d^2\exp\Big(-\frac{C_0\delta^2}{K}\Big).
\end{split}
\end{equation*}
Since the blocks of $\Psi$ are the same, the following holds holds with probability $1-\delta$.
\begin{equation*}
    \big\|\hat{\Psi} - \Psi\big\|_{\infty}\leq \sqrt{\frac{\log (3d^2/\delta)}{K}}.
\end{equation*}
Therefore, when the number of episodes $K\geq 32^2\log (3d^2/\delta)s^2/\tilde{C}_{\min}(\Psi, s)^2$, the following holds with probability at least $1-\delta$,
\begin{equation*}
     \big\|\hat{\Psi} - \Psi\big\|_{\infty}\leq \frac{\tilde{C}_{\min}(\Psi, s)}{32s}.
\end{equation*}
Next lemma shows that if the restricted eigenvalue condition holds for one positive semi-definite block diagonal matrix $\Sigma_0$, then it holds with high probability for another positive semi-definite block diagonal matrix $\Sigma_1$ as long as $\Sigma_0$ and $\Sigma_1$ are close enough in terms of entry-wise max norm.
\begin{lemma}[Corollary 6.8 in \citep{buhlmann2011statistics}]\label{lemma:eigen_concentration}
Let $\Sigma_0$ and $\Sigma_1$ be 
 two positive semi-definite block diagonal matrices. 
 Suppose that the restricted eigenvalue of $\Sigma_0$ satisfies $\tilde{C}_{\min}(\Sigma_0, s)>0$ and $\|\Sigma_1-\Sigma_0\|_{\infty}\leq \tilde{C}_{\min}(\Sigma_0, s)/(32s)$. 
 Then the restricted eigenvalue of $\Sigma_1$  satisfies $\tilde{C}_{\min}(\Sigma_1, s)>\tilde{C}_{\min}(\Sigma_0, s)/2$.
\end{lemma}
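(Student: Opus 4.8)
The plan is to prove the claim directly from the variational definition of $\tilde{C}_{\min}$, exploiting the block-diagonal structure so as not to lose any factor polynomial in $d$. I would fix an index set $\cS\subseteq[d]$ with $|\cS|\le s$ and a vector $\bbeta\in\RR^{d^2}$ in the associated cone, i.e.\ $\|\bbeta_{\cS^c}\|_{2,1}\le 3\|\bbeta_{\cS}\|_{2,1}$; it then suffices to show $\langle\bbeta,\Sigma_1\bbeta\rangle\ge\tfrac12\tilde{C}_{\min}(\Sigma_0,s)\|\bbeta_{\cS}\|_2^2$, because taking the infimum over all such $\cS$ and $\bbeta$ yields $\tilde{C}_{\min}(\Sigma_1,s)\ge\tfrac12\tilde{C}_{\min}(\Sigma_0,s)$. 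Writing $\Delta:=\Sigma_1-\Sigma_0$ and splitting $\langle\bbeta,\Sigma_1\bbeta\rangle=\langle\bbeta,\Sigma_0\bbeta\rangle+\langle\bbeta,\Delta\bbeta\rangle$, the first term is at least $\tilde{C}_{\min}(\Sigma_0,s)\|\bbeta_{\cS}\|_2^2$ since $\bbeta$ lies in the cone for $\cS$ with $|\cS|\le s$ and hence its Rayleigh-type ratio is one of the quantities in the minimum defining $\tilde{C}_{\min}(\Sigma_0,s)$. Everything thus reduces to the bound $|\langle\bbeta,\Delta\bbeta\rangle|\le\tfrac12\tilde{C}_{\min}(\Sigma_0,s)\|\bbeta_{\cS}\|_2^2$.

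The crux is to control the perturbation term using block-diagonality rather than a flat $\ell_1$ estimate. I would partition $\bbeta=(b_1^\top,\dots,b_d^\top)^\top$ into the $d$ length-$d$ blocks $b_i\in\RR^d$ conformal to the block-diagonal partition, so that $\langle\bbeta,\Delta\bbeta\rangle=\sum_{i=1}^d b_i^\top\Delta^{(i)}b_i$, where $\Delta^{(i)}$ is the $i$-th diagonal block of $\Delta$. For each $i$ the entrywise estimate $|b_i^\top\Delta^{(i)}b_i|\le\|\Delta^{(i)}\|_\infty\|b_i\|_1^2\le\|\Delta\|_\infty\|b_i\|_1^2$ holds. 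The key observation is that group $j$ of $\bbeta$, namely $\bbeta^j=(b_1[j],\dots,b_d[j])^\top$, collects the $j$-th coordinate of every block, so that by Cauchy--Schwarz applied across blocks,
\[
\sum_{i=1}^d\|b_i\|_1^2=\sum_{i=1}^d\sum_{j,j'}|b_i[j]|\,|b_i[j']|=\sum_{j,j'}\sum_{i}|b_i[j]|\,|b_i[j']|\le\sum_{j,j'}\|\bbeta^j\|_2\|\bbeta^{j'}\|_2=\|\bbeta\|_{2,1}^2.
\]
This gives $|\langle\bbeta,\Delta\bbeta\rangle|\le\|\Delta\|_\infty\|\bbeta\|_{2,1}^2$ with no dimension-dependent prefactor.

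To finish I would convert $\|\bbeta\|_{2,1}$ back to $\|\bbeta_{\cS}\|_2$ through the cone condition: $\|\bbeta\|_{2,1}=\|\bbeta_{\cS}\|_{2,1}+\|\bbeta_{\cS^c}\|_{2,1}\le 4\|\bbeta_{\cS}\|_{2,1}\le 4\sqrt{|\cS|}\,\|\bbeta_{\cS}\|_2\le 4\sqrt{s}\,\|\bbeta_{\cS}\|_2$, hence $\|\bbeta\|_{2,1}^2\le 16s\|\bbeta_{\cS}\|_2^2$. Combining with the hypothesis $\|\Delta\|_\infty=\|\Sigma_1-\Sigma_0\|_\infty\le\tilde{C}_{\min}(\Sigma_0,s)/(32s)$ yields $|\langle\bbeta,\Delta\bbeta\rangle|\le 16s\cdot\tfrac{\tilde{C}_{\min}(\Sigma_0,s)}{32s}\|\bbeta_{\cS}\|_2^2=\tfrac12\tilde{C}_{\min}(\Sigma_0,s)\|\bbeta_{\cS}\|_2^2$, which is exactly what was needed; this produces $\tilde{C}_{\min}(\Sigma_1,s)\ge\tfrac12\tilde{C}_{\min}(\Sigma_0,s)$, and the strict version stated follows by the same argument with any strict form of the hypothesis.

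The main obstacle is precisely the middle step. The naive route $|\langle\bbeta,\Delta\bbeta\rangle|\le\|\Delta\|_\infty\|\bbeta\|_1^2$ together with $\|\bbeta\|_1\le\sqrt{d}\,\|\bbeta\|_{2,1}$ would force the much stronger requirement $\|\Delta\|_\infty\lesssim\tilde{C}_{\min}(\Sigma_0,s)/(ds)$, and would degrade the sample-size threshold accordingly. Obtaining the sharp constant $32s$ (and hence the threshold $N\gtrsim Ls^2\log(d/\delta)/\tilde{C}_{\min}(\Psi,s)^2$ used in Lemma~\ref{lemma:RE_condition}) hinges entirely on exploiting block-diagonality via the Cauchy--Schwarz identity $\sum_i\|b_i\|_1^2\le\|\bbeta\|_{2,1}^2$ above, so the care is in that bound rather than in any concentration estimate.
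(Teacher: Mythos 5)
Your proof is correct, and it is worth noting that the paper itself gives no proof of this lemma at all: it is invoked as Corollary~6.8 of \citep{buhlmann2011statistics}, and that cited result is stated for the plain $\ell_1$ restricted eigenvalue of a single $d\times d$ covariance matrix, not for the $\ell_{2,1}$/group version over $d^2\times d^2$ block-diagonal matrices that the paper actually needs in Lemma~\ref{lemma:RE_condition}. Your argument supplies exactly this missing adaptation. The skeleton is the same as the textbook proof (write $\langle\bbeta,\Sigma_1\bbeta\rangle=\langle\bbeta,\Sigma_0\bbeta\rangle+\langle\bbeta,\Delta\bbeta\rangle$, lower-bound the first term by the definition of $\tilde{C}_{\min}(\Sigma_0,s)$, and kill the perturbation using the cone condition), but the step that does not follow from the citation is your inequality $\sum_{i=1}^d\|b_i\|_1^2\leq\|\bbeta\|_{2,1}^2$, proved by Cauchy--Schwarz across blocks; it exploits both the block-diagonality of $\Delta$ and the fact that each group $\bbeta^j$ picks one coordinate from every block. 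As you observe, the naive route through $|\langle\bbeta,\Delta\bbeta\rangle|\leq\|\Delta\|_\infty\|\bbeta\|_1^2$ with $\|\bbeta\|_1\leq\sqrt{d}\,\|\bbeta\|_{2,1}$ would require $\|\Delta\|_\infty\lesssim \tilde{C}_{\min}(\Sigma_0,s)/(ds)$, inflating the sample-size requirement in Lemma~\ref{lemma:RE_condition} by a factor of $d$ and defeating the purpose of the sparse analysis; so your Cauchy--Schwarz identity is the genuine content here, and it matches the constant $32s$ in the hypothesis exactly ($16s\cdot\tfrac{1}{32s}=\tfrac12$). Two cosmetic points: your argument delivers $\tilde{C}_{\min}(\Sigma_1,s)\geq\tilde{C}_{\min}(\Sigma_0,s)/2$ rather than the strict inequality as stated (which indeed cannot follow from a non-strict hypothesis, and is equally an issue for the paper's own statement), and one should say explicitly that $\Sigma_0$ and $\Sigma_1$ share the same block partition so that $\Delta$ is block diagonal; both are trivia, not gaps.
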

Applying Lemma \ref{lemma:eigen_concentration} with $\hat{\Psi}$ and $\Psi$, we have the restricted eigenvalue of $\hat{\Phi}$ satisfies $\tilde{C}_{\min}(\hat{\Psi}, s)>\tilde{C}_{\min}(\Psi, s)/2$ with probability at least $1-\delta$, as long as  the sample size $N\geq 32^2 L\log (3d^2/\delta)s^2/\tilde{C}_{\min}(\Psi, s)^2$. This ends the proof.  \hfill $\blacksquare$\\

\subsection{Proof of Lemma \ref{lemma:lasso_l1_bound}}\label{sec:proof_lasso_l1}
We prove the $\ell_1$-norm bound of estimating $\bar{w}_t$ using a fresh fold of batch data. We overload the notation $\hat{\Sigma}$ to denote
\begin{equation*}
    \hat{\Sigma} =\frac{1}{RL} \sum_{i=1}^{RL}\phi(x_i, a_i)\phi(x_i, a_i)^{\top} = \frac{1}{R}\sum_{r=1}^R \frac{1}{L}\sum_{h=1}^L \phi(x_h^{(r)}, a_h^{(r)})\phi(x_h^{(r)}, a_h^{(r)})^{\top} .
\end{equation*}
Similar to the proof of Lemma \ref{lemma:RE_condition} in Appendix \ref{sec:proof_RE_condition}, we can have with probability at least $1-\delta$,
\begin{equation*}
    \big\|\hat{\Sigma} -\Sigma\big\|_{\infty}\leq \sqrt{\frac{C_1}{R}\log\Big(\frac{3d^2}{\delta}\Big)},
\end{equation*}
 where $C_1$ is an absolute constant. When $R\geq C_132^2\log(3d^2/\delta)s^2/C_{\min}(\Sigma, s)$, we have 
 \begin{equation*}
      \big\|\hat{\Sigma} -\Sigma\big\|_{\infty}\leq \frac{C_{\min}(\Sigma, s)}{32s}.
 \end{equation*}
Applying Lemma \ref{lemma:eigen_concentration}, we have  $C_{\min}(\hat{\Sigma}, s)>C_{\min}(\Sigma, s)/2$ with probability at least $1-\delta$. Note that $\{\varepsilon_i\phi_j(x_i, a_i)\}_{i=1}^{RL}$ is a martingale difference sequence and $|\varepsilon_i\phi_j(x_i, a_i)|\leq 1/(1-\gamma)$.  Similar to the proof of Eq.~\eqref{eqn:empirical_process} by Azuma-Hoeffding inequality,
\begin{equation*}
    \mathbb P\Big(\max_{j\in[d]}\Big|\frac{1}{RL}\sum_{i=1}^{RL}\varepsilon_i\phi_{j}(x_i, a_i)\Big|\leq \frac{1}{1-\gamma}\sqrt{\frac{\log (2d/\delta)}{RL}}\Big)\geq 1-\delta.
\end{equation*}
Denote event $\cE$ as 
\begin{equation*}
    \cE = \Big\{\max_{j\in[d]}\Big|\frac{1}{RL}\sum_{i=1}^{RL}\varepsilon_i\phi_{j}(x_i, a_i)\Big|\leq \lambda_1\Big\}.
\end{equation*}
Then $\mathbb P(\cE)\geq 1-\delta$. Under event $\cE$, applying (B.31) in \cite{bickel2009simultaneous}, we have
\begin{equation*}
    \big\|\hat{w}_t-\bar{w}_t\big\|_1\leq \frac{16\sqrt{2}s\lambda_1}{C_{\min}(\Sigma, s)},
\end{equation*}
holds with probability at least $1-2\delta$. This ends the proof.  \hfill $\blacksquare$\\

\subsubsection{Proof of Lemma \ref{lemma:like}} \label{appendix:proof:lemma:like}

\begin{proof}[Proof of Lemma \ref{lemma:like}]
	It is easy to see that
	\[ \ln \frac{\mathcal{L}_j(\mathcal{D})}{\mathcal{L}_i(\mathcal{D})} = \sum_{k=1}^K \sum_{l=0}^{L-1} \ln \frac{p_j(x_{l+1}^{(k)} \, | \, x_l^{(k)}, a_l^{(k)})}{p_i(x_{l+1}^{(k)} \, | \, x_l^{(k)}, a_l^{(k)})} = \sum_{k=1}^K \sum_{l=0}^{L-1} \ln (1 - \Lambda_{l}^{(k)}), \]
	where 
	\[ \Lambda_l^{(k)} := \frac{p_i(x_{l+1}^{(k)} \, | \, x_l^{(k)}, a_l^{(k)}) - p_j(s_{l+1}^{(k)} \, | \, s_l^{(k)}, a_l^{(k)})}{p_i(s_{l+1}^{(k)} \, | \, s_l^{(k)}, a_l^{(k)})} = \frac{\phi(s_l^{(k)}, a_l^{(k)})^{\top} \big( \psi_i(s_{l+1}^{(k)}) - \psi_j(s_{l+1}^{(k)})\big)}{p_i(s_{l+1}^{(k)} \, | \, s_l^{(k)}, a_l^{(k)})}. \]
	If we take $\delta_1 \vee \delta_2 \leq \frac{p_{\min}}{2}$, then $|\Lambda_l^{(k)}| \leq \frac{1}{2}$ and
	\begin{equation} \label{E1+E2} \ln \frac{\mathcal{L}_j(\mathcal{D})}{\mathcal{L}_i(\mathcal{D})} \geq - \underbrace{\sum_{k=1}^K \sum_{l=0}^{L-1} \Lambda_l^{(k)}}_{E_1} - \underbrace{\sum_{k=1}^K \sum_{l=0}^{L-1} \big(\Lambda_l^{(k)}\big)^2}_{E_2}. \end{equation}
	
	Since $\mathbb{E}_i [\Lambda_l^{(k)} \, | \, s_l^{(k)}, a_l^{(k)}] = 0$, we apply Freedman's inequality to analyze $E_1$.
	The conditional variances satisfy
	\[ \begin{aligned} & \mathbb{E}_i \big[ \big(\Lambda_l^{(k)}\big)^2 \, \big| \, s_l^{(k)}, a_l^{(k)} \big] \\ = & p_i( \overline{x} \, | \, s_l^{(k)}, a_l^{(k)}) \bigg( \frac{\phi(s_l^{(k)}, a_l^{(k)})^{\top} \big( \psi_i(\overline{x}) - \psi_j(\overline{x}) \big) }{p_i(\overline{x} \, | \, s_l^{(k)}, a_l^{(k)})} \bigg)^2 \\ & + p_i( \underline{x} \, | \, s_l^{(k)}, a_l^{(k)}) \bigg( \frac{\phi(s_l^{(k)}, a_l^{(k)})^{\top} \big( \psi_i(\underline{x}) - \psi_j(\underline{x}) \big) }{p_i(\underline{x} \, | \, s_l^{(k)}, a_l^{(k)})} \bigg)^2 \\ = & \frac{\Big( \phi(s_l^{(k)}, a_l^{(k)})^{\top} \big( \psi_i(\overline{x}) - \psi_j(\overline{x}) \big) \Big)^2 }{p_i(\overline{x} \, | \, s_l^{(k)}, a_l^{(k)})p_i(\underline{x} \, | \, s_l^{(k)}, a_l^{(k)})} \leq \frac{1}{p_{\min}(1-p_{\min})} \Big( \phi(s_l^{(k)}, a_l^{(k)})^{\top} \big( \psi_i(\overline{x}) - \psi_j(\overline{x}) \big) \Big)^2. \end{aligned} \]
	Denote $\Xi_{k} := \frac{1}{L} \sum_{l=0}^{L-1} \big( \phi(s_l^{(k)}, a_l^{(k)})^{\top} ( \psi_i(\overline{x}) - \psi_j(\overline{x}) ) \big)^2$.
	Note that
	\[ \begin{array}{ccccccccccccc} & & & & \!\!\!\text{\footnotesize $2i-1$}\!\!\! & \!\!\!\text{\footnotesize $2i$}\!\!\! & & \!\!\!\text{\footnotesize $2j-1$}\!\!\! & \!\!\!\text{\footnotesize $2j$}\!\!\! & & & & \\ \psi_i(\overline{x}) - \psi_j(\overline{x}) = \sqrt{\frac{2}{s}} \cdot \Theta \big( \!\!\! & 0 & 0 & \cdots & \delta_1 & - \delta_2 & \cdots & - \delta_1 & \delta_2 & \cdots & 0 & 0 & \!\!\! \big)^{\top} \in \mathbb{R}^{s}, \end{array} \]
	therefore,
	\[ \mathbb{E} [ \Xi_k ] = \frac{4}{s} \big( \!\! \begin{array}{cc} \delta_1 & - \delta_2 \end{array} \!\! \big) \Sigma^{\circ} \left( \!\! \begin{array}{c} \delta_1 \\ - \delta_2 \end{array} \!\! \right) \qquad
	\text{and} \qquad
	|\Xi_k| \leq (\delta_1 \vee \delta_2)^2. \]
	By Bernstein inequality and the independence of trajectories ${\tau}_1, {\tau}_2, \ldots, {\tau}_K$, we have with $\mathbb{P}_i$-probability at least $\frac{5}{6}$,
	\begin{equation} \label{CondVar} 
	\frac{1}{K} \sum_{k=1}^K \Xi_k \leq \Bigg( \sqrt{\mathbb{E}[\Xi_k]} + (\delta_1 \vee \delta_2) \sqrt{\frac{2 \ln 6}{3 K}} \Bigg)^2 =: \sigma^2. \end{equation}
	Since $|\Lambda_l^{(k)}| \leq p_{\min}^{-1}(\delta_1 \vee \delta_2)$, by Freedman's inequality, with $\mathbb{P}_i$-probability at least $\frac{5}{6}$,
	\begin{equation} \label{Freedman} \frac{1}{N} \sum_{k=1}^K \sum_{l=0}^{L-1} \Lambda_l^{(k)} \leq \frac{\sigma}{\sqrt{p_{\min}(1-p_{\min})}} \sqrt{\frac{2 \ln 6}{N}} + p_{\min}^{-1}(\delta_1 \vee \delta_2)\frac{2 \ln 6}{3 N} \qquad \text{and} \qquad \text{\eqref{CondVar} holds}. \end{equation}
	Combining \eqref{CondVar} and \eqref{Freedman}, we use union bound and derive that with $\mathbb{P}_i$-probabliity at least $\frac{2}{3}$,
	\begin{equation} \label{E1} \sum_{k=1}^K \sum_{l=0}^{L-1} \Lambda_l^{(k)} \leq 4\sqrt{\ln 6} \sqrt{\big( \!\! \begin{array}{cc} \delta_1 & - \delta_2 \end{array} \!\! \big) \Sigma^{\circ} \left( \!\! \begin{array}{c} \delta_1 \\ - \delta_2 \end{array} \!\! \right)}\sqrt{\frac{N}{p_{\min} s}} +  \frac{2 \ln 6}{3} (\delta_1 \vee \delta_2) \bigg( \sqrt{\frac{6L}{p_{\min}}} + \frac{1}{p_{\min}} \bigg). \end{equation}
	
	As for $E_2$, since
	\[ \mathbb{E}_i\Bigg[ \frac{1}{L} \sum_{l=0}^{L-1} \big(\Lambda_l^{(k)}\big)^2 \Bigg] \leq \frac{4}{sp_{\min}(1-p_{\min})} \big( \!\! \begin{array}{cc} \delta_1 & - \delta_2 \end{array} \!\! \big) \Sigma^{\circ} \left( \!\! \begin{array}{c} \delta_1 \\ - \delta_2 \end{array} \!\! \right), ~~ \Bigg| \frac{1}{L} \sum_{l=0}^{L-1} \big(\Lambda_l^{(k)}\big)^2 \Bigg| \leq \frac{(\delta_1 \vee \delta_2)^2}{p_{\min}^2}, \]
	by Bernstein's inequality, with $\mathbb{P}_i$-probability at least $\frac{5}{6}$,
	\begin{equation} \label{E2} \sum_{k=1}^K \sum_{l=0}^{L-1} \big(\Lambda_l^{(k)}\big)^2 \leq \Bigg( 2\sqrt{2} \sqrt{ \big( \!\! \begin{array}{cc} \delta_1 & - \delta_2 \end{array} \!\! \big) \Sigma^{\circ} \left( \!\! \begin{array}{c} \delta_1 \\ - \delta_2 \end{array} \!\! \right)} \sqrt{\frac{N}{s p_{\min}}} + \frac{\delta_1 \vee \delta_2}{p_{\min}} \sqrt{\frac{2 L \ln 6}{3}} \Bigg)^2. \end{equation}
	
	Plugging \eqref{E1} and \eqref{E2} into \eqref{E1+E2} and applying condition \eqref{<}, we obtain \eqref{like}.
	
\end{proof}

\subsubsection{Proof of Lemma \ref{lemma:v-v}} \label{appendix:proof:lemma:v-v}

\begin{proof}[Proof of Lemma \ref{lemma:v-v}]
	
	We consider another policy $\pi_i'$ such that $\pi_i'(\overline{x}) = a_j$ for some $a_j \neq a_i$ and $\pi_i'(\underline{x}) = \pi_i^{*}(\underline{x})$. It holds that
	\[ v_{M_i, \xi_0}^{\pi_i'} = \sup \big\{ v_{M_i, \xi_0}^{\pi} \, \big| \, \pi(\overline{x}) \neq a_i \big\}. \]
	For any $\pi$, denote its corresponding transition matrix by $P_i^{\pi} = \left( \!\! \begin{array}{cc} p_i^{\pi}(\overline{x} \, | \, \overline{x}) & p_i^{\pi}(\underline{x} \, | \, \overline{x}) \\ p_i^{\pi}(\overline{x} \, | \, \overline{x}) & p_i^{\pi}(\underline{x} \, | \, \underline{x}) \end{array} \!\! \right) \in \mathbb{R}^{2 \times 2}$. 
	Define $v_{M_i}^{*} := \left( \!\! \begin{array}{c} v_{M_i}^{*}(\overline{x}) \\ v_{M_i}^{*}(\underline{x}) \end{array} \!\! \right) \in \mathbb{R}^2$. We have the following decomposition,
	\begin{equation} \label{v-v} v_{M_i, \xi_0}^{*} - v_{M_i, \xi_0}^{\pi_i'} = \sum_{t=0}^{\infty} \gamma^{t+1} \xi_0^{\top} \big( P_i^{\pi_i'} \big)^t \big(P_i^{\pi_i^{*}} - P_i^{\pi_i'}\big) v_{M_i}^{*} = \gamma \xi_0^{\top} \big( I - \gamma P_i^{\pi_i'} \big)^{-1} \big(P_i^{\pi_i^{*}} - P_i^{\pi_i'}\big) v_{M_i}^{*}. \end{equation}
	
	Under model $M_i$, when $\delta_2 \leq \varsigma_2$, $\pi_i^{*}$ and $\pi_i'$ satisfy
	\[ p_i^{\pi_i^{*}}(\overline{x} \, | \, \overline{x}) = 1, \quad p_i^{\pi_i'}(\overline{x} \, | \, \overline{x}) = 1 - \delta_1, \qquad p_i^{\pi_i^{*}}(\underline{x} \, | \, \overline{x}) = 0, \quad p_i^{\pi_i'}(\underline{x} \, | \, \overline{x}) = \delta_1, \] \[
	p_i^{\pi_i^{*}}(\overline{x} \, | \, \underline{x}) = p_i^{\pi_i'}(\overline{x} \, | \, \underline{x}) \leq 2\varsigma_2, \qquad p_i^{\pi_i^{*}}(\underline{x} \, | \, \underline{x}) = p_i^{\pi_i'}(\underline{x} \, | \, \underline{x}) \geq 1 - 2\varsigma_2, \]
	therefore,
	\[ \big(P_i^{\pi_i^{*}} - P_i^{\pi_i'}\big) v_{M_i}^{*} = \left( \!\! \begin{array}{cc} \delta_1 & -\delta_1 \\ 0 & 0 \end{array} \!\! \right) v_{M_i}^{*} = \left( \!\! \begin{array}{c} \delta_1 \\ 0 \end{array} \!\! \right) \cdot \big( v_{M_i}^{*}(\overline{x}) - v_{M_i}^{*}(\underline{x}) \big). \]
	To this end, we reduce \eqref{v-v} into
	\begin{equation} \label{v-v_1} v_{M_i, \xi_0}^{*} - v_{M_i, \xi_0}^{\pi_i'} = \gamma \xi_0^{\top} \big( I - \gamma P_i^{\pi_i'} \big)^{-1} \left( \!\! \begin{array}{c} \delta_1 \\ 0 \end{array} \!\! \right) \cdot \big( v_{M_i}^{*}(\overline{x}) - v_{M_i}^{*}(\underline{x}) \big). \end{equation}
	
	Note that
	\[ v_{M_i}^{*}(\overline{x}) = (1-\gamma)^{-1} \qquad \text{and} \qquad v_{M_i}^{*}(\underline{x}) = \frac{\gamma}{1 - \gamma} \cdot \frac{p_i^{\pi_i^{*}}(\underline{x} \, | \, \underline{x})}{1 - \gamma\big(1 - p_i^{\pi_i^{*}}(\underline{x} \, | \, \underline{x})\big)}. \]
	Therefore,
	\begin{equation} \label{v-v_3} v_{M_i}^{*}(\overline{x}) - v_{M_i}^{*}(\underline{x}) = \frac{1}{1-\gamma} - \frac{\gamma}{1 - \gamma} \cdot \frac{p_i^{\pi_i^{*}}(\underline{x} \, | \, \underline{x})}{1 - \gamma\big(1 - p_i^{\pi_i^{*}}(\underline{x} \, | \, \underline{x})\big)} = \frac{1}{1 - \gamma p_i^{\pi_i^{*}}(\underline{x} \, | \, \underline{x})} \geq \frac{1}{1 - \gamma (1 - 2\varsigma_2)}. \end{equation}
	In addition, we have
	\[ \xi_0^{\top} \big( I - \gamma P_i^{\pi_i'} \big)^{-1} \left( \!\! \begin{array}{c} \delta_1 \\ 0 \end{array} \!\! \right) = \frac{\delta_1}{1-\gamma} \cdot \frac{1 - \gamma p_i^{\pi_i^{*}}(\underline{x} \, | \, \underline{x})}{1 + \gamma\delta_1 - \gamma p_i^{\pi_i^{*}}(\underline{x} \, | \, \underline{x})} \]
	Under the condition $\delta_1 \leq \frac{1-\gamma}{\gamma}$, we have $\gamma \delta_1 \leq 1 - \gamma \leq 1 - \gamma p_i^{\pi_i^{*}}(\underline{x} \, | \, \underline{x})$, therefore,
	\begin{equation} \label{v-v_4} \xi_0^{\top} \big( I - \gamma P_i^{\pi_i'} \big)^{-1} \left( \!\! \begin{array}{c} \delta_1 \\ 0 \end{array} \!\! \right) \geq \frac{\delta_1}{2(1-\gamma)}. \end{equation}
	Plugging \eqref{v-v_3} and \eqref{v-v_4} into \eqref{v-v_1}, we finish our proof.
\end{proof}

\section{Supporting lemmas}\label{sec:supporting}

\begin{lemma}
Let $Z_1,\ldots, Z_n$ be random, positive-semidefinite adaptively chosen matrices with dimension $d$. Suppose $\lambda_{\max}(Z_i)\leq R^2$ almost surely for all $i$. Let $Z^+ = \sum_{i=1}^nZ_i$ and $W = \sum_{i=1}^n \mathbb E[Z_i|Z_1,\ldots, Z_{i-1}]$. Then for any $\mu$ and any $\alpha\in(0, 1)$,
\begin{equation*}
    \mathbb P\Big(\lambda_{\min}(Z^+)\leq (1-\alpha)\mu \ \text{and} \ \lambda_{\min}(W)\geq \mu\Big)d\Big(\frac{1}{e^{\alpha}(1-\alpha)^{1-\alpha}}\Big)^{\mu/R^2}
\end{equation*}
\end{lemma}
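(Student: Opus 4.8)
This is the matrix Chernoff (lower-tail) bound for an adaptively chosen PSD sequence with predictable mean process $W$, and the plan is to prove it by the matrix Laplace-transform method together with a trace-exponential supermartingale built from Lieb's concavity theorem. First I would normalize by setting $Y_i := Z_i/R^2$, so that $0 \preceq Y_i \preceq I$, and write $\bar{\mu} := \mu/R^2$; one may assume $\mu > 0$, since for $\mu \le 0$ the bound is vacuous ($W \succeq 0$ forces the base $(e^\alpha(1-\alpha)^{1-\alpha})^{-1}<1$ to a nonpositive power to exceed $1$). Letting $\{\mathcal{F}_k\}$ be the natural filtration, the target event becomes $E = \{\lambda_{\min}(\sum_i Y_i) \le (1-\alpha)\bar{\mu}\} \cap \{\lambda_{\min}(W_Y) \ge \bar{\mu}\}$ with $W_Y = \sum_i \mathbb{E}[Y_i \mid \mathcal{F}_{i-1}]$. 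The cumulant control comes from the scalar bound $e^{\theta y} \le 1 + (e^{\theta}-1)y$ for $y \in [0,1]$ (convexity), which via the transfer rule and $I + M \preceq e^{M}$ gives the operator cgf bound $\mathbb{E}[e^{\theta Y_i}\mid\mathcal{F}_{i-1}] \preceq \exp\!\big(g(\theta)\,\mathbb{E}[Y_i\mid\mathcal{F}_{i-1}]\big)$ with $g(\theta) := e^{\theta}-1$.

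\textbf{Supermartingale.} The core step is to show that $U_k := \mathrm{tr}\exp\!\big(\theta\sum_{j\le k}Y_j - g(\theta)W_k\big)$, where $W_k = \sum_{j\le k}\mathbb{E}[Y_j\mid\mathcal{F}_{j-1}]$, satisfies $\mathbb{E}[U_k\mid\mathcal{F}_{k-1}] \le U_{k-1}$ with $U_0 = \mathrm{tr}(I) = d$. Splitting $\theta\sum_{j\le k}Y_j = H + \theta Y_k$ with $H := \theta\sum_{j<k}Y_j - g(\theta)W_k$ being $\mathcal{F}_{k-1}$-measurable, Lieb's theorem (concavity of $A \mapsto \mathrm{tr}\exp(H + \log A)$ on positive-definite $A$) and conditional Jensen give $\mathbb{E}[U_k\mid\mathcal{F}_{k-1}] \le \mathrm{tr}\exp\!\big(H + \log\mathbb{E}[e^{\theta Y_k}\mid\mathcal{F}_{k-1}]\big)$. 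Inserting the operator cgf bound, using operator monotonicity of $\log$ and monotonicity of $\mathrm{tr}\exp(H + \cdot)$, and then $W_k = W_{k-1} + \mathbb{E}[Y_k\mid\mathcal{F}_{k-1}]$, the mean term telescopes to $-g(\theta)W_{k-1}$ and yields $\mathbb{E}[U_k\mid\mathcal{F}_{k-1}] \le U_{k-1}$. Iterating gives $\mathbb{E}[U_n] \le d$.

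\textbf{Joint event and optimization.} Next I would lower-bound $U_n$ on $E$ by taking $\theta < 0$, so $g(\theta) < 0$. Since $U_n \ge \exp\!\big(\lambda_{\max}(\theta\sum_j Y_j - g(\theta)W_n)\big)$ and the two terms need not commute, I would apply Weyl's inequality $\lambda_{\max}(A+B) \ge \lambda_{\max}(A) + \lambda_{\min}(B)$ with $A = \theta\sum_j Y_j$ and $B = -g(\theta)W_n$. On $E$, $\lambda_{\max}(A) = \theta\,\lambda_{\min}(\sum_j Y_j) \ge \theta(1-\alpha)\bar{\mu}$ and $\lambda_{\min}(B) = -g(\theta)\,\lambda_{\min}(W_n) \ge -g(\theta)\bar{\mu}$, so $U_n \ge \exp\!\big(\bar{\mu}(\theta(1-\alpha) - g(\theta))\big)$. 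Markov's inequality then gives
\[ \mathbb{P}(E) \le d\,\exp\!\big(\bar{\mu}\,(g(\theta) - \theta(1-\alpha))\big). \]
Minimizing $e^{\theta}-1-\theta(1-\alpha)$ over $\theta$ yields $\theta^{\ast} = \log(1-\alpha) < 0$, at which the exponent equals $-\alpha - (1-\alpha)\log(1-\alpha)$, giving $\mathbb{P}(E) \le d\,\big[e^{\alpha}(1-\alpha)^{1-\alpha}\big]^{-\mu/R^2}$, as claimed.

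\textbf{Main obstacle.} The routine parts are the scalar-to-operator cgf transfer and the optimization over $\theta$. The genuinely delicate step is the supermartingale construction: verifying that Lieb's concavity theorem applies with the $\mathcal{F}_{k-1}$-measurable ``background'' matrix $H$ and that the conditional-Jensen step is valid in the adaptive setting. A secondary subtlety is the non-commutativity of $\theta\sum_j Y_j$ and $g(\theta)W_n$ when lower-bounding $U_n$ on $E$, which is exactly where Weyl's inequality (rather than naive simultaneous diagonalization) is needed; one must track the signs carefully since both $\theta$ and $g(\theta)$ are negative.
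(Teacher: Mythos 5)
Your proof is correct. The paper itself gives no proof of this lemma: it is stated in the supporting-lemmas appendix as a known, imported result (with a typo in the display, where the relation $\le$ between the probability and the bound $d\big(e^{\alpha}(1-\alpha)^{1-\alpha}\big)^{-\mu/R^2}$ has been dropped), namely the matrix Chernoff lower-tail bound for adapted sequences of positive-semidefinite matrices in the style of Tropp. Your argument is a faithful, self-contained reconstruction of the standard proof of that result: the scalar-to-operator cgf transfer $\mathbb E[e^{\theta Y_k}\mid\mathcal{F}_{k-1}]\preceq \exp\big(g(\theta)\,\mathbb E[Y_k\mid\mathcal{F}_{k-1}]\big)$ with $g(\theta)=e^{\theta}-1$; the trace-exponential supermartingale obtained from Lieb's concavity theorem, where subtracting the predictable compensator $g(\theta)W_k$ inside the exponential is exactly what makes the conditional Jensen step telescope in the adaptive setting; the Weyl-inequality lower bound $\lambda_{\max}(A+B)\ge\lambda_{\max}(A)+\lambda_{\min}(B)$ on the joint event with $\theta<0$; and Markov plus the optimization $\theta^{\ast}=\log(1-\alpha)$, which yields the exponent $-\alpha-(1-\alpha)\log(1-\alpha)$ and hence the stated constant. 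The sign bookkeeping (both $\theta$ and $g(\theta)$ negative, so that each eigenvalue inequality on the event points the right way) checks out, and your remark that the bound is vacuous for $\mu\le 0$ correctly disposes of that case. In short: correct, and essentially the canonical proof of the result that the paper merely cites.
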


\begin{lemma}[Azuma-Hoeffding's inequality]\label{lemma:azuma}
Let $\cF_n = \sigma(x_1,\ldots, x_n)$ be a sequence of $\sigma$-fields known as a filtration. Let $\{(x_n, \cF_n)\}_{n=1}^{\infty}$ be a martingale difference sequence for which there are constants $\{(a_k,b_k)_{k=1}^n\}$ such that $x_k\in[a_k, b_k]$ almost surely for $k=1,\ldots, n$. Then for all $t\geq 0$,
\begin{equation*}
    \mathbb P\Big(\Big|\sum_{k=1}^nx_k\Big|\geq t\Big)\leq 2\exp\Big(-\frac{2t^2}{\sum_{k=1}^n(b_k-a_k)^2}\Big).
\end{equation*}
\end{lemma}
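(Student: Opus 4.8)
The plan is to prove this by the standard exponential-moment (Chernoff) method, adapted to martingale difference sequences. First I would fix $\lambda > 0$ and apply Markov's inequality to the exponentiated partial sum $S_n := \sum_{k=1}^n x_k$:
\[
\mathbb P\Big(\sum_{k=1}^n x_k \geq t\Big) \leq e^{-\lambda t}\, \mathbb E\Big[\exp\Big(\lambda \sum_{k=1}^n x_k\Big)\Big].
\]
The heart of the argument is then to control the moment generating function on the right by peeling off one increment at a time using the tower property of conditional expectation.

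The key technical ingredient is a conditional version of \emph{Hoeffding's lemma}: if $Y$ satisfies $\mathbb E[Y \mid \cG] = 0$ and $Y \in [a,b]$ almost surely for deterministic constants $a,b$, then $\mathbb E[\exp(\lambda Y)\mid \cG] \leq \exp(\lambda^2 (b-a)^2/8)$. I would establish this by using convexity of $u \mapsto e^{\lambda u}$ to bound $e^{\lambda Y}$ above by the chord joining the endpoints $(a,e^{\lambda a})$ and $(b,e^{\lambda b})$, taking conditional expectations so that the linear term vanishes by the zero-mean hypothesis, and then showing that the logarithm of the resulting expression, viewed as a function of $\lambda$, has second derivative bounded by $(b-a)^2/4$; a second-order Taylor expansion then yields the claimed bound. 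This is the main obstacle in the sense that it is where essentially all the analytic work lies, the remaining steps being bookkeeping.

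Armed with this, I would iterate. Since $\{x_k\}$ is a martingale difference sequence adapted to $\{\cF_k\}$, we have $\mathbb E[x_k \mid \cF_{k-1}] = 0$ and $x_k \in [a_k,b_k]$ almost surely, so the conditional Hoeffding lemma gives $\mathbb E[\exp(\lambda x_k)\mid \cF_{k-1}] \leq \exp(\lambda^2 (b_k-a_k)^2/8)$. Conditioning $S_n$ on $\cF_{n-1}$ and using that $S_{n-1}$ is $\cF_{n-1}$-measurable,
\[
\mathbb E[e^{\lambda S_n}] = \mathbb E\big[e^{\lambda S_{n-1}}\, \mathbb E[e^{\lambda x_n}\mid \cF_{n-1}]\big] \leq e^{\lambda^2 (b_n-a_n)^2/8}\, \mathbb E[e^{\lambda S_{n-1}}].
\]
A straightforward induction on $n$ then yields $\mathbb E[e^{\lambda S_n}] \leq \exp\big(\lambda^2 \sum_{k=1}^n (b_k-a_k)^2/8\big)$.

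Finally I would optimize the Chernoff bound over $\lambda$. Substituting the moment generating function estimate gives $\mathbb P(S_n \geq t) \leq \exp\big(-\lambda t + \lambda^2 \sum_{k}(b_k-a_k)^2/8\big)$, and minimizing the exponent over $\lambda>0$ at $\lambda^\star = 4t/\sum_k (b_k-a_k)^2$ produces the upper-tail bound $\exp\big(-2t^2/\sum_k (b_k-a_k)^2\big)$. To obtain the two-sided statement I would repeat the identical argument for the sequence $\{-x_k\}$, which is again a martingale difference sequence with $-x_k \in [-b_k,-a_k]$ and the same range $(-a_k)-(-b_k)=b_k-a_k$, giving the matching lower-tail bound; a union bound over the two tail events introduces the factor of $2$ and completes the proof.
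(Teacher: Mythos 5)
Your proof is correct and is the standard Chernoff-plus-conditional-Hoeffding argument: the conditional Hoeffding lemma, the tower-property induction on the moment generating function, optimization at $\lambda^\star = 4t/\sum_k(b_k-a_k)^2$, and the union bound for the two-sided statement all check out, including the exponent $-2t^2/\sum_k(b_k-a_k)^2$. The paper states this classical lemma as a supporting result without proof, so there is nothing to compare against; your argument is exactly the textbook proof the paper implicitly relies on.
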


\section{Preliminary experiments}\label{sec:exper}
 The left panel in Figure \ref{fig:mountain_car} shows that our Lasso-FQE clearly has smaller estimation error compared with FQE, proving the sparse feature selection is effective in a practical RL example.  The right panel in Figure \ref{fig:mountain_car} demonstrates how the distribution mismatch ($\chi^2$-divergence term) affects OPE error (with sample size fixed).  The results confirm our theorems that the (restricted) chi-square divergence sharply determines the (sparse) OPE error.

\begin{figure}[h]\label{fig:mountain_car}
 \centering
    \centering
    \includegraphics[width=0.35\textwidth]{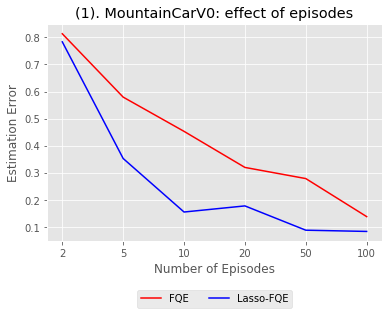}
    \includegraphics[width=0.35\textwidth]{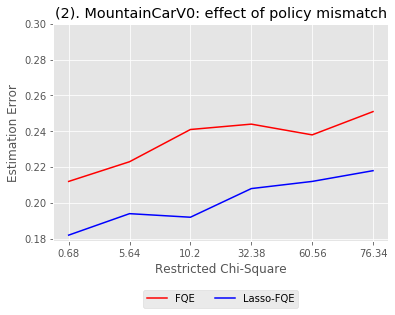}
    \vspace{-0.17in}
\end{figure}

\end{document}